\newcommand{\printfnsymbol}[1]{%
  \textsuperscript{\@fnsymbol{#1}}%
}
\def\text#1{\mbox{\rm #1}}
\def\F{\mathrm}
\newcommand{\ve}{\mathrm{vec}}
\begin{document}

\title{High-dimensional Gaussian graphical models on network-linked data}

\author{\name Tianxi Li\thanks{Authors with equal contribution}  \email tianxili@virginia.edu\\
       \addr Department of Statistics\\
       University of Virginia\\
       Charlottesville, VA 22904, USA
      \AND
\name Cheng Qian\printfnsymbol{1} \email qianc@seu.edu.cn\\
       \addr School of Mathematics\\
       Southeast University\\
       Nanjing, Jiangsu 211189, China
        \AND
\name Elizaveta Levina \email elevina@umich.edu
       \AND 
\name Ji Zhu \email jizhu@umich.edu \\
       \addr Department of Statistics\\
       University of Michigan\\
       Ann Arbor, MI 48109, USA
       }

\editor{Jie Peng}

\maketitle

\begin{abstract}
Graphical models are commonly used to represent conditional
dependence relationships between variables.   There are multiple
methods available for exploring them from high-dimensional data, but
almost all of them rely on the assumption that the observations are
independent and identically distributed.  At the same time,
observations connected by a network are becoming increasingly common,
and tend to violate these assumptions.    Here we
develop a Gaussian graphical model for observations connected by a
network with potentially different mean vectors, varying smoothly over
the network.     We propose an efficient estimation algorithm and
demonstrate its effectiveness on both simulated and real data,
obtaining meaningful and interpretable results on a statistics coauthorship network.      We also prove that our method estimates both the inverse covariance matrix and the corresponding graph structure correctly under the assumption of network ``cohesion'', which refers to the empirically observed phenomenon of network neighbors sharing similar traits.  
\end{abstract}

\begin{keywords}
High-dimensional statistics, Gaussian graphical model, network analysis, network cohesion, statistical learning
\end{keywords}

\section{Introduction}\label{sec:intro}
Network data represent information about relationships (edges) between units (nodes), such as friendships or collaborations, and are often collected together with more ``traditional'' covariates that describe one unit.   In a social network, edges may represent friendships between people (nodes), and traditional covariates could be their demographic characteristics such as gender, race, age, and so on.     Incorporating relational information in statistical modeling tasks focused on ``traditional'' node covariates should improve performance, since it offers additional information, but most traditional multivariate analysis methods are not designed to use such information.   In fact,  most such methods for regression, clustering, density estimation and so on tend to assume the sampled units are homogeneous, typically independent and identically distributed (i.i.d.), which is unlikely to be the case for units connected by a network.   While there is a fair amount of  work on incorporating such information into specific settings \citep{manski1993identification, lee2007identification, yang2011multi,raducanu2012supervised, vural2016out},   work on extending standard statistical methods to network-linked data has only recently started appearing, for example,  \cite{li2016prediction} for regression, \cite{tang2013universally} for classification, and \cite{yang2013community}, \cite{binkiewicz2014covariate}
for clustering.     Our goal in this paper is to develop an analog to the widely used Gaussian graphical models for network-linked data which takes advantage of this additional information to improve performance when possible.

Graphical models are commonly used to represent independence relationships between random variables, with each variable corresponding to a node, and edges representing conditional or marginal dependence between two random variables.    Note that a graphical model is a graph connecting variables, as opposed to the  networks discussed above, which are graphs connecting observations.    Graphical models have been widely studied in statistics and machine learning and have applications in bioinformatics, text mining and causal inference, among others.    The Gaussian graphical model belongs to the family of undirected graphical models, or Markov random fields, and assumes the variables are jointly Gaussian.  Specifically, the conventional Gaussian graphical model for a data matrix $X \in \bR^{n\times p}$ assumes that the rows $X_{i\cdot}$, $i = 1, \dots, n$,  are independently drawn from the same $p$-variate normal distribution $\ncal(\mu, \Sigma)$.
This vastly simplifies analysis, since for the Gaussian distribution all  marginal dependence information is contained in the covariance matrix, and all  conditional independence information in its inverse.  In particular, random variables $j$ and $j'$ are conditionally independent given the rest if and only if the $(j,j')$-th entry of the inverse covariance matrix $\Sigma^{-1}$ (the precision matrix) is zero.      Therefore estimating the graph for a Gaussian graphical model is equivalent to identifying zeros in the precision matrix, and this problem has been well studied, in both the low-dimensional and the high-dimensional settings.    A pioneering paper by \cite{meinshausen2006high} proposed neighborhood selection, which learns edges by regressing each variable on all the others via lasso, and established good asymptotic properties in high dimensions.    Many penalized likelihood methods have  been proposed as well \citep{yuan2007model, banerjee2008model, rothman2008sparse, d2008first, friedman2008sparse}. In particular, the graphical lasso (glasso) algorithm of \cite{friedman2008sparse} and its subsequent improvements  \citep{witten2011new,hsieh2013bigquic}  are widely used to solve the problem efficiently.

The penalized likelihood approach to Gaussian graphical models assumes the observations are i.i.d., a restrictive assumption in many real-world situations.    This assumption was relaxed in \cite{zhou2010time, guo2011joint} and \cite{danaher2014joint}  by allowing the covariance matrix to vary smoothly over time or across groups, while the mean vector remains constant.  A special case of modeling the mean vector on additional covariates associated with each observation has also been studied \citep{rothman2010sparse, yin2011sparse,lee2012simultaneous,  cai2013covariate, lin2016penalized}.    Neither of these relaxations are easy to adapt to network data, and their assumptions are hard to verify in practice.  
%

In this paper, we consider the problem of estimating a graphical model with heterogeneous mean vectors when a network connecting the observations is available. For example, in analyzing word frequencies in research papers, the conditional dependencies between words may represent certain common phrases used by all authors.   However, since different authors also have different research topics and writing styles, there is individual variation in word frequencies themselves, and the coauthorship information is clearly directly relevant to modeling both the universal dependency graph and the individual means.   We propose a generalization of the Gaussian graphical model to such a setting, where each data point can have its own mean vector but the data points share the same covariance structure.     We further assume that a network connecting the observations is available, and that the mean vectors exhibit network ``cohesion'', a generic term describing the phenomenon of connected nodes behaving similarly,  observed widely in empirical studies and experiments  \citep{fujimoto2012social,haynie2001delinquent,christakis2007spread}.    We develop a computationally efficient algorithm to estimate the proposed Gaussian graphical model with network cohesion, and show that the method is consistent for estimating both the covariance matrix and the graph  in high-dimensional settings under a network cohesion assumption. 
Simulation studies show that our method works as well as the standard Gaussian graphical model in the i.i.d.\ setting, and is effective in the setting of different means with network cohesion, while the standard Gaussian graphical model completely fails.

The rest of the paper is organized as follows. Section~\ref{sec:model} introduces a Gaussian graphical model on network-linked observations and the corresponding two-stage model estimation procedure.   An alternative estimation procedure based on joint likelihood is also introduced, although we will argue that the two-stage estimation is preferable from both the computational and the theoretical perspectives.  Section~\ref{sec:theory1} presents a formal definition of network cohesion and error bounds under the assumption of network cohesion and regularity conditions, showing we can consistently estimate the partial dependence graph and model parameters. Section~\ref{sec:sim} presents simulation studies comparing the proposed method to standard graphical lasso and the two-stage estimation algorithm to the joint likelihood approach.    Section~\ref{sec:app}  applies the method to analyzing  dependencies between terms from a collection of statistics papers' titles and the associated coauthorship network.  Section~\ref{sec:con} concludes with discussion.

\section{Gaussian graphical model with network cohesion}\label{sec:model}

\subsection{Preliminaries} 
We start with setting up notation.  For a matrix $X \in \bR^{n\times p}$, let $X_{\cdot j}$ be the $j$th column and $X_{i\cdot}$  the $i$th row.   By default, we treat all vectors as column vectors. Let $\norm{X}_F = (\sum_{i,j} X_{ij}^2)^{1/2}$ be the Frobenius norm of $X$ and $\norm{X}$ the spectral norm, i.e.,  the largest singular value of $X$.    Further, let $\norm{X}_0 = \#\{(i,j): X_{ij}\ne 0\}$ be the number of non-zero elements in $X$,  $\norm{X}_1 = \sum_{ij}|X_{ij}|$, and  $\norm{X}_{1,\text{off}} = \sum_{i\ne j}|X_{ij}|$. For a square matrix $\Sigma$, let $\tr(\Sigma)$ and $\det(\Sigma)$ be the trace and the determinant of $\Sigma$, respectively, and assuming $\Sigma$ is a covariance matrix, let $r(\Sigma) = \frac{\tr(\Sigma)}{\norm{\Sigma}}$ be its {\em stable rank}.   It is clear that $1\le r(\Sigma) \le p$ for any nonzero covariance matrix $\Sigma$.

While it is common, and not incorrect, to use the terms ``network'' and ``graph'' interchangeably, throughout this paper   ``{\em network}" is used to refer to the {\em observed} network connecting the $n$ observations,   and  ``{\em graph}" refers to the conditional dependence graph of $p$ variables {\em to be estimated}.
 In a network or graph $\gcal$ of size $n$, if two nodes $i$ and $i'$ of $\gcal$ are connected, we write $i\sim_{\gcal}i'$, or  $i \sim i'$ if $\gcal$ is clear from the context.  The adjacency matrix of a graph $\gcal$ is an $n\times n$ matrix $A$ defined by $A_{ii'} = 1$ if $i\sim_{\gcal}i'$ and 0 otherwise.    We focus on undirected networks, which implies the adjacency matrix is symmetric.  Given an adjacency matrix $A$, we define its Laplacian by $L = D-A$ where $D = \F{diag}(d_1, d_2, \cdots, d_n)$ and $d_i = \sum_{i' = 1}^nA_{ii'}$ is the degree of node $i$. A well-known property of the Laplacian matrix $L$ is that, for any vector $\mu \in \bR^n$, 
\begin{equation}\label{eq:LaplacianPenalty-scalar}
\mu^TL\mu = \sum_{i\sim i'}(\mu_i-\mu_{i'})^2.
\end{equation}
We also define a normalized Laplacian $\lcal_s = \frac{1}{\bar{d}}L$ where $\bar{d}$ is the average degree of the network $\gcal$, given by $\bar d = \frac{1}{n}\sum_i d_i$.  We denote the eigenvalues of $\lcal_s$ by $\tau_1 \ge \tau_2 \ge  \cdots \ge \tau_{n-1} \ge \tau_n=0$, and the corresponding eigenvectors by $u_1, \dots, u_n$.

\subsection{Gaussian graphical model with network cohesion (GNC)}

We now introduce the heterogeneous Gaussian graphical model, as a generalization of the standard Gaussian graphical model with i.i.d.\ observations.  Assume the data matrix  $X$ contains $n$ independent observations $X_{i\cdot} \in \bR^p, i=1,2, \cdots, n$.  Each $X_{i\cdot}$ is a random vector drawn from an individual multivariate Gaussian distribution
\begin{equation}\label{eq:GNC-GGM}
X_{i\cdot} \sim \ncal(\mu_i, \Sigma), i=1,2, \cdots, n.
\end{equation}
where $\mu_{i} \in \bR^p$ is a $p$-dimensional vector and $\Sigma$ is a $p\times p$ symmetric positive definite matrix. Let $\Theta = \Sigma^{-1}$ be the corresponding precision matrix and  $M = (\mu_{1}, \mu_{2}, \cdots, \mu_{n})^T$ be the mean matrix, which will eventually incorporate cohesion.     Recall that in the Gaussian graphical model, $\Theta_{jj'} = 0$ corresponds to the conditional independence relationship $x_j\perp x_j' | \{x_k, k\ne j,j'\}$ \citep{lauritzen1996graphical}. Therefore a typical assumption, especially in high-dimensional problems, is that $\Theta$ is a sparse matrix; this both allows us to estimate $\Theta$ when $p > n$, and produces a sparse conditional dependence graph.

Model \eqref{eq:GNC-GGM} is much more flexible than the i.i.d.\ graphical model, and it separates  co-variation caused by individual preference (cohesion in the mean) from universal co-occurrence (covariance).    The price we pay for this flexibility is the much larger number of parameters, and model  \eqref{eq:GNC-GGM} cannot be fitted without additional assumptions on the mean, since we only have one observation to estimate each vector $\mu_i$.   The structural assumption we make in this paper is \emph{network cohesion}, a phenomenon of connected individuals in a social network tending to exhibit similar traits. It has been widely observed  in many empirical studies such as health-related behaviors or academic performance \citep{michell1996peer,haynie2001delinquent,pearson2003drifting}. 
Specifically, in our Gaussian graphical model~\eqref{eq:GNC-GGM}, we assume that connected  nodes in the observed network  have similar mean vectors.   This assumption is reasonable and interpretable in many applications. For instance, in  the coauthorship network example, cohesion indicates coauthors tend to have similar word preferences, which is reasonable since they work on similar topics and share at least some publications.

\subsection{Fitting the GNC model}

The log-likelihood of the data under model \eqref{eq:GNC-GGM} is, up to a constant,
\begin{equation}\label{eq:loglike}
\ell(M, \Theta) = \log\det(\Theta) - \frac{1}{n} \tr(\Theta(X - M)^T(X-M)). 
\end{equation}
A sparse inverse covariance matrix $\Theta$ and a cohesive mean matrix $M$ are naturally incorporated into the following two-stage procedure, which we call  \underline{G}aussian graphical model estimation with \underline{N}etwork \underline{C}ohesion and \underline{lasso} penalty (GNC-lasso).
\begin{algo}[Two-stage GNC-lasso algorithm]\label{algo:twostage}
Input:  a standardized data matrix $X$, network adjacency matrix $A$, tuning parameters $\lambda$ and $\alpha$.
\begin{enumerate}
\item Mean estimation.   Let $L_s$ be the standardized Laplacian of $A$.   Estimated the mean matrix by 
\begin{equation}\label{eq:individualRNC}
\hat{M} = \arg \min_M  \norm{X-M}_F^2 + \alpha \, \tr(M^T\lcal_sM).
\end{equation}
\item Covariance estimation. Let $\hat{S} = \frac{1}{n}(X-\hat{M})^T(X-\hat{M})$ be the sample covariance matrix of $X$ based on $\hat M$.    Estimate the precision matrix  by 
\begin{equation}\label{eq:glassoObj}
\hat{\Theta} = \arg\min_{ \Theta \in \bS^{n}_+}\log\det(\Theta) - \tr(\Theta\hat{S}) - \lambda\norm{\Theta}_{1,\text{off}}.
\end{equation}
\end{enumerate}
\end{algo}
The first step is a penalized least squares problem, where the penalty can be written as
\begin{equation}\label{eq:LaplacianPenalty-vec}
 \tr(M^T\lcal_sM) = \sum_{i\sim i'}\norm{\mu_i-\mu_{i'}}^2.
 \end{equation}
This can be viewed as a vector version of the Laplacian penalty used in variable selection \citep{li2008network,li2010variable, zhao2016significance} and regression problems \citep{li2016prediction} with network information.  It penalizes the difference between mean vectors of connected nodes, encouraging cohesion in the estimated mean matrix.   Both terms in \eqref{eq:individualRNC} are separable in the $p$ coordinates and the least squares problem has a closed form solution, 
\begin{equation}\label{eq:LaplacianSmoothing}
\hat{M}_{\cdot j} = (I_n+\alpha\lcal_s)^{-1}X_{\cdot j}, ~~j = 1,2, \cdots, p.
\end{equation}
In practice, we usually need to compute the estimate for a sequence of $\alpha$ values, so we first calculate the eigen-decomposition of $\lcal_s$ and then obtain each $(I+\alpha\lcal_s)^{-1}$ in linear time.  In most applications, networks are very sparse, and taking advantage of sparsity and the symmetrically diagonal dominance of $\lcal_s$ allows to compute the eigen-decomposition very efficiently \citep{cohen2014solving}. Given $\hat{M}$, criterion \eqref{eq:glassoObj} is a graphical lasso problem that uses the lasso penalty \citep{tibshirani1996regression} to encourage sparsity in the estimated precision matrix, and can be solved by the glasso algorithm \citep{friedman2008sparse} efficiently or any of its variants, later significantly improved further by \cite{witten2011new} and \cite{hsieh2014quic,hsieh2013big}.


\subsection{An alternative: penalized  joint likelihood}\label{secsec:joint}
An alternative and seemingly more natural approach is to maximize a penalized log-likelihood to estimate both $M$ and $\Theta$ jointly as 
\begin{align}\label{eq:obj1}
(\hat \Theta, \hat M) = \arg \max_{\Theta, M}~\log\det(\Theta) &- \frac{1}{n}\tr(\Theta(X - M)^T(X-M)) - \lambda\norm{\Theta}_{1,\text{off}} - \frac{\alpha}{n}\tr(M^T\lcal_sM).
\end{align}
The objective function is bi-convex and the optimization problem can be solved by alternately optimizing over $M$ with fixed $\Theta$ and then optimizing over $\Theta$ with fixed $M$ until convergence.    We refer to this method as iterative GNC-lasso.     Though this strategy seems more principled in a sense, we implement our method with the two-stage algorithm,  for the following reasons.  

First,  the computational complexity of the iterative method based on joint likelihood is significantly higher, and it does not scale well in either $n$ or $p$.    This is because when $\Theta$ is fixed and we need to maximize over $M$, all $p$ coordinates are coupled in the objective function, so the scale of the problem is $np\times np$.   Even for moderate $n$ and $p$,  solving this problem requires either a large amount of memory or applying Gauss-Seidel type algorithms that  further increase the number of iterations.   This problem is exacerbated by the need to select two tuning parameters $\lambda$ and $\alpha$ jointly, because, as we will discuss later, they are also coupled.  

More importantly,  our empirical results show that the iterative estimation method does not improve on the two-stage method (if it does not slightly hurt it).    The same phenomenon was observed empirically by \cite{yin2013adjusting} and \cite{lin2016penalized},  who used a completely different approach of applying sparse regression to adjust the Gaussian graphical model, though those papers did not offer an explanation.     We conjecture that this phenomenon of maximizing penalized joint likelihood failing to  improve on a  two-stage method may be general.    An intuitive explanation  might lie in the fact that the two parameters $M$ and $\Theta$ are only connected through the penalty:   the Gaussian log-likelihood \eqref{eq:loglike} without a penalty is maximized over $M$ by $\hat M = X$, which does not depend on  $\Theta$. Thus the likelihood itself does not pool information from different observations to estimate the mean (nor should it, since we assumed they are different), while the cohesion penalty is separable in the $p$ variables and does not pool information between them either.   An indirect justification of this conjecture follows from a property of the two-stage estimator stated in Proposition~\ref{thm:oracle} in Appendix~\ref{sec:oracle}, and the numerical results in Section~\ref{sec:sim} provide empirical support.

\subsection{Model selection}

There are two tuning parameters, $\lambda$ and $\alpha$, in the two-stage GNC-lasso algorithm. The parameter $\alpha$ controls the amount of cohesion over the network in the estimated mean and can be easily tuned based on its predictive performance. In subsequent numerical examples, we always choose $\alpha$ from a sequence of candidate values by 10-fold cross-validation. In each fold, the sum of squared prediction errors on the validation set $\sum (X_{ij}-\hat{\mu}_{ij})^2$ is computed and the $\alpha$ value is chosen to minimize the average prediction error. If the problem is too large for cross-validation, we can also use the generalized cross-validation (GCV) statistic as an alternative, which was shown to be effective in theory for ridge-type regularization \citep{golub1979generalized,li1986asymptotic}.  The GCV statistic for $\alpha$ is defined by
$$\text{GCV}(\alpha) = \frac{1}{np}\norm{X-\hat{M}(\alpha)}_F^2/[1-\frac{1}{n}\tr((I+\alpha \lcal_s)^{-1})]^2 = \frac{\norm{X-\hat{M}(\alpha)}_F^2}{np[1-\frac{1}{n}\sum_{i=1}^n\frac{1}{1+\alpha\tau_i}]^2}$$
where we write $\hat{M}(\alpha)$ to emphasize that the estimate depends on $\alpha$.    The parameter $\alpha$ should be selected to minimize GCV.  Empirically, we observe running the true cross-validation is typically more accurate than using GCV. So the GCV is only recommended for problems that are too large to run cross-validation.

Given $\alpha$, we obtain $\hat{M}$ and use $\hat{S} = \frac{1}{n}(X-\hat{M})^T(X-\hat{M})$ as the input of the glasso problem in \eqref{eq:glassoObj}; therefore $\lambda$ can be selected by standard glasso tuning methods, which may depend on the application.  For example, we can tune $\lambda$ according to some data-driven goodness-of-fit criterion such as BIC, or via stability selection.    Alternatively, if the graphical model is being fitted as an exploratory tool to obtain an interpretable dependence between variables, $\lambda$ can be selected to achieve a pre-defined sparsity level of the graph, or chosen subjectively with the goal of interpretability. Tuning illustrates another important advantage of the two-stage estimation over the iterative method: when estimating the parameters jointly, due to the coupling of $\alpha$ and $\lambda$ the tuning must be done on a grid of their values and using the same tuning criteria. The de-coupling of tuning parameters in the two-stage estimation algorithm is both more flexible, since we can use different tuning criteria for each if desired, and  more computationally tractable since we only need to do two line searches instead of a two-dimensional grid search.

\subsection{Related work and alternative penalties}

The Laplacian smoothness penalty of the form \eqref{eq:LaplacianPenalty-scalar} or \eqref{eq:LaplacianPenalty-vec} was originally used in machine learning for embedding and kernel learning \citep{belkin2003laplacian,smola2003kernels, zhou2005learning}.   More recently, this idea has been employed in graph-constrained estimation for variable selection in regression  \citep{li2008network,li2010variable,slawski2010feature,pan2010incorporating,shen2012simultaneous, zhu2013simultaneous, sun2014network, liu2019graph}, principal component analysis \citep{shojaie2010penalized}, and regression inference \citep{zhao2016significance}. In these problems, a network is assumed to connect a set of random variables or predictors and is used to achieve more effective variable selection or dimension reduction in high-dimensional settings.  A generalization to potentially unknown network or group structure was studied by \cite{witten2014cluster}. Though Step 1 of Algorithm~\ref{algo:twostage} has multiple connections to graph constrained estimation, there are a few key differences. In our setting, the network is connecting observations, not variables. We only rely on smoothness across the network for accurate estimation without additional structural assumptions such as sparsity on $M$. In graph-constrained estimation literature, in addition to the Laplacian penalty, other penalties are proposed in special contexts \citep{slawski2010feature,pan2010incorporating,shen2012simultaneous}. We believe similar extensions can also be made in our problem for special applications and we will leave such extensions for future investigation.

An alternative penalty we can impose on $M$ instead of $\sum_{i\sim i'}\norm{\mu_i-\mu_{i'}}^2$ is 
\begin{equation}\label{eq:network-lasso}
\sum_{i\sim i'}\norm{\mu_i-\mu_{i'}}.
\end{equation}
This penalty  is called the network lasso penalty \citep{hallac2015network} and can be viewed as a generalization of the fused lasso \citep{tibshirani2005sparsity} and the group lasso \citep{yuan2006model}. The penalty and its variants were studied recently by \cite{wang2014trenda,jung2018network,tran2018network}.   This penalty is also associated with convex clustering \citep{hocking2011clusterpath,lindsten2011just}, because it typically produces piecewise constant estimates which can be interpreted as clusters.   Properties of convex clustering  have been studied by \cite{hallac2015network,chi2015splitting,tan2015statistical}.  However,  in our setting there are two clear reasons for using the Laplacian penalty and not the network lasso.    First, piecewise constant individual effects within latent clusters of the network is a special case of the general cohesive individual effects, so our assumption is strictly weaker, and there is no reason to impose piecewise constant clusters in the mean unless there is prior knowledge.  Second, solving the optimization in the network lasso problem is computationally challenging and not scalable to the best our knowledge: current state of art algorithms \citep{hallac2015network,chi2015splitting} hardly handle more than about 200 nodes on a single core. In contrast, the Laplacian penalty in \eqref{eq:LaplacianPenalty-vec} admits a closed-form solution and can be efficiently solved for thousands of observations even with a naive implementation on a single machine.  Moreover, there are many ways to improve the naive algorithm based on the special properties of the linear system \citep{spielman2010algorithms,koutis2010approaching, cohen2014solving,sadhanala2016graph,li2016prediction}.   Therefore, \eqref{eq:LaplacianPenalty-vec}  is a better choice than \eqref{eq:network-lasso} for this problem, both computationally and conceptually.

\section{Theoretical properties}\label{sec:theory1}

In this section, we investigate the theoretical properties of the two-stage GNC-lasso estimator. Throughout this section, we assume the observation network $A$ is connected which implies that $\lcal_s$ has exactly one zero eigenvalue. The results can be trivially extended to a network consisting of several connected components, either by assuming the same conditions for each component or regularizing $A$ to be connected as in \cite{amini2013pseudo}. Recall that $\tau_1 \ge \tau_2 \ge  \cdots \ge \tau_{n-1} > \tau_n=0$ are the eigenvalues of $\lcal_s$ corresponding to eigenvectors $u_1, \cdots, u_n$.  For a connected network, we know $\tau_n$ is the only zero eigenvalue. Moreover,$\tau_{n-1}$ is known as \emph{algebraic connectivity} that measure the connectivity of the network. 

\subsection{Cohesion assumptions on the observation network}
The first question we have to address is how to formalize the intuitive notion of cohesion.  We will start with the most intuitive definition of network cohesion for a vector, extend it to a matrix, and then give examples satisfying the cohesion assumptions.  

Intuitively,  we can think of a vector $v \in \bR^n$ as cohesive on a network $A$ if  $v^T\lcal_sv$ is small in some sense, or equivalently, $\norm{\lcal_sv}_2$ is small, since $\lcal_s v$ is the gradient of $v^T\lcal_sv$  up to a constant and  
$$\norm{\lcal_sv}_2 \to 0 \iff v^T\lcal_sv \to 0.$$
It will be convenient to define cohesion in terms of $\lcal_sv$, which also leads to a nice interpretation.   The $i$th coordinate of $\lcal_sv$ can be written as 
$$\frac{d_i}{\bar{d}}  \left( v_i - \frac{1}{d_i}\sum_{i'\sim_{A} i}v_i' \right),$$
which is the difference between the value at node $i$ and the average value of its neighbors, weighed by the degree of node $i$. Let $\lcal_s = U\Lambda U^T$ be the eigen-decomposition of $\lcal_s$, with $\Lambda$ the diagonal matrix with the eigenvalues $\tau_1 \ge \dots \ge \tau_n$ on the diagonal.    The vector $v$  can be expanded in this  basis as $v = U\beta = \sum_{i=1}^n\beta_i v_i$ where $\beta \in \bR^n$. 
Under cohesion, we would expect $\norm{\lcal_sv}_2^2 = \sum_i\tau_i^2\beta_i^2 $ to be much smaller than $\norm{v}_2^2 = \norm{\beta}_2^2 $.     We formalize this in the following definition. 
\begin{defi}[A network-cohesive vector]\label{defi:cohesion}
Given a network $A$ and a vector $v$,  let $v = \sum_{i=1}^n\beta_iu_i$ be the expansion of $v$ in the basis of eigenvectors of $\lcal_s$. We say  $v$ is cohesive on $A$ with rate $\delta>0$ if for all $i = 1, \dots, n$, 
\begin{equation}\label{eq:cohesion}
\frac{\tau_i^2|\beta_i|^2}{\norm{\beta}_2^2} \le  n^{-\frac{2(1+\delta)}{3}-1}, 
\end{equation}
which implies 
$$\frac{\norm{\lcal_sv}_2^2}{\norm{v}_2^2} \le n^{-\frac{2(1+\delta)}{3}}.$$
\end{defi}
Now we can easily define a network-cohesive matrix $M$.
\begin{defi}[A network-cohesive matrix]\label{defi:cohesion2}
A matrix $M \in \bR^{n\times p}$ is cohesive on a network $A$ if all of its columns are cohesive on $A$.
\end{defi}

An obvious but trivial example of a cohesive vector is a constant vector,  which corresponds to $\delta = \infty$.   More generally, we define the class of {\em trivially cohesive} vectors as follows. 
\begin{defi}[A trivially cohesive vector]\label{defi:nontrivialcohesion}
We say vector $v$ is trivially cohesive if
$$\widehat{\var}(v) = o(\bar{v}^2)$$
where $\bar{v} = \sum_{i=1}^n v_i/n$ is the sample mean of $v$, and $\widehat{\var}(v) = \sum_{i=1}^n(v_i-\bar{v})^2/(n-1)$ is the sample variance of $v$.
\end{defi}
Trivial cohesion does not involve a specific network $A$, because such vectors are essentially constant.   We say $v$ is nontrivially cohesive if it is cohesive but not trivially cohesive.   Similarly, we will say a matrix is trivially cohesive if all its columns are trivially cohesive, and nontrivially cohesive if it is cohesive but not trivially cohesive.


For obtaining theoretical guarantees, we will need to make an additional assumption about the network, which essentially quantifies how much network structure can be used to control model complexity under nontrivial cohesion.     This will be quantified through the concept of effective dimension of the network defined below.  
\begin{defi}\label{defi:highrank}
Given a connected network adjacency matrix $A$ of size $n \times n$ and eigenvalues of its standardized Laplacian $\tau_1 \ge \dots \tau_{n-1} > \tau_n = 0$,  define  the \textbf{effective dimension} of the network as 
$$m_A = \inf\{m: 0 \le m \le n-1,  \tau_{n-m} \ge \frac{1}{\sqrt{m}}\}.$$
\end{defi}
Note that spectral graph theory \citep{brouwer2011spectra} implies $\tau_1\ge c$ for some constant $c$, and thus for sufficiently large $n$, we always have $m_A \le n-1$.   
For many sparse and/or structured networks the effective dimension is much smaller than $n-1$, and then we can show nontrivially cohesive vectors/matrices exist.

Our first example of a network with a small effective dimension is a lattice network.    Assume  $\sqrt{n}$ is an integer and define the lattice network of $n$ nodes by arranging them on a $\sqrt{n} \times \sqrt{n}$ spatial grid and connecting grid neighbors (the four corner nodes have degree 2, nodes along the edges of the lattice have degree 3, and all internal nodes have degree 4).  
\begin{prop}[Cohesion on a lattice network]\label{prop:lattice}
Assume $A$ is a lattice network on $n$ nodes, and $\sqrt{n}$ is an integer. Then for a sufficiently large $n$, 
\begin{enumerate}
\item The effective dimension $m_A \le n^{2/3}$.  
\item There exist nontrivially cohesive vectors on the lattice network with rate $\delta=1/2$.
\end{enumerate}
\end{prop}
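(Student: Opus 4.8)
The plan is to use the explicit spectral structure of the grid. Since the $\sqrt n\times\sqrt n$ lattice is the Cartesian product $P_{\sqrt n}\,\square\,P_{\sqrt n}$ of two paths, the eigenvalues of its combinatorial Laplacian $L$ are exactly the pairwise sums $\mu_{\ell_1}+\mu_{\ell_2}$ with $0\le\ell_1,\ell_2\le\sqrt n-1$, where $\mu_\ell=2-2\cos(\pi\ell/\sqrt n)=4\sin^2\!\big(\tfrac{\pi\ell}{2\sqrt n}\big)$ are the path eigenvalues, and the eigenvalues $\tau$ of $\lcal_s$ are these sums divided by the average degree $\bar d=4-4/\sqrt n$. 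Everything is then driven by two elementary trigonometric bounds: Jordan's inequality $\sin x\ge 2x/\pi$ on $[0,\pi/2]$ gives $\mu_\ell\ge 4\ell^2/n$, and $\sin x\le x$ gives $\mu_\ell\le \pi^2\ell^2/n$.

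For part~1 I would bound the spectral counting function of $\lcal_s$. Using $\mu_\ell\ge 4\ell^2/n$ together with $\bar d\le 4$, the number of eigenvalues of $\lcal_s$ strictly below a threshold $t$ is at most the number of nonnegative integer pairs $(\ell_1,\ell_2)$ with $\ell_1^2+\ell_2^2< nt$, and a standard area comparison (attach a half-open unit square to each such point; the union lies in the quarter-disc of radius $\sqrt{nt}+\sqrt2$) bounds this by $\tfrac{\pi}{4}(\sqrt{nt}+\sqrt2)^2$. Evaluating at $t=1/\sqrt m$ gives a count of $\tfrac{\pi}{4}\,\tfrac{n}{\sqrt m}\,(1+o(1))$, which falls below $m$ as soon as $m^{3/2}\gtrsim\tfrac{\pi}{4}n$; hence one can pick $m^\star=(1+o(1))(\pi/4)^{2/3}n^{2/3}$. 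Since $(\pi/4)^{2/3}<1$, for large $n$ this $m^\star$ satisfies $m^\star\le n^{2/3}$, and ``at most $m^\star$ eigenvalues below $1/\sqrt{m^\star}$'' is exactly the inequality $\tau_{n-m^\star}\ge 1/\sqrt{m^\star}$. Therefore $m_A\le m^\star\le n^{2/3}$.

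For part~2, note that the smallest nonzero eigenvalue is $\tau_{n-1}=\mu_1/\bar d=\tfrac{\pi^2}{4n}(1+o(1))$, so taking $v$ to be a single eigenvector for $\tau_{n-1}$ fails Definition~\ref{defi:cohesion} with $\delta=\tfrac12$, because that rate requires $\tau_i^2\beta_i^2\le n^{-2}\|\beta\|_2^2$ for every $i$ while $\tau_{n-1}^2\approx\pi^4/(16n^2)>n^{-2}$. The remedy is to load most of the mass onto the constant mode: set $v=u_n+c\,u_{n-1}$ with $u_n=\mathbf 1/\sqrt n$ the eigenvector for eigenvalue $0$, $u_{n-1}$ a unit eigenvector for $\tau_{n-1}$ (hence $u_{n-1}\perp\mathbf 1$), and $c=\tfrac13$. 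Then $\beta_n=1$, $\beta_{n-1}=c$, all other $\beta_i=0$, so $\|\beta\|_2^2=1+c^2$, and the only nontrivial coordinate check is $\tau_{n-1}^2c^2\le n^{-2}(1+c^2)$, which holds for large $n$ since $\tau_{n-1}\le 3/n$ and $9c^2\le 1+c^2$; this is Definition~\ref{defi:cohesion} with $\delta=\tfrac12$. For nontriviality, $\bar v=n^{-1/2}$ so $\bar v^2=1/n$, while $v-\bar v\mathbf 1=v-u_n=c\,u_{n-1}$ gives $\widehat{\var}(v)=c^2/(n-1)$, whence $\widehat{\var}(v)/\bar v^2\to c^2\ne 0$; so $v$ is cohesive but not trivially cohesive.

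The main obstacle is the sharp constant in part~1: the conclusion demands $m_A\le n^{2/3}$ with leading constant exactly $1$, so I must track the constant $(\pi/4)^{2/3}\approx 0.85$ through the eigenvalue count --- and verify that the lower-order lattice-point error terms, which are only $O(n^{1/3})$ and hence harmless against the $\asymp n^{2/3}$ budget, do not spoil this --- rather than being content with $m_A=O(n^{2/3})$. Part~2 is by contrast a short computation, once one observes that a lone low-frequency eigenvector is not quite cohesive enough at rate $\tfrac12$ and that inflating $\|\beta\|_2^2$ via the constant direction, while retaining a fixed fraction of the mass on the first nonconstant mode, simultaneously restores the cohesion bound and keeps the vector away from the trivial, essentially-constant regime.
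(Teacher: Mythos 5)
Your proposal is correct. For part 1 you follow essentially the paper's route: expand the grid Laplacian eigenvalues as sums of path eigenvalues, lower-bound them via $\sin x \ge 2x/\pi$, and count the lattice points falling in a quarter-disc; your bookkeeping of the normalization by $\bar d$ and of the unit-square boundary error is, if anything, slightly more careful than the paper's, and both arguments produce a count of order $c\,n^{2/3}$ with $c<1$, which is exactly what the constant-$1$ bound $m_A\le n^{2/3}$ requires. For part 2, however, you take a genuinely different and simpler route. The paper constructs an extremal vector saturating every inequality in Definition~\ref{defi:cohesion} (so $\beta_i\propto 1/\tau_i$ for $i<n$) and must then lower-bound $\sum_{i<n}\tau_i^{-2}$ by an integral comparison in polar coordinates to show that the non-constant energy is a constant fraction of $\|\mu\|_2^2$. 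You instead place all the mass on two modes, $v=u_n+\tfrac13 u_{n-1}$, so the only nontrivial cohesion constraint is the single inequality $\tau_{n-1}^2 c^2\le n^{-2}(1+c^2)$, settled by $\tau_{n-1}=\tfrac{\pi^2}{4n}(1+o(1))\le 3/n$, and nontriviality is immediate because a fixed fraction of the squared norm sits on a mean-zero mode. This buys a much shorter computation, avoids the integral estimate entirely, and isolates the spectral quantity that actually matters (the algebraic connectivity): your construction works verbatim on any connected graph with $\tau_{n-1}\lesssim 1/n$, whereas the paper's exhibits a vector that is maximally non-constant subject to the cohesion constraint at the price of a more delicate sum estimate. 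Both establish the claim.
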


Figure~\ref{fig:effective-dim} shows the eigenvalues and the function $1/\sqrt{m}$ for reference of a $20\times 20$ lattice and of a coauthorship network we analyze in Section~\ref{sec:app}.   For both networks, the effective dimension is much smaller than the number of nodes:  for the lattice,  $n=400$, while $m_A = 30$  and for the coauthorship network with $n = 635$ nodes, $m_A =  66$.  

\begin{figure}[H]
\centering
\begin{subfigure}{.5\textwidth}
  \centering
  \includegraphics[width=\linewidth]{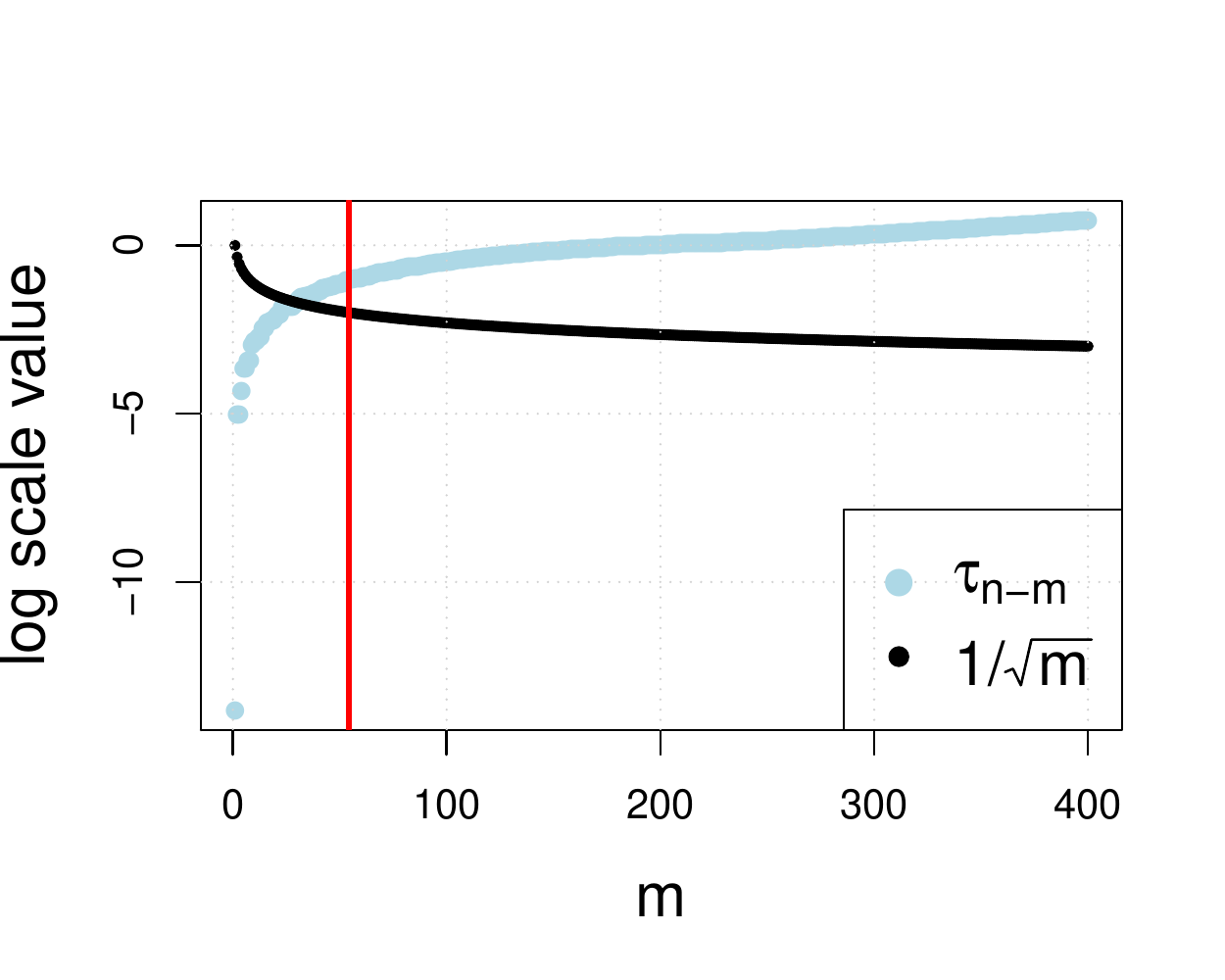}
  \caption{$20\times 20$ lattice}
\end{subfigure}%
\begin{subfigure}{.5\textwidth}
  \centering
  \includegraphics[width=\linewidth]{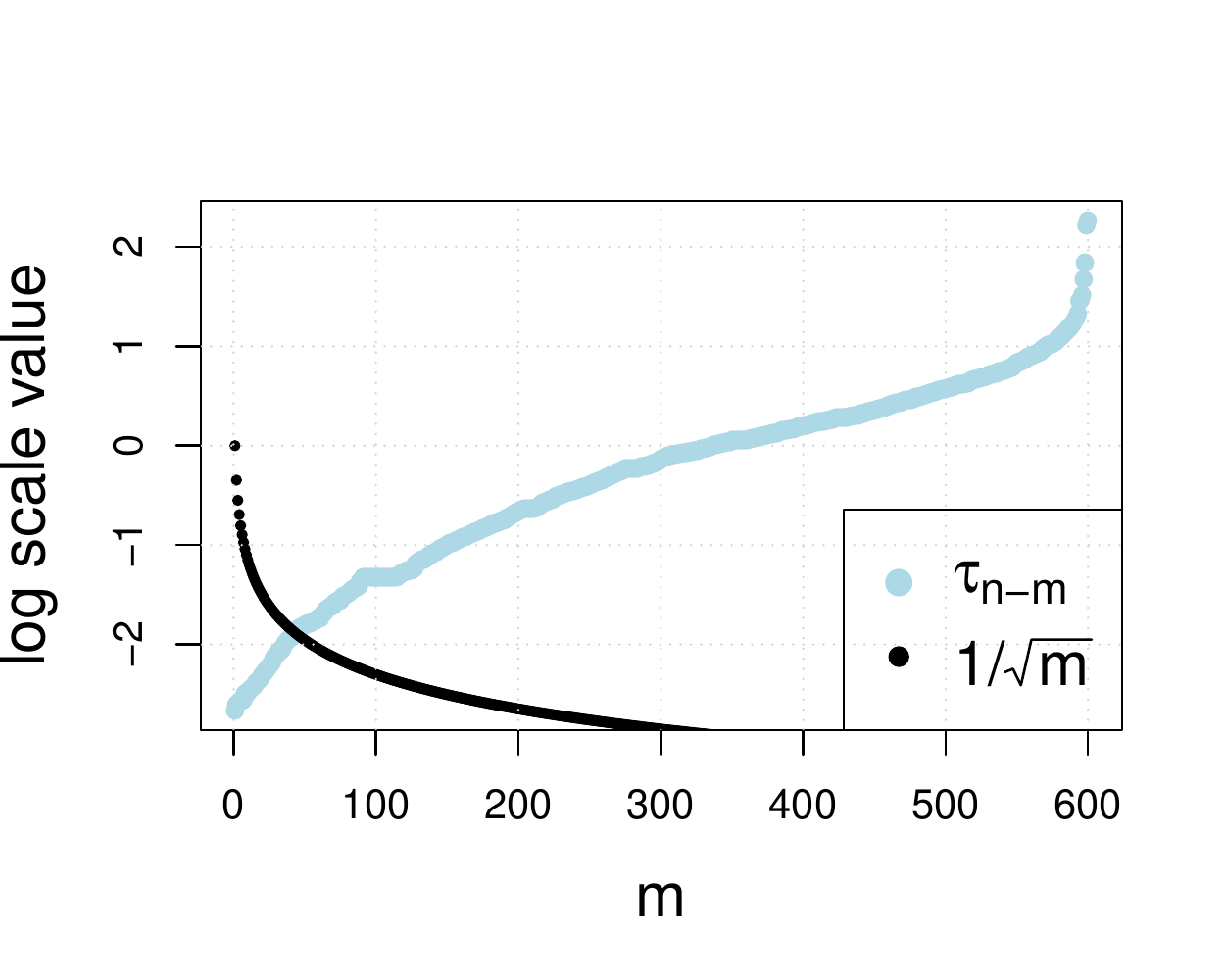}
  \caption{Coauthorship network}
\end{subfigure}
\caption{Eigenvalues and effective dimensions of a $20\times 20$ lattice and the coauthorship networks from Section~\ref{sec:app}. The red vertical line in the left panel is  $n^{2/3}$, the theoretical upper bound from  Proposition~\ref{prop:lattice}.}
\label{fig:effective-dim}
\end{figure}

\subsection{Mean estimation error bound}

Our goal here is to obtain a bound on the difference between $M$ and the estimated $\hat{M}$ obtained by Algorithm~\ref{algo:twostage}, under the following cohesion assumption.
\begin{ass}\label{ass:regressionCoef}
The mean matrix $M$ is cohesive over the network $A$ with rate $\delta$ where $\delta$ is a positive constant. Moreover, $\norm{M_{\cdot j}}_2^2 \le b^2 n$ for every $j \in [p]$ for some positive constant $b$.
\end{ass}

\begin{thm}[Mean error bound]\label{thm:MeanInitialError}
  Assume model \eqref{eq:GNC-GGM} and  Assumption~\ref{ass:regressionCoef} are true.  Write $\sigma^2 = \max_j \Sigma_{jj}$ and $\Delta = n^{\frac{1+\delta}{3}}\tau_{n-1}$, where $\tau_{n-1}$ is the smallest nonzero eigenvalue of $\lcal_s$.    Then $\hat{M}$ estimated by  \eqref{eq:individualRNC} with $\alpha = n^{\frac{1+\delta}{3}}$ satisfies 
\begin{equation}\label{eq:MFrobeniusBound}
\frac{\norm{\hat{M}-M}_{F}^2}{np} \le  \frac{(b^2+2\sigma^2)[1+m_A(\frac{1}{(1+\Delta)^2}+n^{\frac{1-2\delta}{3}})]}{n}
\end{equation}
with probability at least $1-\exp(-c(n-m_A)r(\Sigma)) - \exp(-cm_Ar(\Sigma))- \exp(-c\frac{p\sigma^2}{\phi_{max}(\Sigma)})$ for some positive constant $c$, where $m_A$ is the effective dimension of network $A$ in Definition~\ref{defi:highrank} and $r(\Sigma)$ is the stable rank of $\Sigma$.
\end{thm}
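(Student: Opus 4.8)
The plan is to use the closed form of the first-stage estimate, $\hat M = WX$ with $W := (I_n+\alpha\lcal_s)^{-1}$, together with the model $X = M + E$ where the rows of $E$ are i.i.d.\ $\ncal(0,\Sigma)$. Then $\hat M - M = (W-I_n)M + WE$, and it suffices to control the squared Frobenius norms of the deterministic ``bias'' term $(W-I_n)M$ and the stochastic ``noise'' term $WE$ separately; the cross term between them is mean-zero Gaussian and, on the high-probability event we construct, is dominated by the bias and noise bounds, so it only affects the overall constant (absorbed into $b^2+2\sigma^2$). Both terms will be analyzed in the spectral basis of $\lcal_s$: writing $\lcal_s = U\Lambda U^T$, the matrices $W$ and $W-I_n$ are diagonalized by $U$ with eigenvalues $(1+\alpha\tau_i)^{-1}$ and $-\alpha\tau_i(1+\alpha\tau_i)^{-1}$, so that $\norm{WE}_F^2 = \norm{U^TWE}_F^2 = \sum_{i=1}^n (1+\alpha\tau_i)^{-2}\norm{Y_i}_2^2$ with $Y_i := u_i^T E$. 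The single structural fact driving everything is the deterministic spectral estimate
\[
\sum_{i=1}^{n-1}\frac{1}{(1+\alpha\tau_i)^2}\ \le\ m_A\Big(\frac{1}{(1+\Delta)^2}+n^{\frac{1-2\delta}{3}}\Big),
\]
which I would prove by splitting the index set at $n-m_A$: for $i\le n-m_A$ the definition of the effective dimension gives $\tau_i\ge\tau_{n-m_A}\ge m_A^{-1/2}$, hence $\alpha\tau_i\ge n^{(1+\delta)/3}m_A^{-1/2}$ and each of these (at most $n$) terms is $\le m_A/\alpha^2 = m_A\,n^{-2(1+\delta)/3}$, summing to $\le m_A\,n^{(1-2\delta)/3}$; for $n-m_A<i\le n-1$ one only has $\tau_i\ge\tau_{n-1}$, so $\alpha\tau_i\ge\Delta$ and each of these (at most $m_A$) terms is $\le(1+\Delta)^{-2}$.

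For the bias term, expand $M_{\cdot j} = U\beta^{(j)}$, so $\norm{\beta^{(j)}}_2=\norm{M_{\cdot j}}_2$ and
\[
\norm{(W-I_n)M}_F^2 \;=\; \sum_{j=1}^p\sum_{i=1}^{n-1}\frac{\alpha^2\tau_i^2}{(1+\alpha\tau_i)^2}\,(\beta_i^{(j)})^2 .
\]
Assumption~\ref{ass:regressionCoef} (cohesion) gives $\tau_i^2(\beta_i^{(j)})^2\le n^{-2(1+\delta)/3-1}\norm{M_{\cdot j}}_2^2$, and the tuning choice $\alpha=n^{(1+\delta)/3}$ is made precisely so that $\alpha^2$ cancels this factor, leaving each summand $\le n^{-1}\norm{M_{\cdot j}}_2^2/(1+\alpha\tau_i)^2$. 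Applying the spectral estimate above together with $\norm{M_{\cdot j}}_2^2\le b^2n$ yields the purely deterministic bound $\norm{(W-I_n)M}_F^2 \le p\,b^2 m_A\big((1+\Delta)^{-2}+n^{(1-2\delta)/3}\big)$.

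For the noise term, the crucial observation is that, by orthonormality of $u_1,\dots,u_n$, the rotated vectors $Y_i = u_i^TE$ are i.i.d.\ $\ncal(0,\Sigma)$. I would split $\sum_i(1+\alpha\tau_i)^{-2}\norm{Y_i}_2^2$ into the same three blocks — $i\le n-m_A$, $n-m_A<i\le n-1$, and $i=n$ — and on each apply a standard concentration bound for Gaussian quadratic forms (Hanson--Wright / Laurent--Massart) to $\sum_{i\in\mathrm{block}}\norm{Y_i}_2^2$. For the two shrunk blocks this gives $\sum\norm{Y_i}_2^2\le 2|\mathrm{block}|\,\tr(\Sigma)$ with failure probabilities $\exp(-c(n-m_A)r(\Sigma))$ and $\exp(-cm_Ar(\Sigma))$; the direction $\tau_n=0$ is the delicate one, since it is not shrunk at all, and bounding $\norm{Y_n}_2^2\le 2p\sigma^2$ (still the right order, as $\tr(\Sigma)\le p\sigma^2$) is exactly what produces the term $\exp(-cp\sigma^2/\phi_{max}(\Sigma))$. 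Multiplying each block bound by the corresponding per-block bound on $(1+\alpha\tau_i)^{-2}$ established above and using $\tr(\Sigma)\le p\sigma^2$, I get $\norm{WE}_F^2\le 2p\sigma^2\big[1+m_A\big((1+\Delta)^{-2}+n^{(1-2\delta)/3}\big)\big]$ on the intersection of the three events. Adding this to the bias bound and dividing by $np$ gives \eqref{eq:MFrobeniusBound}.

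I expect the main obstacle to be the coupling between the cohesion rate $\delta$, the tuning choice $\alpha=n^{(1+\delta)/3}$, and the effective dimension $m_A$ in the spectral estimate above: one has to see that cohesion damps the leading spectral coefficients of $M$ enough that $\alpha$ can be taken this large — so that the noise is heavily shrunk outside the at most $m_A$ ``low-frequency'' directions — without inflating the bias, and that splitting the spectrum exactly at $n-m_A$ is what simultaneously produces the $n^{(1-2\delta)/3}$ and $(1+\Delta)^{-2}$ pieces of the bound and the three failure probabilities in the stated form. Everything else (the spectral diagonalization, the rotation making the $Y_i$ i.i.d., and the quadratic-form concentration) is routine once this is in place.
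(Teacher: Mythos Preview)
Your proposal is correct and matches the paper's argument: the paper also passes to the spectral basis (writing $\hat B=U^T\hat M$ and bounding $\|\hat B-B\|_F$), decomposes into the same bias and noise pieces, splits the index set at $i=n-m_A$ using $\tau_{n-m_A}\ge m_A^{-1/2}$ from the effective-dimension definition, and applies the same blockwise concentration for $\sum\|\tilde E_{i\cdot}\|_2^2$ to produce exactly the three failure probabilities you list. The only wrinkle is the bias--noise cross term, which both you and the paper treat loosely; Cauchy--Schwarz shows it costs at most a factor of $2$ in the leading constant, so your ``absorbed into $b^2+2\sigma^2$'' should really read ``absorbed into a constant multiple of $b^2+2\sigma^2$''.
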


The theorem shows that the average estimation error is vanishing with high probability as long as the cohesive dimension $m_A = o(n^{\frac{2(1+\delta)}{3}})$ while $m_Ar(\Sigma)$ and $p/\phi_{max}(\Sigma)$ grow with $n$.    Except for degenerate situations, we would expect $r(\Sigma)$ and $p/\phi_{max}(\Sigma)$ to grow with $p$,  which in turn grows with $n$. In \eqref{eq:MFrobeniusBound}, the term $\Delta = n^{(1+\delta)/3}\tau_{n-1}$ involves both the cohesion rate of the mean matrix and the algebraic connectivity of the network. In trivially cohesive settings, $\delta \to \infty$ and $\Delta \to \infty$ so the bound does not depend on the network, and the error bound becomes the standard mean estimation error bound.    General lower bounds for $\tau_{n-1}$ are available \citep{fiedler1973algebraic}, but we prefer not to introduce additional algebraic definitions at this point.

  Finally, note that the value of $\alpha$ depends on the cohesive rate $\delta$ of $M$. Therefore, the theorem is not adaptive to the unknown cohesive rate. In practice, as we discussed, one has to use cross-validation to tune $\delta$.

\subsection{Inverse covariance estimation error bounds}\label{secsec:cov}
Our next step is to show that $\hat{M}$ is a sufficiently accurate estimate of $M$ to guarantee good properties of the estimated precision matrix $\Theta$ in step 2 of the  two-stage GNC-lasso algorithm.   
We will need some additional assumptions, the same ones needed for the glasso performance guarantees under the standard Gaussian graphical model \citep{rothman2008sparse, ravikumar2011high}.


Let $\Gamma = \Sigma\otimes\Sigma$ be the Fisher information matrix of the model,  where $\otimes$ denotes the Kronecker product. In particular, under the multivariate Gaussian distribution, we have $\Gamma_{(j,k),(\ell,m)} = \cov(X_jX_k,X_{\ell}X_m)$. Define the set of nonzero entries of $\Theta$ as 
\begin{equation}\label{eq:support}
S(\Theta) = \{(j,j') \in [n]\times [n]: \Theta_{jj'}\ne 0\}.
\end{equation}

We use $S^c(\Theta)$ to denote the complement of $S(\Theta)$.   Let $s=|S(\Theta)|$ be the number of nonzero elements in $\Theta$. Recall that we assume all diagonals of $\Theta$ are nonzero. For any two sets $T_1, T_2 \subset [n]$, let $\Gamma_{T_1, T_2}$ denote the submatrix with rows and columns indexed by $T_1$, $T_2$, respectively. When  the context is clear, we may simply write $S$ for $S(\Theta)$.    Define 
\begin{eqnarray*}
\psi & = & \max_j\norm{\Theta_{j\cdot}}_0 , \\
\kappa_{\Sigma} & = & \norm{\Sigma}_{\infty,\infty} , \\
\kappa_{\Gamma} & = & \norm{(\Gamma_{SS})^{-1}}_{\infty,\infty}
  \end{eqnarray*}
where the vector operator $\norm{\cdot }_0$ gives the number of nonzeros in the vector while the matrix norm $\norm{\cdot} _{\infty,\infty}$ gives the maximum $L_{\infty}$ norm of the rows. 

Finally, by analogy to the well-known irrepresentability condition for the lasso, which is necessary and sufficient for the lasso to recover support  \citep{wainwright2009sharp}, we need an edge-level irrepresentability condition.
\begin{ass}\label{ass:irrepresentability}
There exists some $0 < \rho \le1$ such that
\begin{equation*}
\max_{e\in S^c}\norm{\Gamma_{eS}(\Gamma_{SS})^{-1}}_1 \le 1-\rho.
\end{equation*}
\end{ass}

If we only want to obtain a Frobenius norm error bound, the following much weaker assumption  is sufficient,  without conditions on $\psi, \kappa_{\Sigma}, \kappa_{\Gamma}$ and Assumption~\ref{ass:irrepresentability}:
\begin{ass}\label{ass:spectralBound}
Let $\eta_{\min}(\Sigma)$ and $\eta_{\max}(\Sigma)$ be the minimum and maximum eigenvalues of $\Sigma$, respectively. There exists a constant $\bar{k}$ such that
\begin{equation*}
\frac{1}{\bar{k}} \le \eta_{\min}(\Sigma) \le \eta_{\max}(\Sigma) \le \bar{k}.
\end{equation*}
\end{ass}

Let $\hat{S} = \frac{1}{n}(X-\hat{M})^T(X-\hat{M}).$  We use $\hat{S}$ as input for the glasso estimator \eqref{eq:glassoObj}. We would expect that if $\hat{M}$ is an accurate estimate of $M$, then $\Theta$ can be accurately estimated by glasso.  The following theorem formalizes this intuition, using concentration properties of $\hat{S}$ around $\Sigma$ and the proof strategy of \cite{ravikumar2011high}.   We present the high-dimensional regime result here, with $p \ge n^{c_0}$ for some positive constant $c_0$, and state the more general  result which includes the lower-dimensional regime in Theorem~\ref{thm:two-stageMaxBound'} in the  Appendix, because the general form is more involved.

\begin{thm}\label{thm:two-stageMaxBound}
Under the conditions of Theorem~\ref{thm:MeanInitialError} and Assumption~\ref{ass:irrepresentability}, suppose there exists some positive constant $c_0$ such that $p \ge n^{c_0}$. If  $\log{p} = o(n)$ and $m_A= o(n)$, there exist some positive constants $C,c, c', c''$ that only depend on $c_0, b$ and $\sigma$, such that if $\hat{\Theta}$ is the output of Algorithm~\ref{algo:twostage} with $\alpha = n^{\frac{1+\delta}{3}}$, $\lambda = \frac{8}{\rho}\nu(n,p)$ where
\begin{align}\label{eq:nu}
\nu(n,p) :=C\sqrt{\frac{\log p}{n}}\max\Big(1, m_A n^{-\frac{1+4\delta}{6}}, \sqrt{m_A}n^{\frac{1-2\delta}{6}} , \sqrt{\frac{\log{p}}{n} }(\frac{m_A}{\Delta+1}+1)(\sqrt{m_A}n^{-\frac{1+\delta}{3}} +   1 ) \Big)
\end{align}
and $n$ sufficiently large so that  
 $$\nu(n,p)<\frac{1}{6(1+8/\rho)\psi\max\{\kappa_{\Sigma}\kappa_{\Gamma},  (1+8/\rho)\kappa_{\Sigma}^3\kappa_{\Gamma}^2   \}},$$
 then with probability at least $1-\exp(-c\log(p(n-m_A))) - \exp(-c'\log(pm_A)) - \exp(-c''\log{p})$,  then the estimate $\hat{\Theta}$ has the following properties: 
\begin{enumerate}
\item  Error bounds:  
\begin{eqnarray*}
\norm{\hat{\Theta}-\Theta}_{\infty} & \le &  2(1+8/\rho)\kappa_{\Gamma}\nu(n,p)
  \\
   \norm{\hat{\Theta}-\Theta}_F & \le  & 2(1+8/\rho)\kappa_{\Gamma}\nu(n,p)\sqrt{s+p}.  \label{eq:FrobeniusErrorBound}\\
   \norm{\hat{\Theta}-\Theta}~~ &\le& 2(1+8/\rho)\kappa_{\Gamma}\nu(n,p)\min(\sqrt{s+p}, \psi).    
 \end{eqnarray*}

 \item Support recovery: 
   $$S(\hat{\Theta}) \subset S(\Theta), $$
   and if additionally 
$\min_{(j,j')\in S(\Theta)}|\Theta_{jj'}| > 2(1+8/\rho)\kappa_{\Gamma}\nu(n,p),$
then $$S(\hat{\Theta}) = S(\Theta).$$

%
\end{enumerate}
\end{thm}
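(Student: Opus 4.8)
The plan is to reduce Theorem~\ref{thm:two-stageMaxBound} to a single bound on the element-wise maximum deviation $\norm{\hat S - \Sigma}_\infty$ and then run the primal--dual witness argument of \cite{ravikumar2011high} essentially verbatim. Write $W = X - M$, so the rows of $W$ are i.i.d.\ $\ncal(0,\Sigma)$, set $H_\alpha = (I+\alpha\lcal_s)^{-1}$, and note $E := \hat M - M = (H_\alpha - I)M + H_\alpha W$. Since $X - \hat M = W - E$, we get $\hat S - \Sigma = (\tfrac1n W^TW - \Sigma) - \tfrac1n(W^TE + E^TW) + \tfrac1n E^TE$, and substituting the formula for $E$ splits $W^TE$ and $E^TE$ further into: a purely deterministic mean-bias piece, controlled by $\tfrac1n\max_j\norm{(H_\alpha-I)M_{\cdot j}}_2^2$; a cross $M$--$W$ piece; and purely stochastic quadratic forms $\tfrac1n W^T H_\alpha W$ and $\tfrac1n W^T H_\alpha^2 W$, each with nonzero mean --- the former's mean $\tfrac{\tr H_\alpha}{n}\Sigma$ being a ``scale bias'' caused by the Laplacian smoothing shrinking the noise toward the low-frequency subspace of $\lcal_s$. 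I would bound the $\norm{\cdot}_\infty$ of every piece, show their sum is at most a constant times $\nu(n,p)$, and then invoke the deterministic half of the analysis in \cite{ravikumar2011high}: with $\lambda = (8/\rho)\nu(n,p)$ and the stated sample-size condition on $\psi,\kappa_\Sigma,\kappa_\Gamma,\rho$, this gives $\norm{\hat\Theta-\Theta}_\infty \le 2(1+8/\rho)\kappa_\Gamma\nu(n,p)$, the inclusion $S(\hat\Theta)\subseteq S(\Theta)$ from strict dual feasibility of the constructed witness, and hence --- ruling out false negatives via the $\min$-signal hypothesis --- support equality. The Frobenius and spectral bounds then follow mechanically, since $\hat\Theta-\Theta$ is supported on $S(\Theta)$, so $\norm{\hat\Theta-\Theta}_F \le \sqrt{s+p}\,\norm{\hat\Theta-\Theta}_\infty$, and for a symmetric matrix the spectral norm is at most both the Frobenius norm and the maximum row $\ell_1$-norm, which has at most $\psi$ nonzero terms.

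For the term-by-term $\norm{\cdot}_\infty$ bounds I would use: standard Gaussian concentration for sample covariances (a union bound over the $p^2$ entries, valid since $\log p = o(n)$) to get $\norm{\tfrac1n W^TW - \Sigma}_\infty \lesssim \sigma^2\sqrt{\log p/n}$, which supplies the ``$1$'' branch of the maximum in \eqref{eq:nu}; the column-wise version of Theorem~\ref{thm:MeanInitialError} for the deterministic mean-bias piece --- since \eqref{eq:individualRNC} and Assumption~\ref{ass:regressionCoef} are separable across the $p$ coordinates, the proof of that theorem already controls each $\norm{(H_\alpha-I)M_{\cdot j}}_2^2$ by a constant times $b^2[1 + m_A((1+\Delta)^{-2} + n^{(1-2\delta)/3})]$, so this piece is $n^{-1}$ times that and matches the last branch of \eqref{eq:nu} after elementary power arithmetic; a Gaussian tail bound against $\max_j\norm{(H_\alpha-I)M_{\cdot j}}_2$ for the cross piece, and Hanson--Wright concentration for the two quadratic forms around their means, with the key input $\tr H_\alpha \lesssim m_A + \sqrt{m_A}\,n^{(2-\delta)/3}$ obtained by splitting the eigenvalues of $\lcal_s$ at $\tau_{n-m_A} \ge 1/\sqrt{m_A}$ and using $\alpha = n^{(1+\delta)/3}$. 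These contribute the remaining $m_A n^{-(1+4\delta)/6}$ and $\sqrt{m_A}\,n^{(1-2\delta)/6}$ branches of the maximum. For the probability statement I would apply the high-probability events underlying Theorem~\ref{thm:MeanInitialError} column-wise, but only at the coarser polynomial-tail level that is all that is needed here; combined with a union bound over the $p$ coordinates and the regime $p \ge n^{c_0}$, this turns the $(n-m_A)r(\Sigma)$ and $m_A r(\Sigma)$ exponents into $\log(p(n-m_A))$ and $\log(pm_A)$, and together with the $\exp(-c\log p)$ tail of the sample-covariance term yields the stated probability.

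The main obstacle is the bookkeeping in the middle step: checking that the sum of these $\norm{\cdot}_\infty$ bounds is genuinely dominated by $\nu(n,p)$ --- that every power of $n$ and every factor of $m_A$, $\Delta = n^{(1+\delta)/3}\tau_{n-1}$ and $\sqrt{\log p/n}$ produced by the mean-estimation error lands inside one of the four branches of the maximum in \eqref{eq:nu}, uniformly over $p \ge n^{c_0}$ and all positive $\delta$. The one genuinely new ingredient, absent from the i.i.d.\ glasso analysis, is the scale bias: Laplacian smoothing does not merely add noise to $\hat S$ but systematically shrinks its overall scale (it annihilates the constant direction and damps the low-frequency ones), so $\hat S$ concentrates around $(1 - O(\tfrac{\tr H_\alpha}{n}))\Sigma$ rather than $\Sigma$, and one must verify that $\tr H_\alpha / n$ --- controlled precisely by the effective dimension $m_A$ and the choice $\alpha = n^{(1+\delta)/3}$ --- is small enough to be absorbed into $\nu(n,p)$. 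Once the bound on $\norm{\hat S-\Sigma}_\infty$ is in hand, the rest of the theorem is a black-box application of \cite{ravikumar2011high} and carries no additional conceptual content.
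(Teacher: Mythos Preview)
Your high-level plan matches the paper exactly: both bound $\norm{\hat S-\Sigma}_\infty$ by a constant multiple of $\nu(n,p)$ and then invoke the primal--dual witness analysis of \cite{ravikumar2011high} verbatim. The difference is in how that bound is obtained. The paper works entirely in the eigenbasis $U$ of $\lcal_s$: it proves an auxiliary lemma controlling both $\norm{\hat B-B}_\infty$ and the column-wise $\ell_1$ norm $\norm{\hat B-B}_{1,1}$ (where $\hat B=U^T\hat M$, $B=U^TM$), and then applies the elementary inequality $\norm{A^TB}_\infty\le\norm{A}_\infty\norm{B}_{1,1}$ to the three pieces of $\hat S-S$. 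This avoids Hanson--Wright entirely; the only stochastic input beyond the standard $\norm{S-\Sigma}_\infty$ bound is the Gaussian max inequality $\norm{U^TW}_\infty\le C\sigma\sqrt{\log(np)}$. Your route via Hanson--Wright on $\tfrac1nW^TH_\alpha W$ and $\tfrac1nW^TH_\alpha^2W$ is also viable and has the virtue of isolating the ``scale bias'' as a separate term, something the paper's approach hides inside its $\norm{\hat B-B}_{1,1}$ bound.

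There is, however, a small but genuine gap in your bookkeeping. Your estimate $\tr H_\alpha\le C(m_A+\sqrt{m_A}\,n^{(2-\delta)/3})$ is correct but too crude: the resulting scale-bias contribution $m_A/n$ does \emph{not} always land in branches 2 or 3 of $\nu(n,p)$ as you claim. Take for instance $m_A=n^{0.9}$, $\delta$ large, $\tau_{n-1}$ bounded away from zero, and $p=n$; then $m_A/n=n^{-0.1}$ dominates every branch of $\nu$. The fix is to refine the last $m_A$ eigenvalues of $H_\alpha$ using $\tau_i\ge\tau_{n-1}$ for $n-m_A<i<n$, giving $\tr H_\alpha\le C(\sqrt{m_A}\,n^{(2-\delta)/3}+\tfrac{m_A}{1+\Delta}+1)$; the new term $\tfrac{m_A}{n(1+\Delta)}$ then sits cleanly inside the fourth branch of $\nu$. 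This is precisely the role $\Delta$ plays in the paper's $\norm{\hat B-B}_{1,1}$ bound, so both routes ultimately need the algebraic connectivity $\tau_{n-1}$. Your discussion of the probability exponents is also slightly garbled --- the $\log(p(n-m_A))$ and $\log(pm_A)$ tails arise directly from entrywise Gaussian max bounds in the $\infty$-norm control of $\hat B-B$, not from ``coarsening'' the $(n-m_A)r(\Sigma)$ exponent of Theorem~\ref{thm:MeanInitialError}, which governs only the Frobenius bound and is irrelevant here --- but your final probability statement is correct.
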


\begin{rem}
As commonly assumed in literature, such as \cite{ravikumar2011high}, we will treat $\kappa_{\Gamma}$, $\kappa_{\Sigma}$ and $\rho$ to be constants or bounded.
\end{rem}
\begin{rem}
 The Frobenius norm bound does not need the strong irrepresentability assumption and does not depend on $\kappa_{\Gamma}$ and $\kappa_{\Sigma}$.  Following the proof strategy in \cite{rothman2008sparse}, this bound can be obtained under the much weaker Assumption~\ref{ass:spectralBound} instead. 
\end{rem}

The quantity in \eqref{eq:nu} involves four terms. The first term is from the inverse covariance estimation with a known $M$ (a standard glasso problem), and the other three terms come from having to estimate a cohesive $M$.   These three terms depend on both the cohesion rate and the effective dimension of the network. As expected, they all increase with $m_A$ and decrease with $\delta$. 
The last term also involves $\Delta$, which depends on both $\delta$ and the algebraic connectivity $\tau_{n-1}$.     To illustrate these trade-offs, we consider the implications of Theorem~\ref{thm:two-stageMaxBound} in a few special settings.

First, consider the setting of trivial cohesion, with $\delta = \infty$. In this case, the last three terms in \eqref{eq:nu} vanish.
\begin{coro}\label{coro:trivial}
Under the assumptions of Theorem~\ref{thm:two-stageMaxBound}, if $M$ is trivially cohesive and $\delta = \infty$, then all results of Theorem~\ref{thm:two-stageMaxBound} hold with 
$$\nu(n,p) = C\sqrt{\frac{\log p}{n}} , $$ 
and the estimated $\hat{\Theta}$ is consistent as long as $\log{p} = o(n).$
\end{coro}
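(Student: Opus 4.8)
The plan is to obtain Corollary~\ref{coro:trivial} as a direct specialization of Theorem~\ref{thm:two-stageMaxBound}: I would substitute $\delta = \infty$ into the definition \eqref{eq:nu} of $\nu(n,p)$, show that the last three arguments of the maximum are dominated by the constant $1$ for $n$ large, and then read off all the stated error and support-recovery conclusions verbatim from the theorem. No new probabilistic or optimization arguments are needed; everything reduces to bookkeeping on the four terms inside $\max(\cdot)$.

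Concretely, I would treat the four quantities in \eqref{eq:nu} one at a time. The first is the constant $1$. The second, $m_A n^{-\frac{1+4\delta}{6}}$, and the third, $\sqrt{m_A}\,n^{\frac{1-2\delta}{6}}$, both carry a power of $n$ whose exponent diverges to $-\infty$ as $\delta \to \infty$; since $1 \le m_A \le n-1$, both vanish for large $n$. For the fourth term, I would use that $\Delta = n^{\frac{1+\delta}{3}}\tau_{n-1}$ equals $+\infty$ when $\delta = \infty$ (the network is connected, so $\tau_{n-1} > 0$), whence $\frac{m_A}{\Delta+1} = 0$ and $\sqrt{m_A}\,n^{-\frac{1+\delta}{3}} = 0$, so that the fourth term collapses to $\sqrt{\log p/n}$, which is $<1$ once $n$ is large because $\log p = o(n)$. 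Hence $\max(\cdot) = 1$ for $n$ large, and $\nu(n,p) = C\sqrt{\log p/n}$. Feeding this $\nu(n,p)$ into the error bounds of Theorem~\ref{thm:two-stageMaxBound} gives $\norm{\hat{\Theta}-\Theta}_{\infty} \le 2(1+8/\rho)\kappa_{\Gamma}C\sqrt{\log p/n}$ together with the analogous Frobenius and spectral bounds and the support-recovery statements; the element-wise (and spectral) bounds tend to zero precisely when $\log p = o(n)$. The probability bound $1-\exp(-c\log(p(n-m_A))) - \exp(-c'\log(pm_A)) - \exp(-c''\log p)$ tends to $1$ since $m_A \ge 1$, $m_A = o(n)$, and $p \to \infty$, so consistency holds under $\log p = o(n)$.

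I do not expect any genuinely hard step. The only points requiring a little care are (i) interpreting $\delta = \infty$ consistently as the limiting regime that sends the relevant powers of $n$ to $0$ and $\Delta$ to $+\infty$ (which is also what makes $\alpha = n^{\frac{1+\delta}{3}}$ drive the Step~1 estimator to the common-mean projection onto the null space of $\lcal_s$, consistent with trivial cohesion); and (ii) verifying that the residual fourth term $\sqrt{\log p/n}$ is dominated by the constant $1$ under the standing assumption $\log p = o(n)$, so that it does not change the rate. The side condition $\nu(n,p) < \frac{1}{6(1+8/\rho)\psi\max\{\kappa_{\Sigma}\kappa_{\Gamma},\,(1+8/\rho)\kappa_{\Sigma}^3\kappa_{\Gamma}^2\}}$ of the theorem is inherited by hypothesis and is in any case eventually satisfied, since $\nu(n,p)\to 0$ while the right-hand side stays bounded away from zero under the convention that $\kappa_{\Gamma},\kappa_{\Sigma},\rho$ are treated as constants.
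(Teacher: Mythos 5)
Your proposal is correct and matches the paper's treatment: the corollary is obtained exactly by substituting $\delta=\infty$ into \eqref{eq:nu}, noting that the last three terms are dominated by the constant $1$ (the paper states this in one sentence, and you supply the same bookkeeping, handling the fourth term slightly more carefully by observing it collapses to $\sqrt{\log p/n}<1$ under $\log p = o(n)$), and then reading off the conclusions of Theorem~\ref{thm:two-stageMaxBound}. No further argument is needed.
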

This result coincides with the standard glasso error bound from \cite{ravikumar2011high}.   Thus when $M$ does not vary, we do not lose anything in the rate by using GNC-lasso instead of glasso.

Another illustrative setting is the case of bounded effective dimension $m_A$.   Then the third term in \eqref{eq:nu} dominates.  
\begin{coro}\label{coro:trivial}
Under the assumptions of Theorem~\ref{thm:two-stageMaxBound}, if the network has a bounded effective dimension $m_A$, then all the results of Theorem~\ref{thm:two-stageMaxBound} hold with 
$$\nu(n,p) = C\sqrt{\frac{\log p}{n}}n^{\frac{\max(1-2\delta,0)}{6}}.$$ 
In particular, if $\delta \ge 1/2$, $\hat{\Theta}$ is consistent as long as $\log{p} = o(n)$.
\end{coro}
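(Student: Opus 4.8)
The plan is to substitute the hypothesis $m_A = O(1)$ directly into the definition of $\nu(n,p)$ in \eqref{eq:nu} and identify which of the four terms inside the maximum dominates for large $n$. I would first dispose of the fourth term: since $\tau_{n-1}\ge 0$ we have $\Delta = n^{(1+\delta)/3}\tau_{n-1}\ge 0$, hence $\frac{m_A}{\Delta+1}+1 \le m_A+1 = O(1)$ and likewise $\sqrt{m_A}\,n^{-(1+\delta)/3}+1 = O(1)$, so the fourth term is $O\!\left(\sqrt{\log p/n}\right)=o(1)$ because $\log p = o(n)$. The second term, $m_A n^{-(1+4\delta)/6} = O(n^{-(1+4\delta)/6})$, is also $o(1)$ for every $\delta>0$. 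Thus for all sufficiently large $n$ both the second and fourth terms are below the first term, $1$, and never enter the maximum.

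It then remains to compare the first term, $1$, with the third term $\sqrt{m_A}\,n^{(1-2\delta)/6} = O\!\left(n^{(1-2\delta)/6}\right)$. If $\delta < 1/2$ the exponent is positive and the third term dominates, contributing the factor $n^{(1-2\delta)/6}$; if $\delta \ge 1/2$ the exponent is $\le 0$, the third term is $O(1)$, and the maximum equals $1$. In both cases the maximum is of order $n^{\max(1-2\delta,0)/6}$, and absorbing the $O(1)$ constants (including $\sqrt{m_A}$) into $C$ gives $\nu(n,p) = C\sqrt{\log p/n}\,n^{\max(1-2\delta,0)/6}$.

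For the consistency claim, note that when $\delta \ge 1/2$ the exponent $\max(1-2\delta,0)/6 = 0$, so $\nu(n,p) = C\sqrt{\log p/n}$, which tends to $0$ exactly when $\log p = o(n)$. Under the running convention that $\kappa_\Gamma,\kappa_\Sigma,\rho$ (and $\psi$) are bounded, the smallness threshold on $\nu(n,p)$ required by Theorem~\ref{thm:two-stageMaxBound} holds for all large $n$, and the probability in that theorem tends to $1$ because $p\ge n^{c_0}$ and $m_A = o(n)$; the error bound $\norm{\hat\Theta-\Theta}_\infty \le 2(1+8/\rho)\kappa_\Gamma\nu(n,p)\to 0$ and the support-recovery conclusions then follow directly from Theorem~\ref{thm:two-stageMaxBound}.

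The only real subtlety — the step I would check most carefully — is confirming that the fourth term in \eqref{eq:nu} is genuinely negligible; this rests entirely on $\Delta \ge 0$ (so $m_A/(\Delta+1)$ cannot grow) together with $\log p = o(n)$ (which kills the extra $\sqrt{\log p/n}$ prefactor). The rest is routine bookkeeping of exponents.
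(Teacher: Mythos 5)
Your proposal is correct and follows exactly the route the paper intends: the paper gives no separate proof of this corollary, remarking only that with bounded $m_A$ "the third term in \eqref{eq:nu} dominates," and your term-by-term comparison (using $\Delta\ge 0$ and $\log p = o(n)$ to kill the fourth term, and the sign of $(1-2\delta)/6$ to decide between the first and third) is the intended bookkeeping. The consistency claim for $\delta\ge 1/2$ then follows from Theorem~\ref{thm:two-stageMaxBound} as you state.
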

This corollary indicates that if the network structure is favorable to cohesion, the GNC-lasso does not sacrifice anything in the rate up to a certain level of nontrivial cohesion.

Finally, consider a less favorable example in which $\log{p} = o(n)$ may no longer be  enough for consistency.    Recall Proposition~\ref{prop:lattice} indicates $m_A  = O(n^{2/3})$ for lattice networks, and suppose the cohesive can be highly nontrivial.
\begin{coro}[Consistency on a $\sqrt{n}\times \sqrt{n}$ lattice]\label{coro:lattice}
Suppose the conditions of Theorem~\ref{thm:two-stageMaxBound} hold and $m_A\le n^{2/3}$.  The GNC-lasso estimate $\hat \Theta$ is consistent  if  $\delta > 3/8$ and
$$\log{p} = o(n^{\min(1, 8\delta-3 )/3}).$$
In particular, if $\delta = 1/2$, it is necessary to have $\log p = o(n^{1/3})$ for consistency.
\end{coro}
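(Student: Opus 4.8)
\emph{Proof strategy.} This is a corollary of Theorem~\ref{thm:two-stageMaxBound}, so the plan is to substitute the lattice bound $m_A \le n^{2/3}$ from Proposition~\ref{prop:lattice} into the rate $\nu(n,p)$ of \eqref{eq:nu} and determine when $\nu(n,p)\to 0$. Here ``consistency'' of $\hat\Theta$ means exactly that the error bounds of Theorem~\ref{thm:two-stageMaxBound} vanish; since those bounds are $\nu(n,p)$ times quantities ($\kappa_\Gamma,\kappa_\Sigma,\rho,\psi$) we treat as bounded, and since the threshold requirement $\nu(n,p) < \{6(1+8/\rho)\psi\max\{\cdots\}\}^{-1}$ together with $m_A=o(n)$ and $\log p=o(n)$ all hold for large $n$ once $\nu(n,p)\to 0$ (the target exponent on $n$ never exceeds $1/3$, and $m_A=n^{2/3}=o(n)$, while $p\ge n^{c_0}$ is compatible with the stated $\log p$ condition), the entire proof reduces to an asymptotic analysis of $\nu(n,p)$ at $m_A=n^{2/3}$.

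Concretely, I would plug $m_A=n^{2/3}$ into the four-way maximum in \eqref{eq:nu} and reduce each entry to a power of $n$. The first entry is $1$; the second and third become $n^{(3-4\delta)/6}$ and $n^{(3-2\delta)/6}$, so the second is always dominated by the third and may be dropped. For the fourth entry I would note that $\sqrt{m_A}\,n^{-(1+\delta)/3}=n^{-\delta/3}\le 1$, making that factor $\Theta(1)$; and that for the grid graph $P_{\sqrt n}\,\square\,P_{\sqrt n}$ the algebraic connectivity satisfies $\tau_{n-1}\asymp n^{-1}$ (a one-line eigenvalue computation), so $\Delta=n^{(1+\delta)/3}\tau_{n-1}\to 0$ and $m_A/(\Delta+1)+1\asymp n^{2/3}$ --- the same $O(n^{2/3})$ bound follows even from the crude $\Delta\ge 0$. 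After these substitutions $\nu(n,p)$ is $\sqrt{\log p/n}$ times a maximum of two explicit power-of-$n$ expressions, one of which carries an extra $\sqrt{\log p/n}$. Requiring each to be $o(1)$ yields two conditions on the growth of $\log p$; intersecting them and tracking which binds as $\delta$ varies produces the threshold $\delta>3/8$ and the bound $\log p=o(n^{\min(1,\,8\delta-3)/3})$, and at $\delta=1/2$ the two competing conditions coincide at $\log p=o(n^{1/3})$, which I would verify directly.

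The work is bookkeeping rather than conceptual, but it has two delicate points. The first is getting the exponent arithmetic in \eqref{eq:nu} exactly right and correctly identifying, as a function of $\delta$ and the growth rate of $\log p$, which of the competing terms in $\nu(n,p)$ dominates --- this is precisely what fixes the cutoff $\delta>3/8$ and the shape of the final exponent $\min(1,8\delta-3)/3$. The second is confirming that the $\Delta$-dependent factor truly contributes nothing for the lattice, i.e.\ that $\Delta\to 0$, so that $m_A/(\Delta+1)$ is not accidentally smaller than $m_A$. Once both are settled, $\nu(n,p)\to 0$ under the stated hypotheses and Theorem~\ref{thm:two-stageMaxBound} delivers the conclusion.
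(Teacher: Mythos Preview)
Your strategy is exactly the right one, and it is essentially what the paper intends (the corollary is stated without proof, as a direct consequence of Theorem~\ref{thm:two-stageMaxBound}): substitute $m_A=n^{2/3}$ into \eqref{eq:nu}, control $\Delta$, and read off when $\nu(n,p)\to 0$. Your handling of the four terms is also correct: Term~2 is dominated by Term~3, $\sqrt{m_A}\,n^{-(1+\delta)/3}=n^{-\delta/3}\le 1$, and for the lattice $\tau_{n-1}\asymp n^{-1}$ so $\Delta\to 0$ and the $\Delta$-factor contributes nothing.

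The gap is in the final arithmetic claim. If you actually carry out the substitution, Term~3 gives
\[
\sqrt{\tfrac{\log p}{n}}\cdot n^{(3-2\delta)/6}=\sqrt{\log p}\cdot n^{-\delta/3},
\]
which forces $\log p=o(n^{2\delta/3})$; Term~4 gives $\tfrac{\log p}{n}\cdot n^{2/3}$, which forces $\log p=o(n^{1/3})$. Intersecting these yields
\[
\log p=o\bigl(n^{\min(2\delta,1)/3}\bigr),
\]
not $\log p=o(n^{\min(1,8\delta-3)/3})$, and no threshold at $\delta=3/8$ appears (the exponent $2\delta/3$ is positive for every $\delta>0$). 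The two formulas coincide for $\delta\ge 1/2$, and in particular at $\delta=1/2$ both give $\log p=o(n^{1/3})$, so your verification of the ``In particular'' clause goes through. For $3/8<\delta<1/2$ the corollary's stated condition is strictly \emph{stronger} than what your computation produces, so it remains a valid sufficient condition and the corollary follows from your analysis --- but you will not ``derive'' the exponent $(8\delta-3)/3$ or the cutoff $\delta>3/8$ from the bookkeeping you describe. Either the corollary is deliberately stated with a looser sufficient condition, or its exponent is a slip; in either case you should present the sharper bound you actually obtain and then note that the printed condition implies it.
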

The corollary suggests that consistency under some regimes of nontrivial cohesion requires strictly stronger conditions than $\log p = o(n)$.   Moreover, if cohesion is too weak (say, $\delta \le 3/8$), consistency cannot be guaranteed by these results.

\section{Simulation studies}\label{sec:sim}
We evaluate the new GNC-lasso method and compare it to some baseline alternative methods in simulations based on both synthetic and real networks.   The synthetic network we use is a $20\times 20$ lattice network with $n = 400$ nodes and  a vector with dimension $p = 500$ observed at each node; this setting satisfies the assumptions made in our theoretical analysis.    We also test our method on the coauthorship network shown in Figure~\ref{fig:CoauthorNet}, which will be described in Section~\ref{sec:app}. This network has $n=635$ nodes at $p=800$ observed features at each node.

\paragraph{Noise settings:} The conditional dependence graph $\gcal$ in the Gaussian graphical model is generated as an Erd\"{o}s-Renyi graph on $p$ nodes,  with each node pair connecting independently with probability 0.01. The Gaussian noise is then drawn  from $\ncal(0, \Sigma)$ where $\Theta = \Sigma^{-1} =  a(0.3A_{\gcal} + (0.3e_{\gcal} + 0.1)  I)$,  where $A_{\gcal}$ is the adjacency matrix of $\gcal$,$e_{\gcal}$ is the absolute value of the smallest eigenvalue of $A_{\gcal}$ and the scalar $a$ is set to ensure the resulting $\Sigma$ has all diagonal elements equal to 1. This procedure is implemented in \cite{zhao2012huge}.

\paragraph{Mean settings:}    We set up the mean to allow for varying degrees of cohesion.  each row $M_{\cdot j}, j=1,2, \cdots, p$ as
\begin{equation}\label{eq:M-gen}
M_{\cdot, j}  = \sqrt{t}\sqrt{n}u^{(j)} + \sqrt{1-t} \mbone
\end{equation}
where $u^{(j)}$ is randomly sampled with replacement from the eigenvectors of the Laplacian $u_{n-1}, n_{n-2}, \cdots, u_{n-k}$ for some integer $k$ and $t$ is the mixing proportion.  We then rescale $M$ so the signal-to-noise ratio becomes 1.6, so that the problem remains solvable to good accuracy by proper methods but is not too easy to solve by naive methods.   In a connected network, the constant vector is trivially cohesive. The cohesion becomes increasingly nontrivial as one increases $k$ and $t$. For example, $t=0$ gives identical mean vectors for all observations and as $t$ increases, the means become more different.  The integer $k$ is chosen to give a reasonably eigen-gap in eigenvalues, with details in subsequent paragraphs.

We evaluate performance on recovering the true underlying graph by the receiver operating characteristic (ROC) curve, along a graph estimation path obtained by varying $\lambda$. An ROC curve illustrates the tradeoff between the true positive rate  ($\text{TPR}$) and the false positive rate ($\text{FPR}$), defined as 
\begin{align*}
 \text{TPR} &  = \frac{\#\{(j,j'):j\ne j', \Theta_{jj'}\ne 0, \hat{\Theta}_{jj'}\ne 0\}}{\#\{(j,j'):j\ne j', \Theta_{jj'}\ne 0\}} \\
  \text{FPR} & = \frac{\#\{(j,j'):j\ne j', \Theta_{jj'}= 0, \hat{\Theta}_{jj'}\ne 0\}}{\#\{(j,j'):j\ne j', \Theta_{jj'}= 0\}}.\end{align*}
We also evaluate the methods on the estimation error of $M$, measured as $\norm{\hat{M}-M}_{\infty} = \max_{ij}|\hat{M}_{ij}-M_{ij}|$ for the worst-case entry-wise recovery and $\norm{\hat{M}-M}_{2,\infty} = \max_{i}\norm{\hat{M}_{i\cdot}-M_{i\cdot}}$ for the worst-case mean vector error for each observation.

As a baseline comparison, we include the standard glasso which does not use the network information at all.  We also compare to a natural approach to incorporating heterogeneity without using the network;  we do this by applying $K$-means clustering to group observations into clusters, estimating a common mean for each cluster, and applying glasso after centering each group with its own mean.   This approach requires estimating the number of clusters. However, the widely used gap method \citep{tibshirani2001estimating} always suggests only one cluster in our experiments, which defaults back to glasso.   Instead, we picked the number of clusters to give the  highest area under the ROC curve; we call this result ``oracle cluster+glasso" to emphasize that it will not be feasible in practice.    For GNC-lasso, we report both the oracle tuning (the highest AUC, not available in practice) and 10-fold cross-validation based tuning, which we recommend in practice.  The oracle methods serve as benchmarks for the best possible performance available from each method.

\subsection{Performance as a function of cohesion}\label{secsec:cohesion-eval}

\begin{figure}[H]
\centering
\begin{subfigure}{.33\textwidth}
  \centering
  \includegraphics[width=\linewidth]{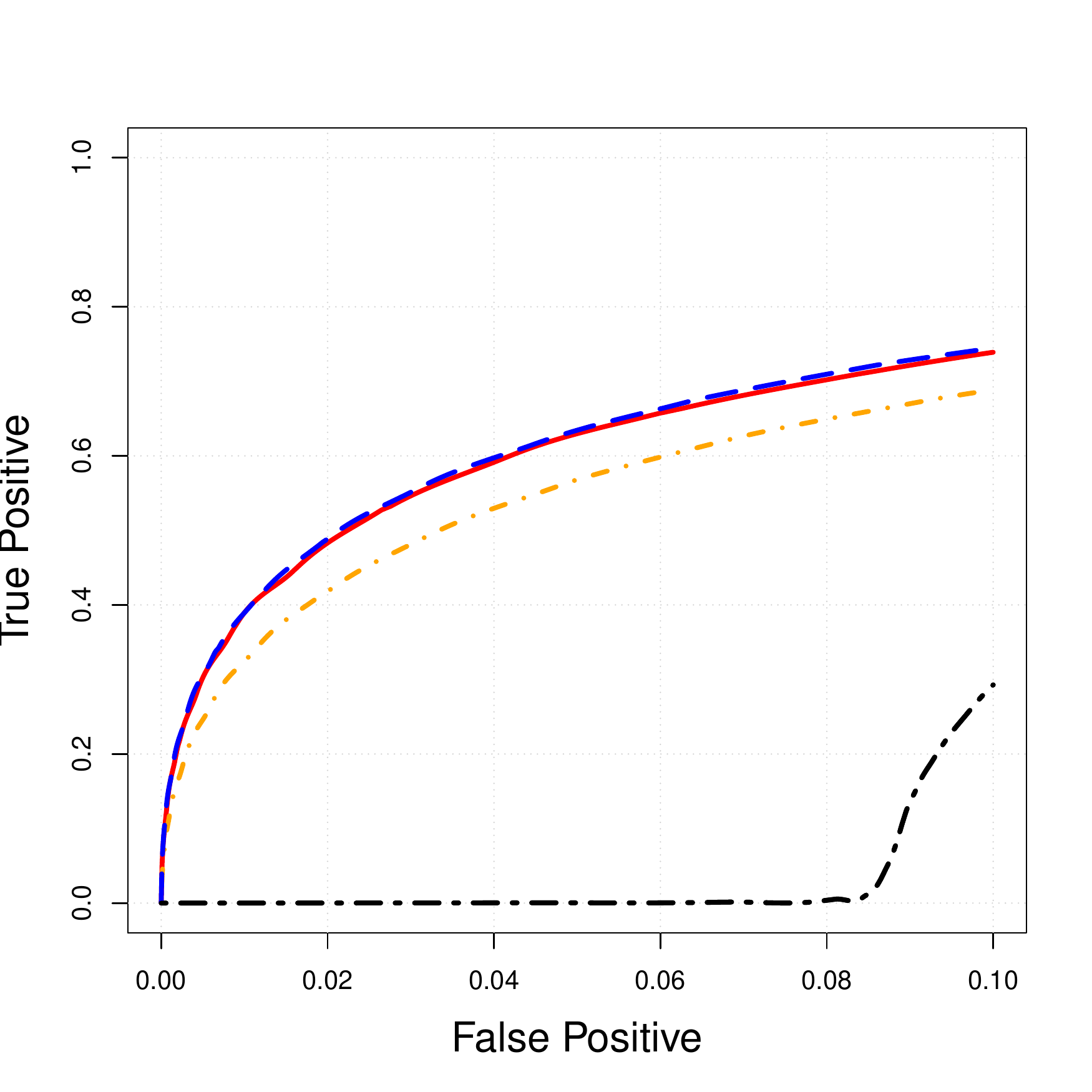}
  \caption{$t=0.1$}
\end{subfigure}%
\begin{subfigure}{.33\textwidth}
  \centering
  \includegraphics[width=\linewidth]{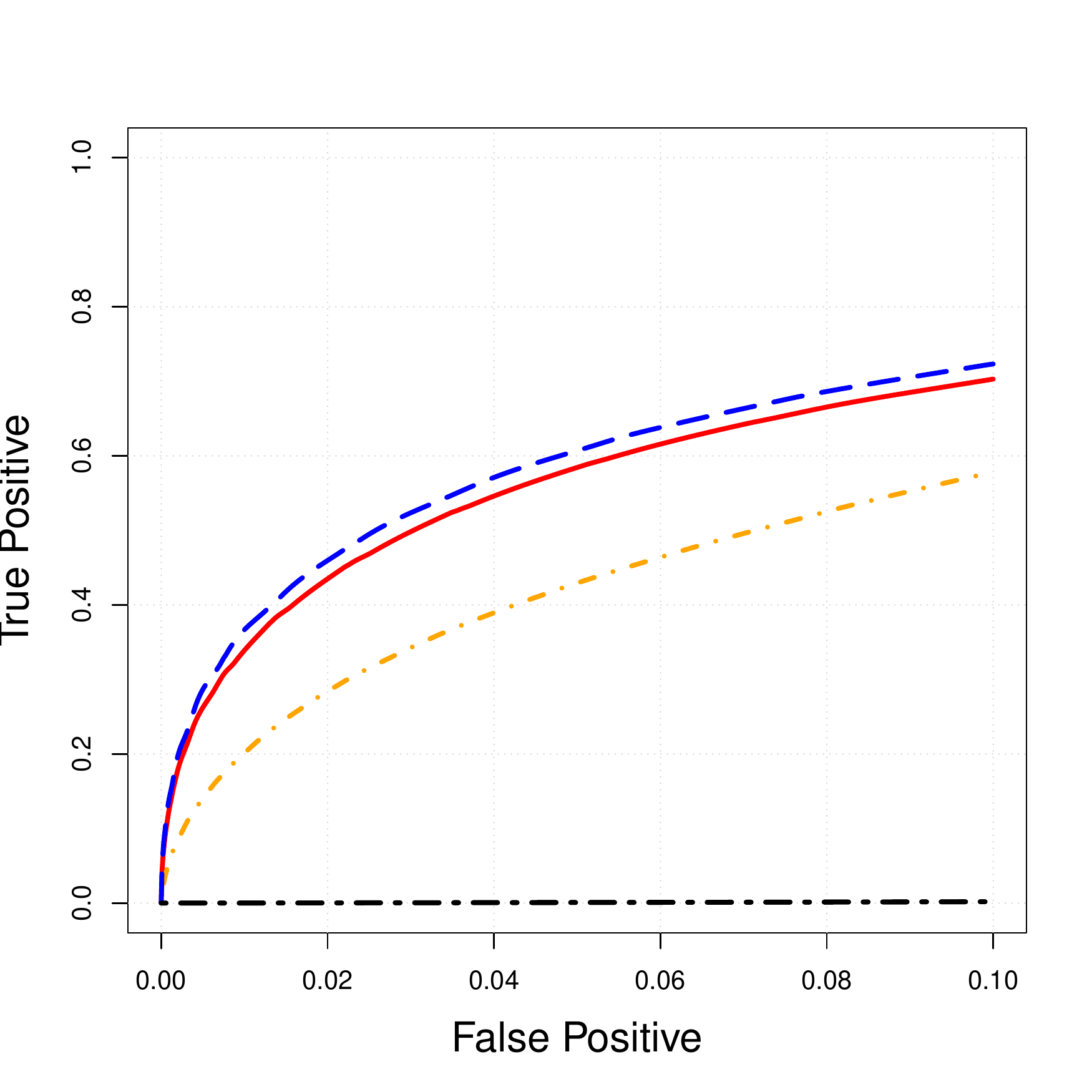}
  \caption{$t=0.5$}
\end{subfigure}
\begin{subfigure}{.33\textwidth}
  \centering
  \includegraphics[width=\linewidth]{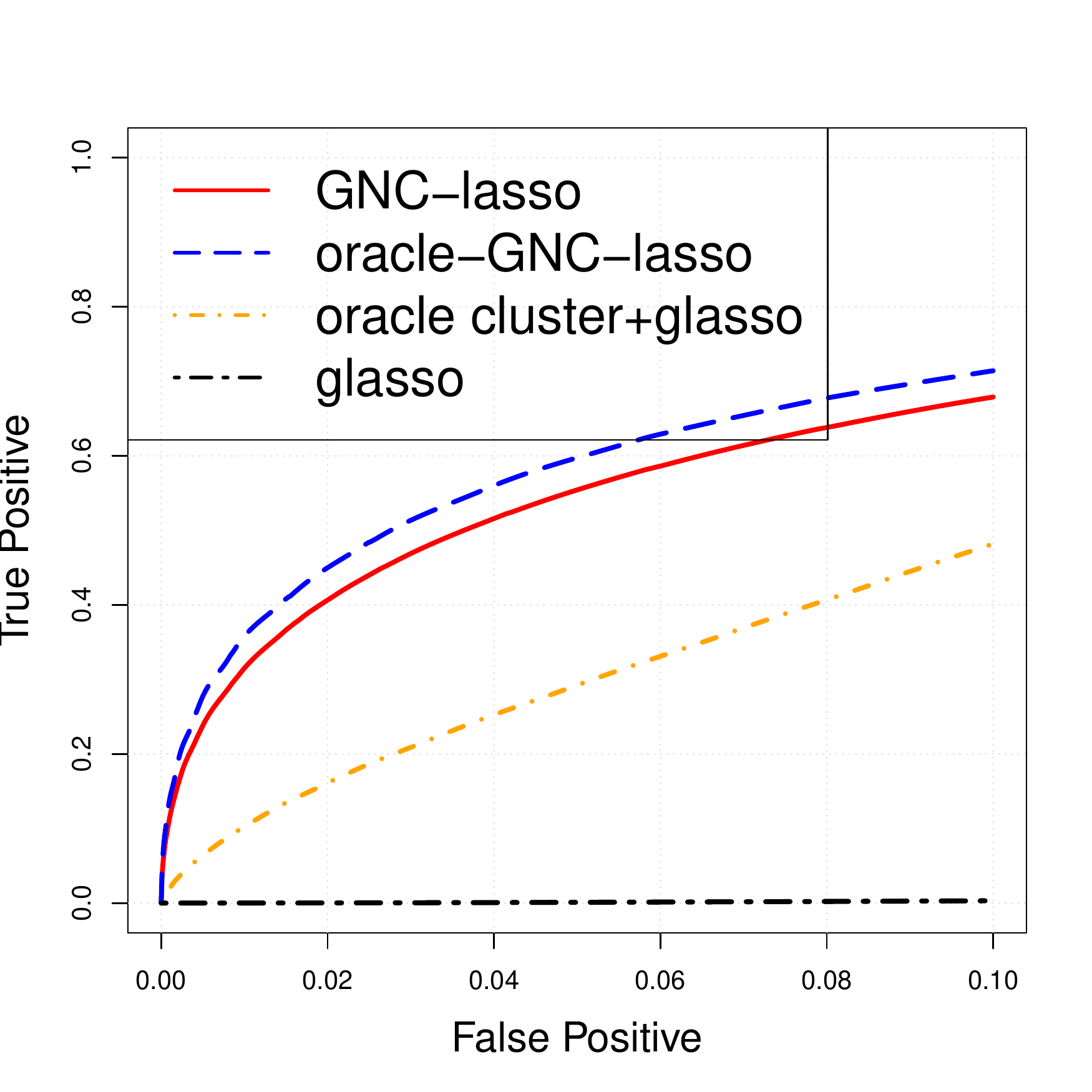}
  \caption{$t=1$}
\end{subfigure}

\caption{Graph recovery ROC curves under three different levels of cohesion corresponding to  $t=0.1,  0.5, 1$, for the lattice network ($n=400$, $p=500$). }
\label{fig:sim1}
\end{figure}

First, we vary the level of cohesion in the mean, by setting $t$ to 0.1, 0.5, or 1, corresponding to strong, moderate, or weak cohesion.     Figure~\ref{fig:sim1} shows the ROC curves of the four methods obtained from 100 independent replications for the lattice network.  Glasso fails completely even when the model has only a slight amount of heterogeneity ($t=0.1$).    Numerically, we also observed that heterogeneity slows down convergence for glasso.   The oracle cluster+glasso improves on glasso as it can accommodate some heterogeneity, but is not comparable to GNC-lasso.  As $t$ increases, the GNC-lasso maintains similar levels of performance by adapting to varying heterogeneity, while the oracle cluster+glasso degrades quickly, since for more heterogeneous means the network provides much more reliable information than $K$-means clustering on the observations.   We also observed that cross-validation is similar to oracle tuning for GNC-lasso, giving it another advantage.    Figure~\ref{fig:sim2} shows the results for the same setting but on the real coauthorship network instead of the lattice.    The results are very similar to what we obtained on the lattice, giving further support to GNC-lasso practical relevance.

We also compare  estimation errors in $\hat{M}$  in Table~\ref{tab:M-estimate}.  The oracle GNC-lasso is almost always the best, except for one setting where it is inferior to the CV-tuned GNC-lasso (note that the ``oracle" is defined by the  AUC and is thus not guaranteed to produce the lowest error in estimating $M$).    For the lattice network, cluster + glasso does comparably to GNC-lasso (sometimes better, and sometimes worse).  For the coauthorship network, a more realistic setting, GNC-lasso is always comparable to the oracle and substantially better than both alternatives that do not use the network information.

\begin{figure}[H]
\centering
\begin{subfigure}{.33\textwidth}
  \centering
  \includegraphics[width=\linewidth]{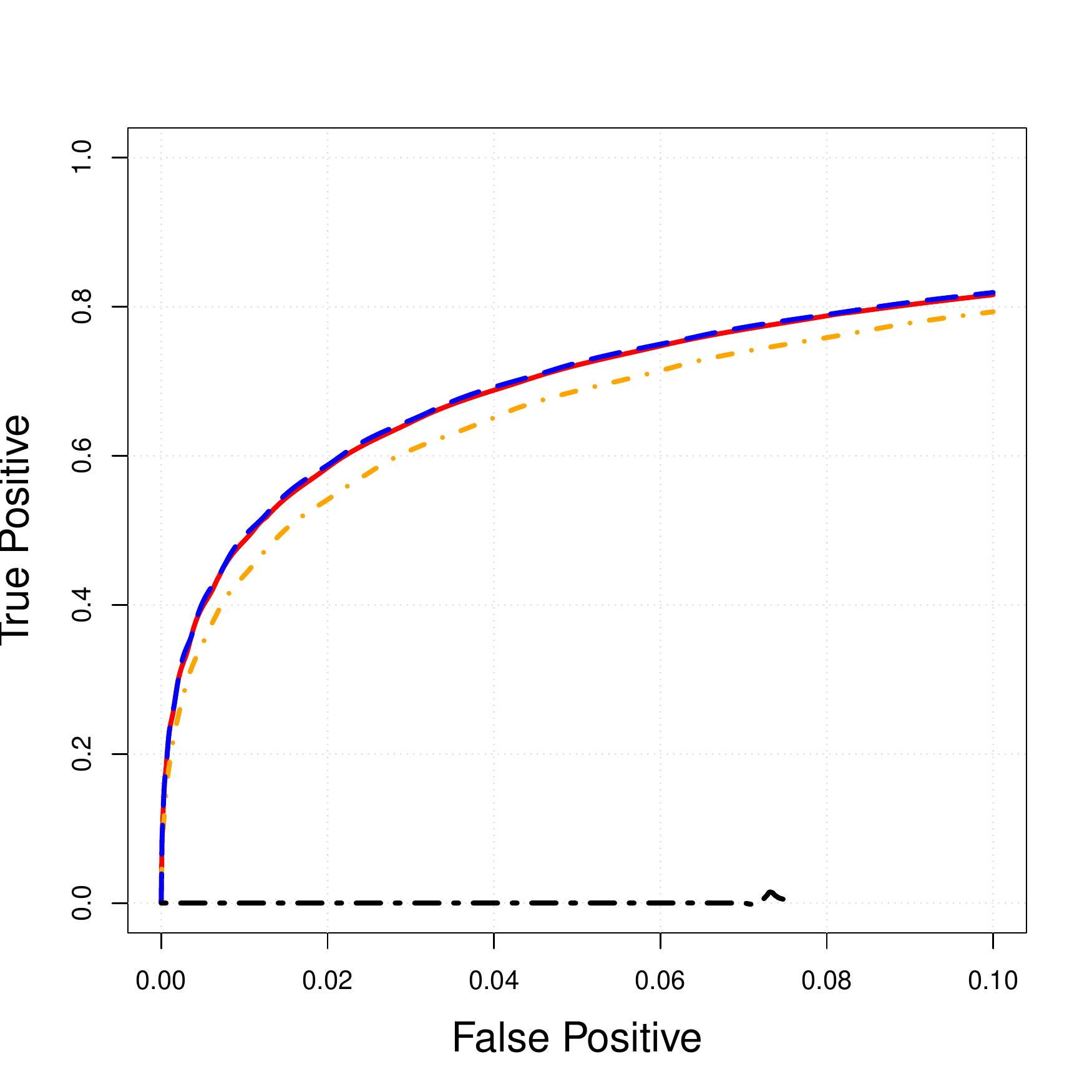}
  \caption{$t=0.1$}
\end{subfigure}%
\begin{subfigure}{.33\textwidth}
  \centering
  \includegraphics[width=\linewidth]{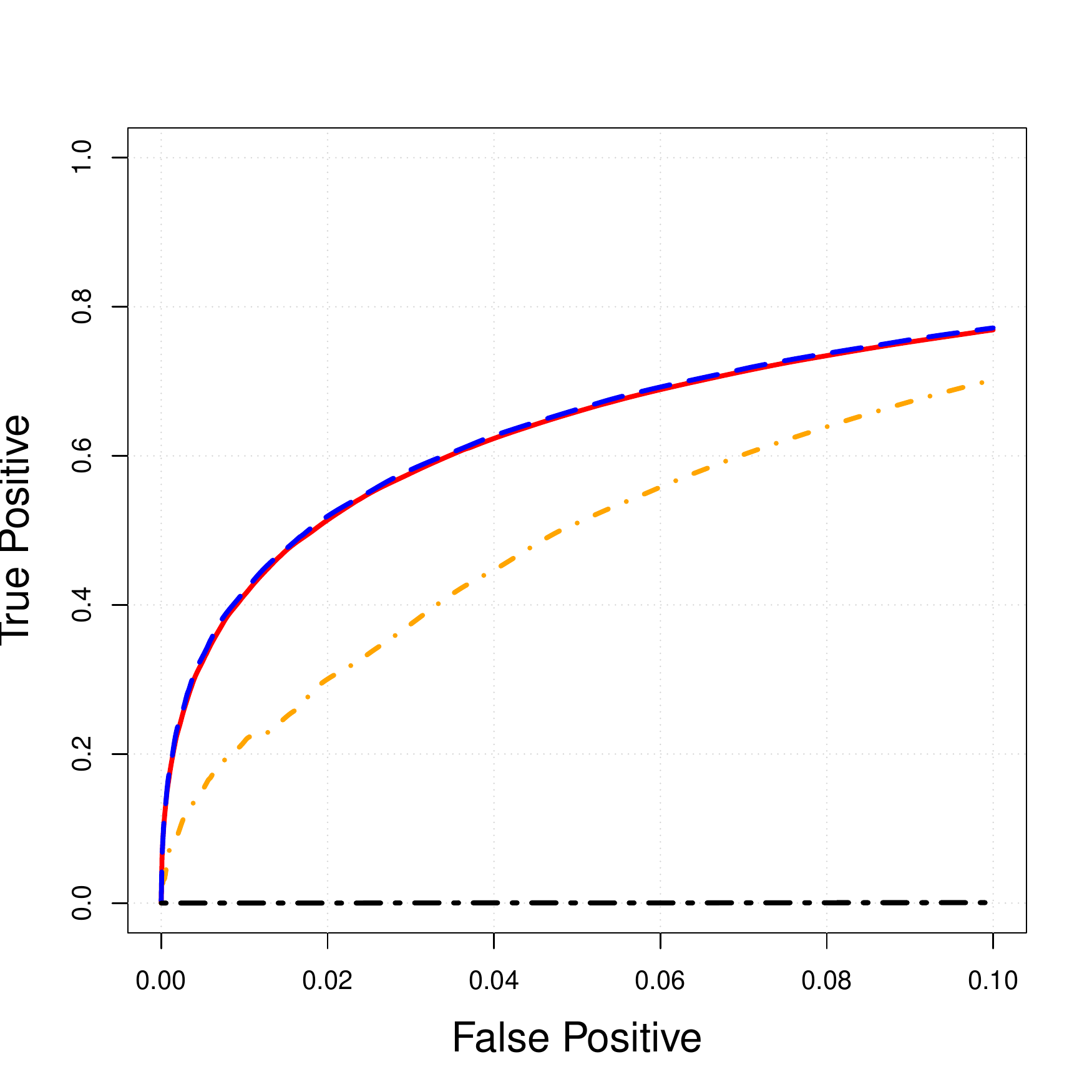}
  \caption{$t=0.5$}
\end{subfigure}
\begin{subfigure}{.33\textwidth}
  \centering
  \includegraphics[width=\linewidth]{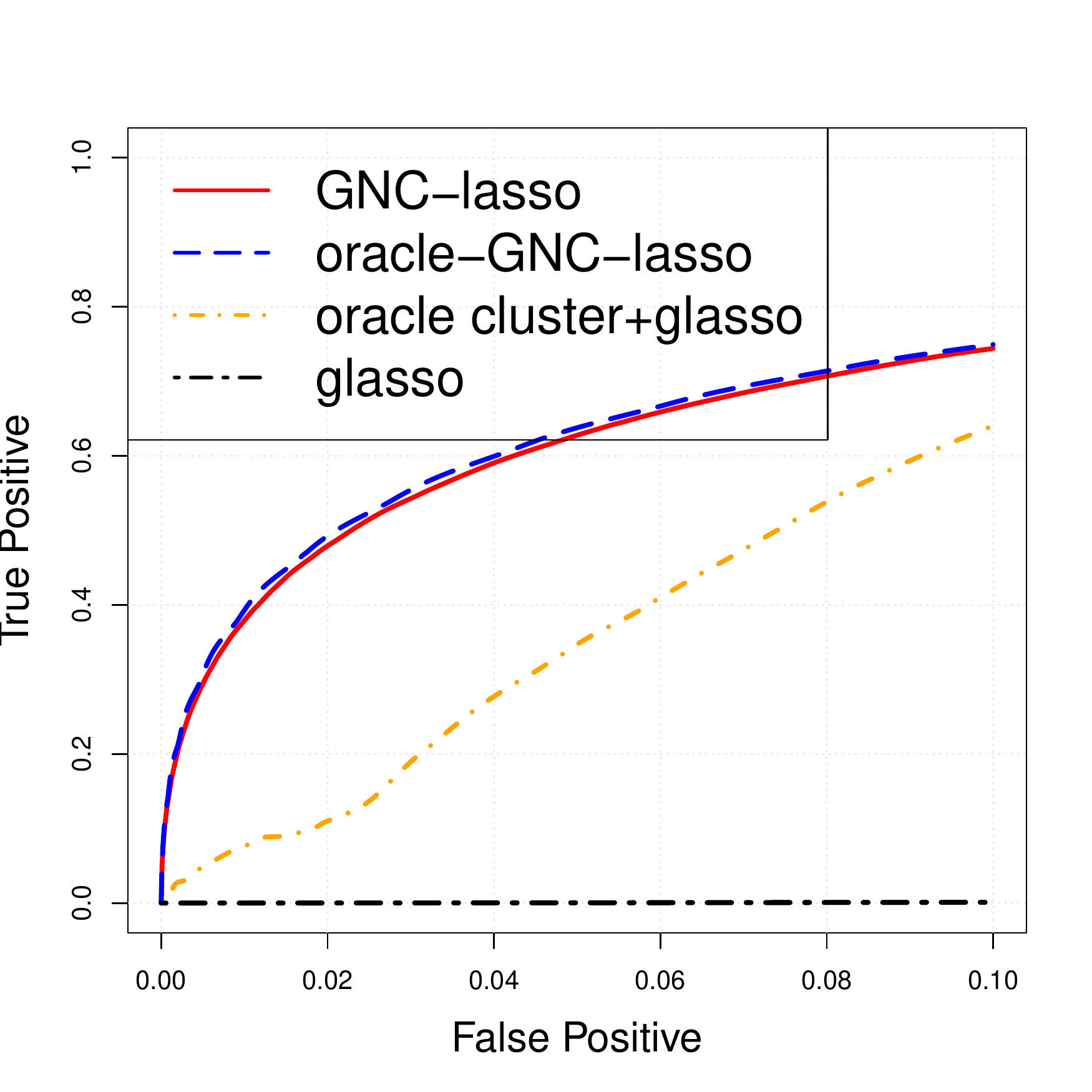}
  \caption{$t=1$}
\end{subfigure}

\caption{Graph recovery ROC curves under three different levels of cohesion corresponding to  $t=0.1,  0.5, 1$, for the coauthorship network ($n=635$, $p=800$).}
\label{fig:sim2}
\end{figure}

\begin{table}[ht]
\centering
\caption{Mean estimation errors for the four methods, averaged over 100 replications, with the lowest error in each configuration indicated in bold. }
\label{tab:M-estimate}
\begin{tabular}{r|r|rrr|rrr}
  \hline
  & & \multicolumn{3}{c}{$\norm{\hat{M}-M}_{\infty}$}& \multicolumn{3}{c}{$\norm{\hat{M}-M}_{2,\infty}$}\\
  \hline
 network & method & $t=0.1$ & 0.5 & 1 & $t=0.1$ & 0.5 & 1 \\ 
  \hline
\multirow{ 4}{*}{lattice} & glasso& 0.358 & 0.746 & 1.037 & 5.819 & 12.985 & 18.357 \\ 
&oracle cluster+glasso & 0.493 & 0.539 & 0.565 & 3.293 & 3.639 & 4.118 \\ 
&oracle GNC-lasso & {\bf 0.328} & {\bf 0.520} & {\bf 0.526} & {\bf 2.054} & {\bf 3.247} & {\bf 3.287} \\ 
& GNC-lasso & 0.419 & 0.669 & 0.820 & 2.619 & 4.105 & 4.874 \\ 
   \hline
\multirow{ 4}{*}{coauthorship} & glasso & 1.540 & 3.401 & 4.795 & 25.072 & 57.655 & 78.426 \\ 
&oracle cluster+glasso & 0.724 & 1.077 & 1.342 & 7.051 & 13.078 & 16.962 \\ 
&oracle GNC-lasso & {\bf 0.710} & {\bf 0.860} & {\bf 0.917} & {\bf 6.400} & 7.404 & {\bf 7.420} \\ 
&GNC-lasso & 0.717 & 0.878 & 0.942 & 6.430 & {\bf 7.037} & 7.436 \\ 
   \hline
\end{tabular}
\end{table}

\subsection{Performance as a function of sparsity}

A potential challenge for GNC-lasso is a sparse network that does not provide much information, and in particular a network with multiple connected components.   
As a simple test of what happens when a network has multiple components, we split the $20\times 20$ lattice  into either four disconnected $10\times 10$ lattice subnetworks, or 16 disconnected $5\times 5$ lattice subnetworks, by removing all edges between these subnetworks.  The data size ($n=400, p=500$) and the data generating mechanism remain the same; we set $t = 0.5$ for a moderate degree of cohesion. The only difference here is when there are $K$ connected components in the network, the last $K$ eigenvectors of the Laplacian $u_n, \cdots, u_{n-K+1}$ are all constant within each connected component (and thus trivially cohesive). Therefore, in the case of 4 disconnected subnetworks, we randomly sample the last $k=12$ eigenvectors to generate $M$ in \eqref{eq:M-gen} while in the case of 16 disconnected subnetworks, we set $k=48$.   The effective dimensions $m_A$ are 30, 32, and 48, respectively.

Similarly, we also split the coauthorship network into two or four subnetworks by applying hierarchical clustering in \cite{li2018hierarchical}, which is designed to separate high-level network communities (if they exist).    We then remove all edges between the communities found by clustering to produce  a network with either two or four connected components. To generate $M$ from \eqref{eq:M-gen}, we use $k=6$ for two components and $k=12$ for four components, and again set $t = 0.5$ for moderate cohesion. The effective dimension  $m_A$ becomes 66, 74, and 78,  respectively.

Figure~\ref{fig:Lattice-Disconnected} shows the ROC curves and Table~\ref{tab:M-estimate-disconnected-grid} shows the mean estimation errors  for the three versions of the lattice network. Overall, all methods get worse as the network is split, but the drop in performance is fairly small for the oracle GNC-lasso.   Cross-validated GNC lasso suffers slightly more from splitting (the connected components in the last case only have 25 nodes each, which can produce isolated nodes and hurt cross-validation performance).   Again,  both GNS methods are much more accurate than the two benchmarks (glasso completely fails, and oracle cluster+glasso performance substantially worse). 

  Figure~\ref{fig:Coauthor-Disconnected} and Table~\ref{tab:M-estimate-disconnected-coauthor} give the results for the three versions of the coauthorship network.     The network remains well connected in all configurations and both the oracle and the cross-validated GNC-lasso perform well in all three cases, without deterioration.   The oracle cluster+glasso performs well in this case as well, but GNC-lasso still does better on both graph recovery and estimating the mean.  Glasso fails completely once again.

\begin{figure}[H]
\centering
\begin{subfigure}{.33\textwidth}
  \centering
  \includegraphics[width=\linewidth]{./Figures/R1-GridNet-Mix050-SNC060}
  \caption{Original $20\times 20$ lattice.}
\end{subfigure}%
\begin{subfigure}{.33\textwidth}
  \centering
  \includegraphics[width=\linewidth]{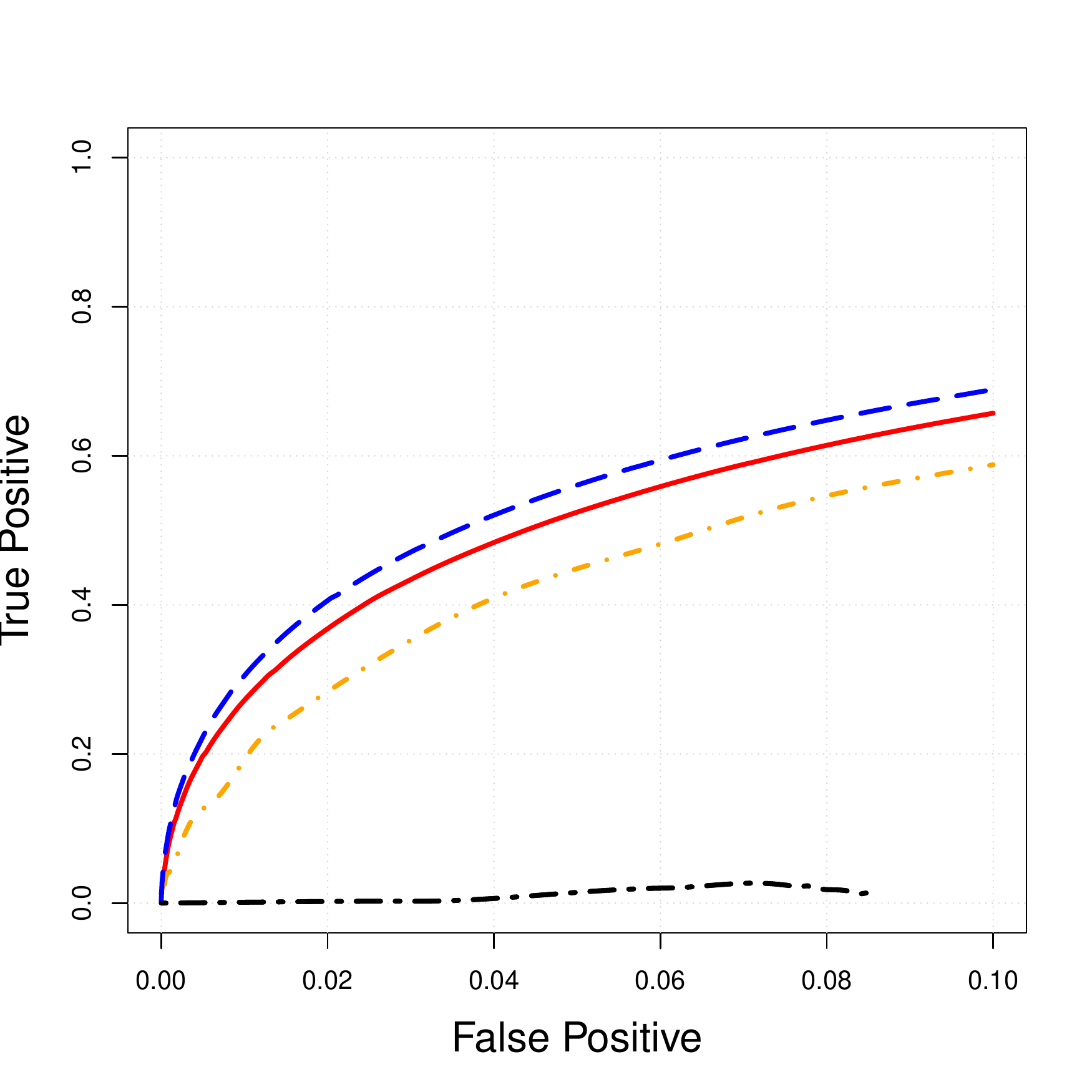}
  \caption{Four connected components.}
\end{subfigure}
\begin{subfigure}{.33\textwidth}
  \centering
  \includegraphics[width=\linewidth]{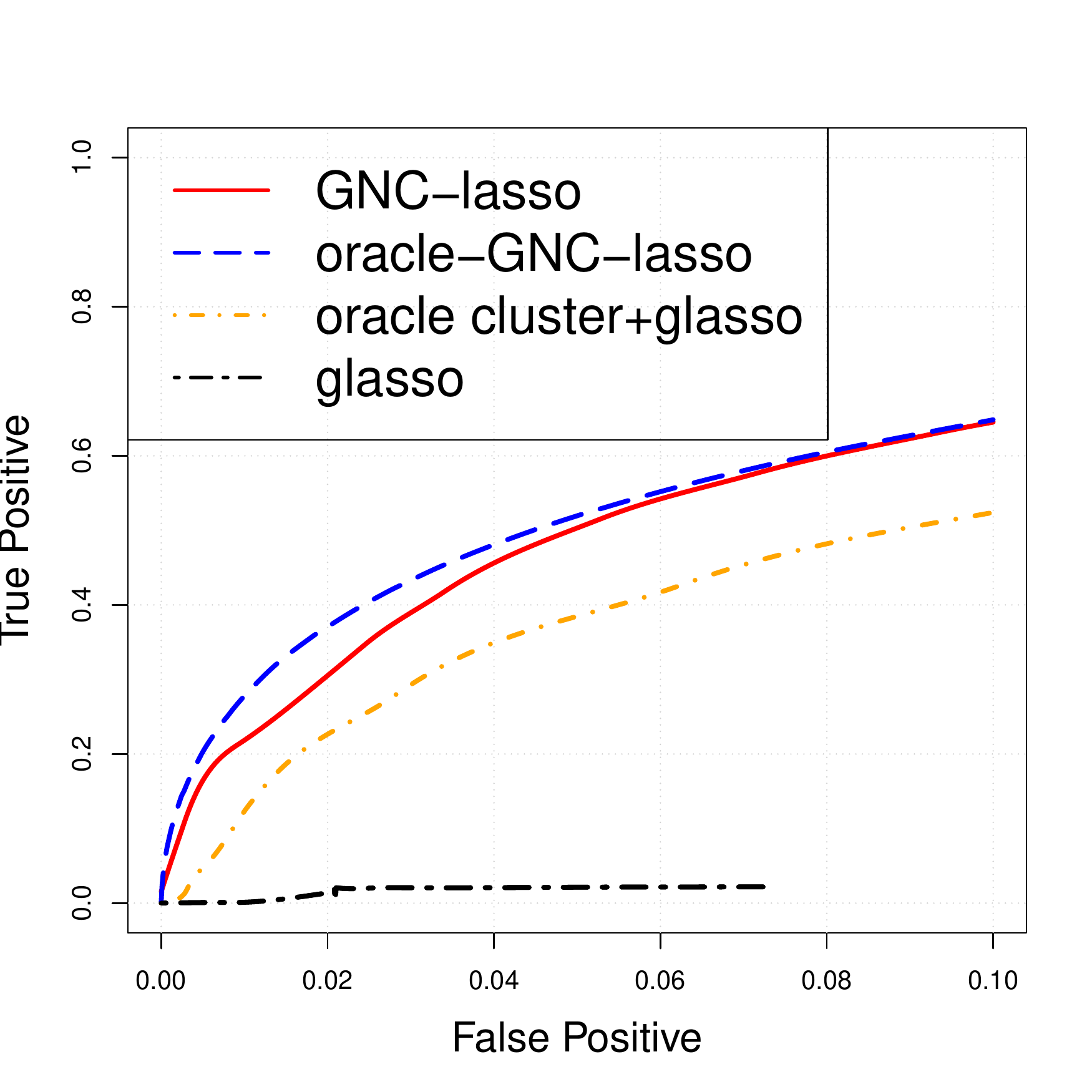}
  \caption{16 connected components.}
\end{subfigure}

\caption{Graph recovery ROC curves for the lattice network and two of its sparsified variants. Here $n=400, p=500$ and we set $t=0.5$ in generating $M$.  }
\label{fig:Lattice-Disconnected}
\end{figure}

\begin{table}[ht]
\centering
\caption{Mean estimation errors for the four methods, averaged over 100 replications, with the lowest error in each configuration indicated in bold, for the lattice networks with one, four, and 16 connected components. }
\label{tab:M-estimate-disconnected-grid}
\begin{tabular}{r|rrr|rrr}
  \hline
   & \multicolumn{3}{c}{$\norm{\hat{M}-M}_{\infty}$}& \multicolumn{3}{c}{$\norm{\hat{M}-M}_{2,\infty}$}\\
  \hline
 method & original & 4 comp. & 16 comp. & original & 4 comp. & 16 comp.  \\ 
  \hline
 glasso& 0.746 & 2.801 & 2.942  & 5.819 & 40.25 & 25.16   \\ 
oracle cluster+glasso &  0.539 & 1.091 & 1.099 & 3.293 & 12.20 & 8.22 \\
oracle GNC-lasso & {\bf 0.520} &{\bf  0.866} & {\bf 0.785} & {\bf 2.054} &{\bf 6.46} & {\bf 5.13} \\
GNC-lasso  &0.669 & 0.983 & 0.838 & 2.619 & 6.73 &5.79 \\ 
   \hline
\end{tabular}
\end{table}

\begin{figure}[H]
\centering
\begin{subfigure}{.33\textwidth}
  \centering
  \includegraphics[width=\linewidth]{./Figures/R1-CoauthorNet-Mix050-SNC060}
  \caption{Original coauthor-network}
\end{subfigure}%
\begin{subfigure}{.33\textwidth}
  \centering
  \includegraphics[width=\linewidth]{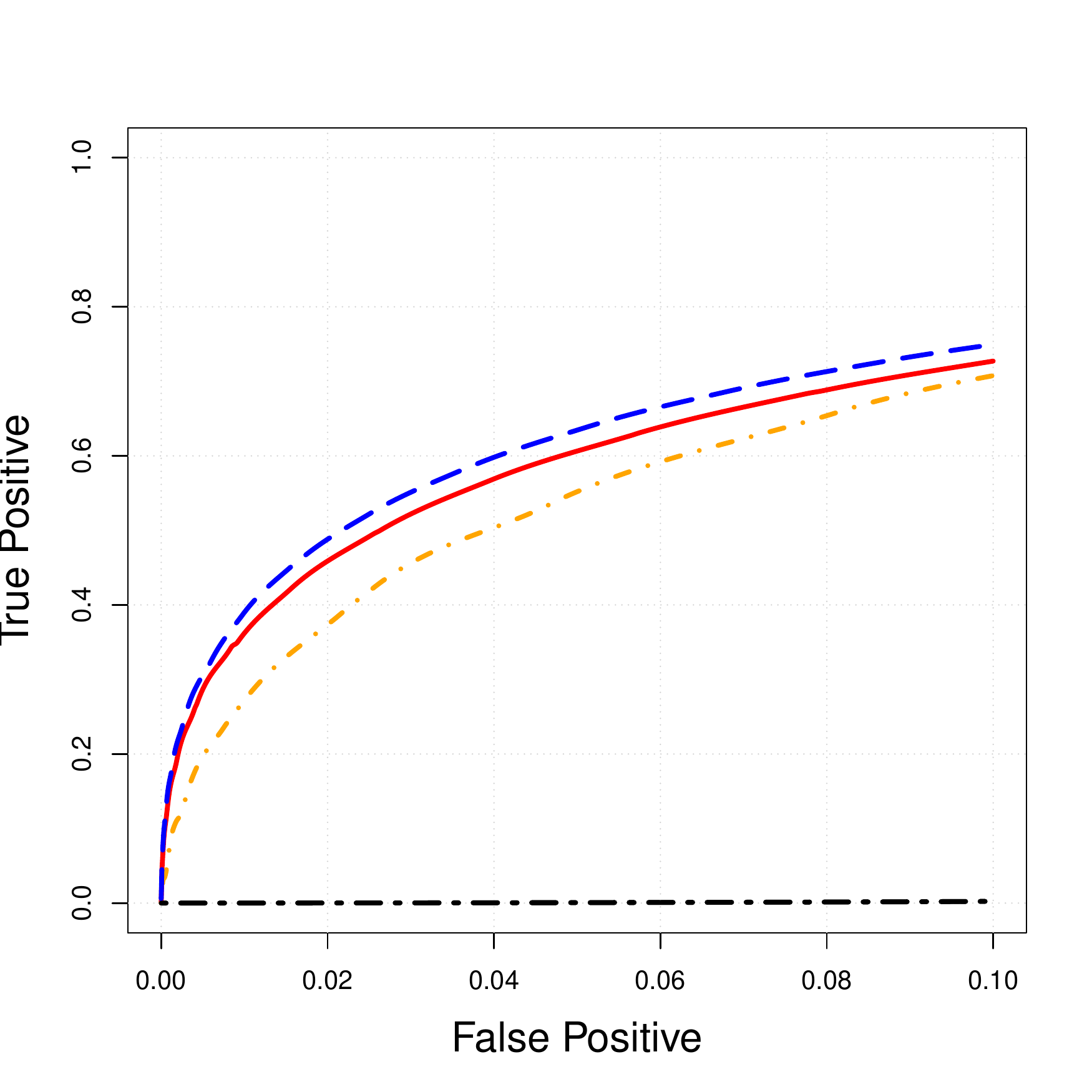}
  \caption{2 connected components.}
\end{subfigure}
\begin{subfigure}{.33\textwidth}
  \centering
  \includegraphics[width=\linewidth]{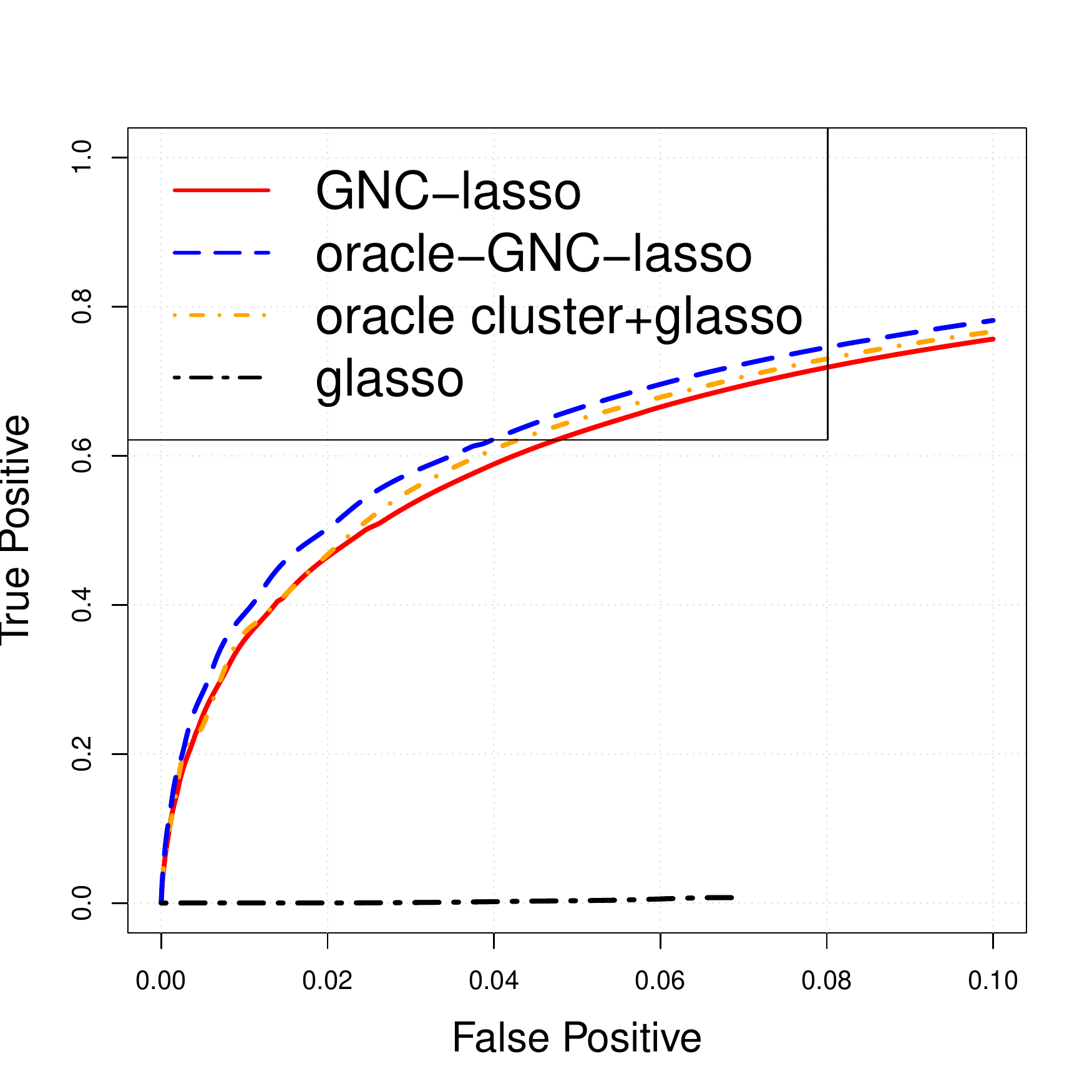}
  \caption{4 connected components.}
\end{subfigure}

\caption{Graph recovery ROC curves for the coauthorship network and two of its sparsified variants. Here $n=635, p=800$ and we set $t=0.5$ in generating $M$.   }
\label{fig:Coauthor-Disconnected}
\end{figure}

\begin{table}[ht]
\centering
\caption{Mean estimation errors for the four methods, averaged over 100 replications, with the lowest error in each configuration indicated in bold, for the coauthorship networks with one, two, or four components.}
\label{tab:M-estimate-disconnected-coauthor}
\begin{tabular}{r|rrr|rrr}
  \hline
   & \multicolumn{3}{c}{$\norm{\hat{M}-M}_{\infty}$}& \multicolumn{3}{c}{$\norm{\hat{M}-M}_{2,\infty}$}\\
  \hline
 method & original & 2 comp. & 4 comp. & original & 2 comp. & 4 comp.  \\ 
  \hline
 glasso& 0.746 & 1.949 & 4.019  & 5.819 & 21.214 & 32.307   \\ 
oracle cluster+glasso &  0.539 & 0.936 & 1.676 & 3.293 & 6.033 & 7.208 \\
oracle GNC-lasso & {\bf 0.520} &{\bf  0.659} & {\bf 0.958} & {\bf 2.054} &{\bf 3.860} & {\bf 4.843} \\
GNC-lasso  &0.669 & 0.852 & 1.289 & 2.619 & 4.947 & 5.102 \\ 
   \hline
\end{tabular}
\end{table}

\subsection{Performance as a function of the sample size}

Here we compare the methods when the sample size $n$ changes while $p$ remains fixed. Specifically, we compare $10 \times 10$, $15 \times 15$, and $20\times 20$ lattices, corresponding to $n=100$, $225$, and $400$, respectively. The dimension $p =500$, the data generating mechanism, and $t = 0.5$ remain the same as in Section~\ref{secsec:cohesion-eval}.     When $n=100$, the sample size is too small for 10-fold cross-validation to be stable, and thus we use leave-one-out cross-validation instead. Figure~\ref{fig:sim-samplesize} shows the ROC curves while Table~\ref{tab:samplesize} shows errors in the mean.   Clearly, the problem is more difficult for smaller sample sizes, but both versions of GNC-lasso still work better than the other two baseline methods, even though for $n =100$, the problem is essentially too difficult for all the methods.  Results on estimating the mean do not favor any one method clearly, but the differences between the methods are not very large in most cases.

\begin{figure}[H]
\centering
\begin{subfigure}{.33\textwidth}
  \centering
  \includegraphics[width=\linewidth]{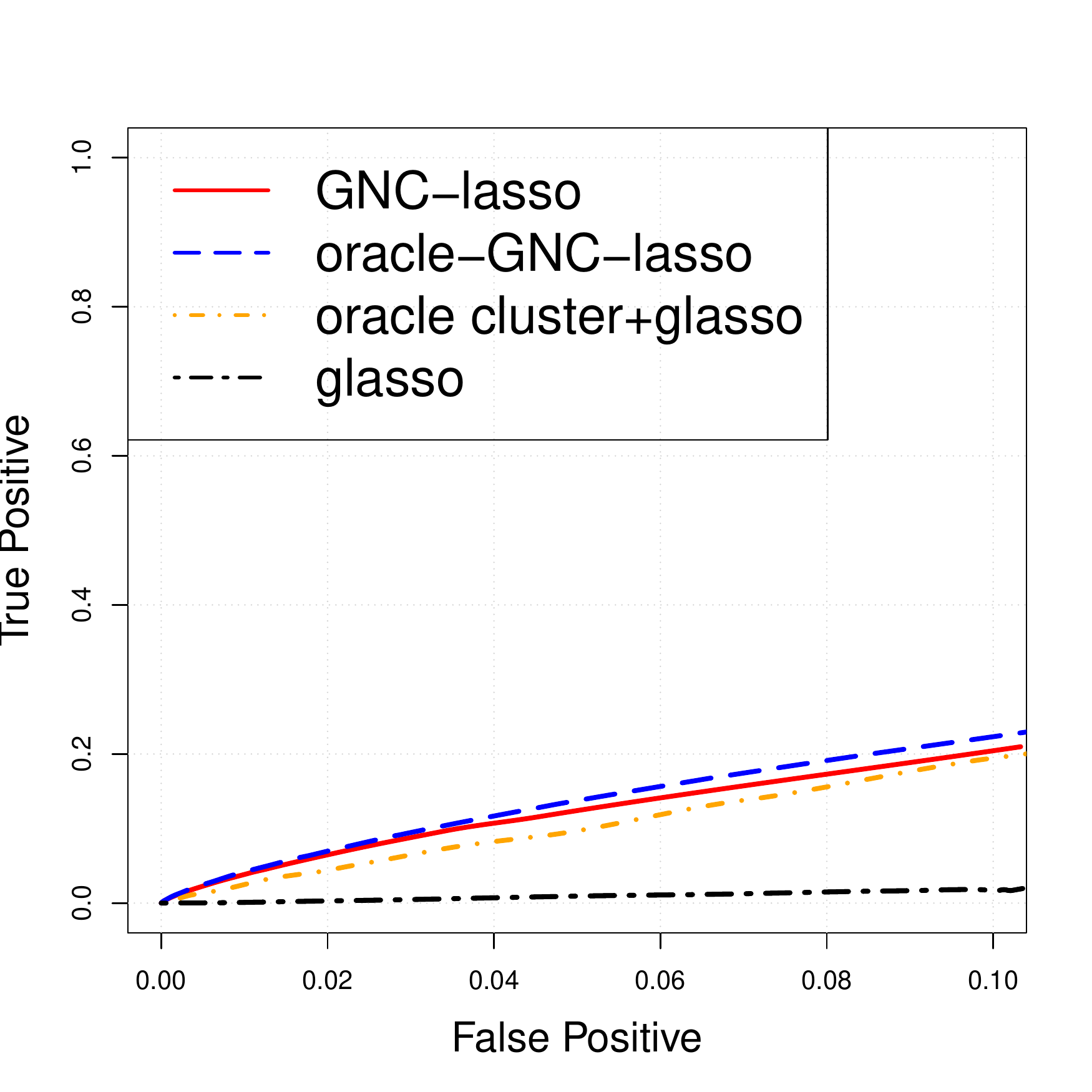}
  \caption{$10\times 10$ lattice.}
\end{subfigure}%
\begin{subfigure}{.33\textwidth}
  \centering
  \includegraphics[width=\linewidth]{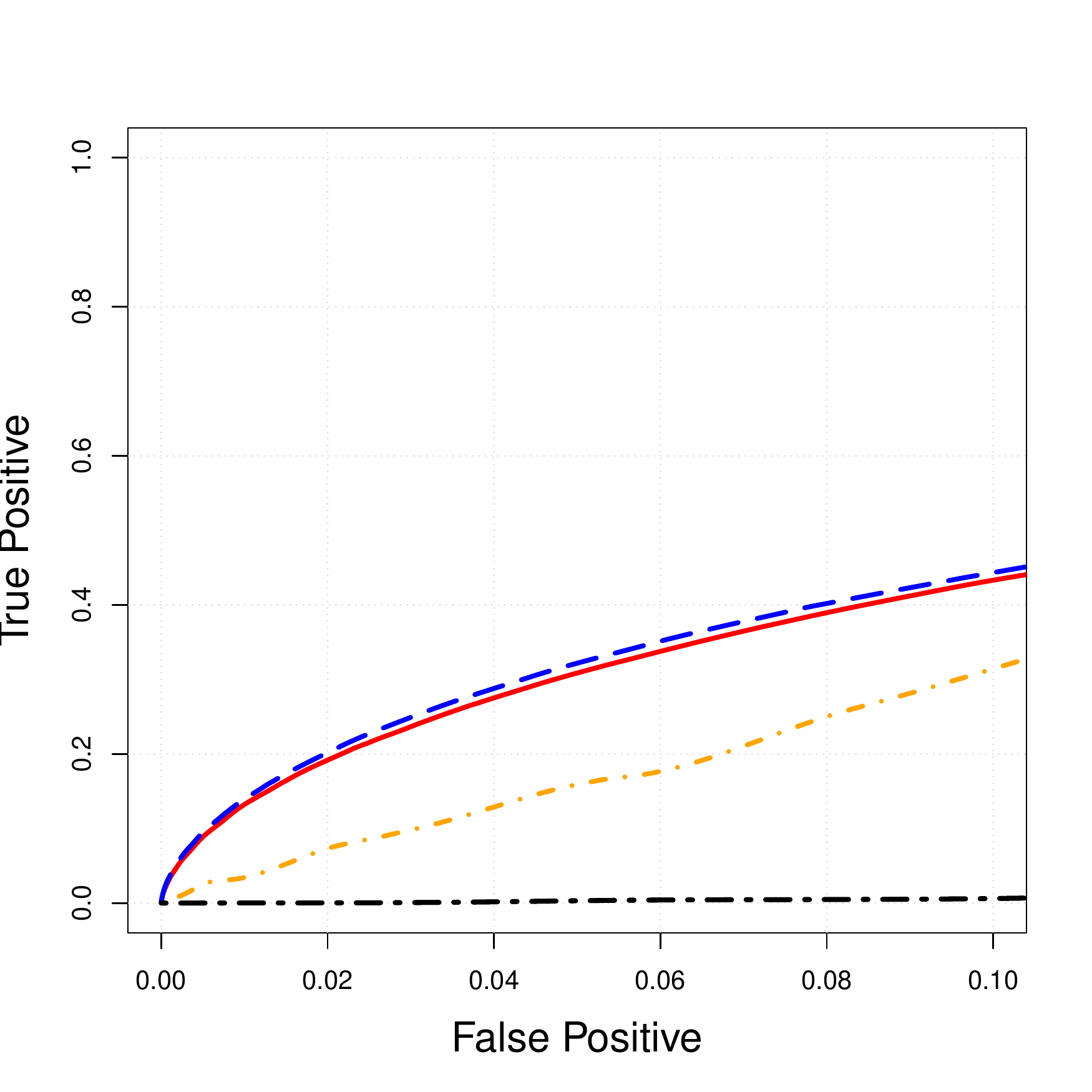}
  \caption{$15\times 15$ lattice.}
\end{subfigure}
\begin{subfigure}{.33\textwidth}
  \centering
  \includegraphics[width=\linewidth]{./Figures/R1-GridNet-Mix050-SNC060}
  \caption{$20\times 20$ lattice.}
\end{subfigure}

\caption{Graph recovery ROC curves for the three lattice networks with $n=100~(10\times 10)$, $225~(15\times 15)$ and $400~(20\times 20)$.  We fix $p=500$, $t=0.5$.  }
\label{fig:sim-samplesize}
\end{figure}

\begin{table}[ht]
\centering
\caption{The estimation errors of $M$ from the four methods on three connected lattice networks with varying sample size, averaged over 100 independent replications. The network sizes are 100, 225 and 400, corresponding to lattice dimension $10\times 10$, $15\times 15$ and $20\times 20$, respectively.}
\label{tab:samplesize}
\begin{tabular}{r|rrr|rrr}
  \hline
   & \multicolumn{3}{c}{$\norm{\hat{M}-M}_{\infty}$}& \multicolumn{3}{c}{$\norm{\hat{M}-M}_{2,\infty}$}\\
  \hline
 method & $n=100$ & $225$ & $400$  &$100$ & $225$& $400$  \\ 
  \hline
 glasso& 1.009 & 1.030 & 0.746 & 15.02 & 15.69 & 12.985 \\
oracle cluster+glasso & 0.991 & {\bf 0.716} & 0.539 & 6.62 & {\bf 4.72} & 3.639 \\
oracle GNC-lasso & 0.911 & 0.794 & {\bf 0.520} & 5.52 & 4.83 & {\bf 3.247} \\ 
GNC-lasso  & {\bf 0.874} & 0.988 & 0.669 & {\bf 5.36} & 5.67 & 4.105 \\ 
   \hline
\end{tabular}
\end{table}

\subsection{Comparing with the iterative GNC-lasso}

Finally, we compare the estimator obtained by iteratively optimizing $\Theta$ and $M$ in \eqref{eq:obj1} (iterative GNC-lasso)  to the proposed two-stage estimator (GNC-lasso).   As mentioned in Section~\ref{secsec:joint}, the iterative method is too computationally intensive to tune by cross-validation, so we only compare the oracle versions of both methods,  on the synthetic data used in Section~\ref{secsec:cohesion-eval} with moderate cohesion level $t=0.5$.  The results are shown in Figure~\ref{fig:sim-oracel-joint} and Table~\ref{tab:M-estimate-iterative}.     The methods are essentially identically on the lattice network and the two-stage method is in fact slightly better on the co-author network, indicating that there is no empirical reason to invest in the computationally intensive iterative method.

\begin{figure}[H]
\centering
\begin{subfigure}{.5\textwidth}
  \centering
  \includegraphics[width=\linewidth]{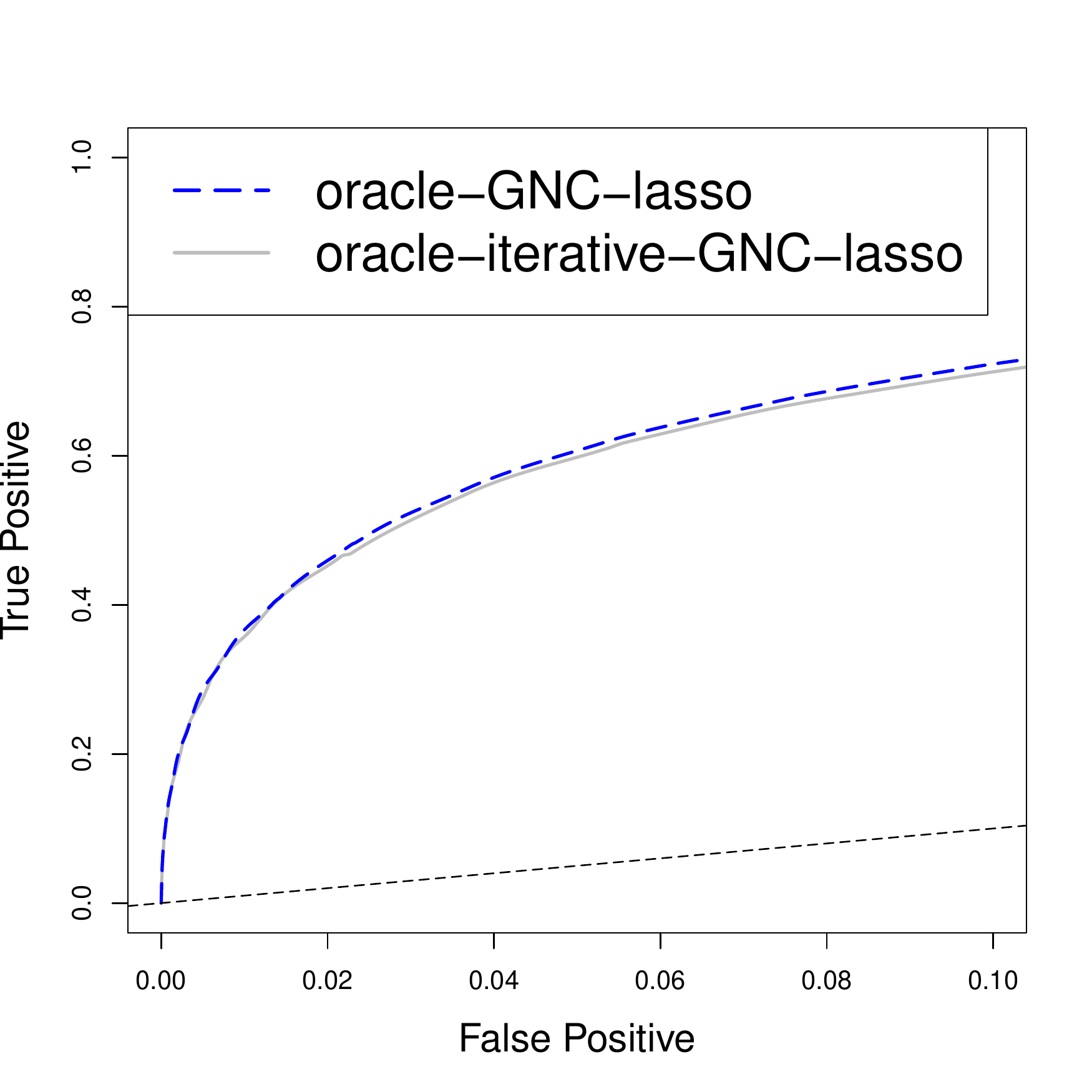}
  \caption{$20\times 20$ lattice}
\end{subfigure}%
\begin{subfigure}{.5\textwidth}
  \centering
  \includegraphics[width=\linewidth]{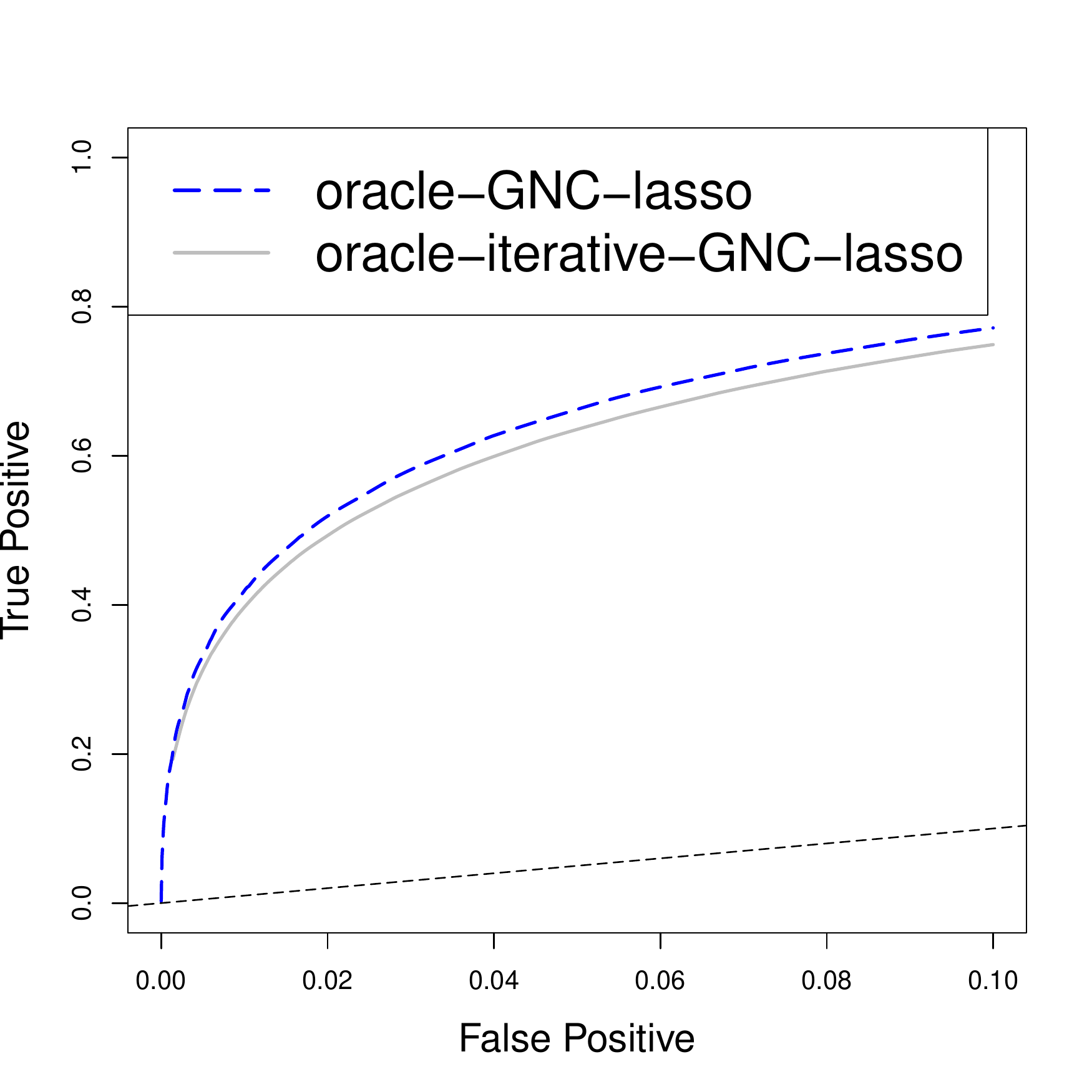}
  \caption{Coauthorship network}
\end{subfigure}
\caption{Graph recovery ROC curves for the proposed two-stage GNC-lasso and the joint GNC-lasso. The network cohesion corresponding to  $t=0.5$ for the $20\times 20$ lattice and the coauthorship network. }
\label{fig:sim-oracel-joint}
\end{figure}

\begin{table}[ht]
\centering
\caption{The estimation errors of $M$ from the iterative and two-stage oracle GNC-lasso methods, averaged over 100 independent replications. }
\label{tab:M-estimate-iterative}
\begin{tabular}{r|r|r|r}
  \hline
 network &method & $\norm{\hat{M}-M}_{\infty}$& $\norm{\hat{M}-M}_{2,\infty}$\\
  \hline
\multirow{ 2}{*}{lattice} & two-stage  &  0.520  & 3.247  \\ 
&  iterative & 0.587 & 3.639 \\ 
   \hline
\multirow{ 2}{*}{coauthorship} & two-stage &  0.860  & 7.404  \\ 
&iterative & 1.21 &  9.92\\ 
   \hline
\end{tabular}
\end{table}

\section{Data analysis: learning associations between statistical terms}\label{sec:app}

Here we apply the proposed method to the dataset of papers from 2003-2012 from four statistical journals collected by \cite{ji2016coauthorship}.    The dataset contains full bibliographical information for each paper and was curated for disambiguation of author names when necessary.  Our goal is to learn a conditional dependence graph between terms in paper titles, with the aid of the coauthorship network.

\begin{figure}[H]
\vspace{-1.5cm}
\begin{center}
\centerline{\includegraphics[width=0.8\columnwidth]{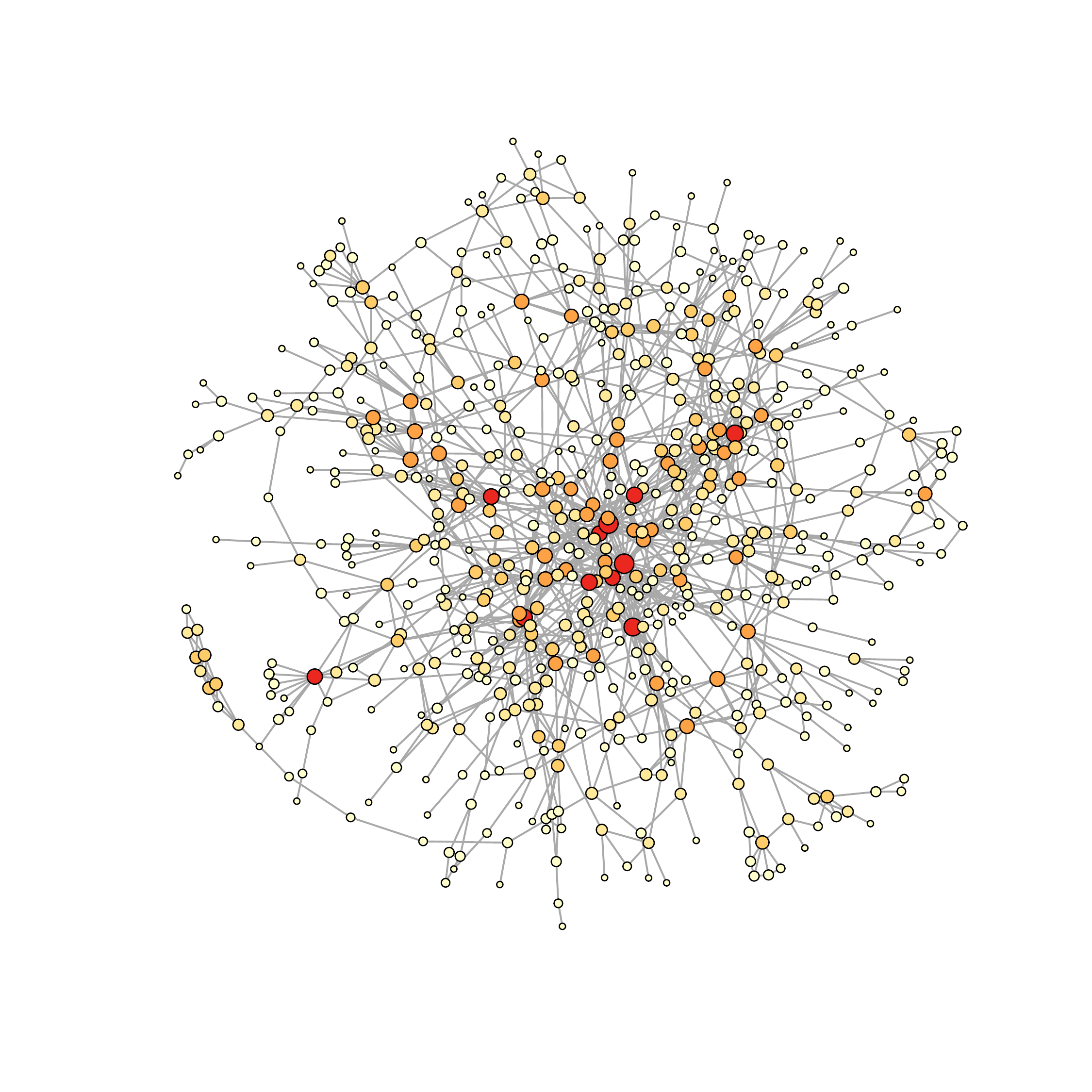}}
\vspace{-2cm}
\caption{The coauthorship network of 635 statisticians (after pre-processing).  The size and the color of each node correspond to the degree (larger and darker circles have more connections.}
\label{fig:CoauthorNet}
\end{center}
\end{figure} 
\vspace{-1.5cm}

We pre-processed the data by removing authors who have only one paper in the data set,  and filtering out common stop words (``and'', ``the'', etc)  as well as terms that appear in fewer than 10 paper titles.    We then calculate each author's  average term frequency across all papers for which he/she is a coauthor.  Two authors are connected in the coauthorship network if they have co-authored at least one paper, and we focus on the largest connected component of the network. Finally, we sort the terms according to their term frequency-inverse document frequency score (tf-idf), one of the most commonly used measures in natural language processing to assess how informative a term is \citep{leskovec2014mining}, and keep 300 terms with the highest tf-idf scores.  After all pre-processing, we have $n=635$ authors and $p=300$ terms.  The observations are 300-dimensional vectors recording the average frequency of term usage for a specific author. The coauthorship network is shown in Figure~\ref{fig:CoauthorNet}.

The interpretation in this setting is very natural;  taking coauthorship into account makes sense in estimating the conditional graph, since the terms come from the shared paper title.    We can expect that there will be standard phrases that are fairly universal  (e.g., ``confidence intervals''), as well as phrases specific to relatively small groups of authors with multiple connections, corresponding to specific research area (e.g., ``principal components'' ), which is exactly the scenario where our model should be especially useful relative to the standard Gaussian graphical model.  To ensure comparable scales for both columns and rows, we standardize the data using the successive normalization procedure introduced by \cite{olshen2010successive}.    If we select $\alpha$ using 10-fold cross-validation, as before, the graphs from GNC-lasso and glasso recover 4 and 6 edges, respectively, which are very sparse graphs.   To keep the graphs comparable and to allow for more interpretable results, we instead set the number of edges to 25 for both methods, and compare resulting graphs, shown in = Figure~\ref{fig::realglasso} (glasso) and Figure~\ref{fig::realGNC} (GNC-glasso).   For visualization purposes, we only plot the 55 terms that have at least one edge in at least one of the graphs.

Overall, most edges recovered by both methods represent common phrases in the statistics literature, including ``exponential families'', ``confidence intervals'', ``measurement error'', ``least absolute''  (deviation), ``probabilistic forecasting'', and ``false discovery''.   There are many more common phrases that are recovered by GNC-lasso but missed by Glasso, for example,  ``high dimension(al/s)'', ``gene expression'', ``covariance matri(x/ces)'', ``partially linear'', ``maximum likelihood'', ``empirical likelihood'', ``estimating equations'', ``confidence bands'', ``accelerated failure'' (time model),``principal components'' and ``proportional hazards''.   There are also a few that are found by Glasso but missed by GNC-lasso, for example,  ``moving average'' and ``computer experiments''.     Some edges also seem like potential false positives, for example, the links between ``computer experiments'' and ``orthogonal construction'', or the edge between ``moving average'' and ``least absolute'', both found by glasso but not GNC-lasso.

Additional insights about the data can be drawn from the  $\hat{M}$ matrix estimated by GNC-lasso;  glasso does not provide any information about the means.   Each  $\hat{M}_{\cdot j}$ can be viewed as the vector of authors' preferences for the term $j$, we can visualize the relative distances between terms as reflected in their popularity.     Figure~\ref{fig:termPC-12} shows the 55 terms from Figure~\ref{fig::realglasso}, projected down from $\hat{M}$ to $R^2$ for visualization purposes by multidimensional scaling (MDS) \citep{mardia1978some}.      The visualization shows a clearly outlying cluster, consisting of the terms ``computer'', ``experiments'', ``construction'', and ``orthogonal'', and to a lesser extent the cluster ``Markov Chain Monte Carlo'' is also further away from all the other terms.    The clearly outlying group can be traced back to a single paper,  with the title ``Optimal and orthogonal Latin hypercube designs for computer experiments" \citep{butler2001optimal}, which is the only title where the words ``orthogonal'' and ``experiments'' appear together.   Note that glasso estimated them as a connected component  in the graph, whereas GNC-lasso did not, since it was able to separate a one-off combination occurring in a single paper from a common phrase.   This illustrates the advantage of GNC-lasso's ability to distinguish between individual variation in the mean vector and the overall dependence patterns, which glasso lacks.

\section{Discussion}\label{sec:con}
We have extended the standard graphical lasso problem and the corresponding estimation algorithm to the more general setting in which each observation can have its own mean vector.  We studied the case of observations connected by a network and leveraged the empirically known phenomenon of network cohesion to share information across observations, so that we can still estimate the means in spite of having $np$ mean parameters instead of just $p$ in the standard setting.   The main object of interest is the inverse covariance matrix, which is shared across observations and represents universal dependencies in the population.   while all observations share the same covariance matrix under the assumption of network cohesion.    The method is computationally efficient with theoretical guarantees on the  estimated inverse covariance matrix and the corresponding graph.    Both simulations and an application to a citation network show that GNC-lasso is more accurate and gives more insight into the structure of the data than the standard glasso when observations are connected by a network.
One possible avenue for future work is obtaining inference results for the estimated model. This might be done by incorporating the inference idea of \cite{zhao2016significance} and \cite{ren2015asymptotic} with additional structural assumptions on the mean vectors.  The absolute deviation penalty \citep{hallac2015network} between connected nodes is a possible alternative, if the computational cost issue can be resolved through some efficient optimization approach.   Another direction is to consider the case where the partial dependence graphs themselves differ for individuals over the network, but in a cohesive fashion;  the case of jointly estimating several related graphs has been studied by \cite{guo2011joint, danaher2014joint}.  As always, in making the model more general there will be a trade-off between goodness of fit and parsimony, which may be elucidated by obtaining convergence rates in this setting.

\begin{figure}[H]
  \centering
  \begin{minipage}{1\linewidth}
    \centering
      \vspace{-4.5cm}
    \includegraphics[width=1.2\linewidth]{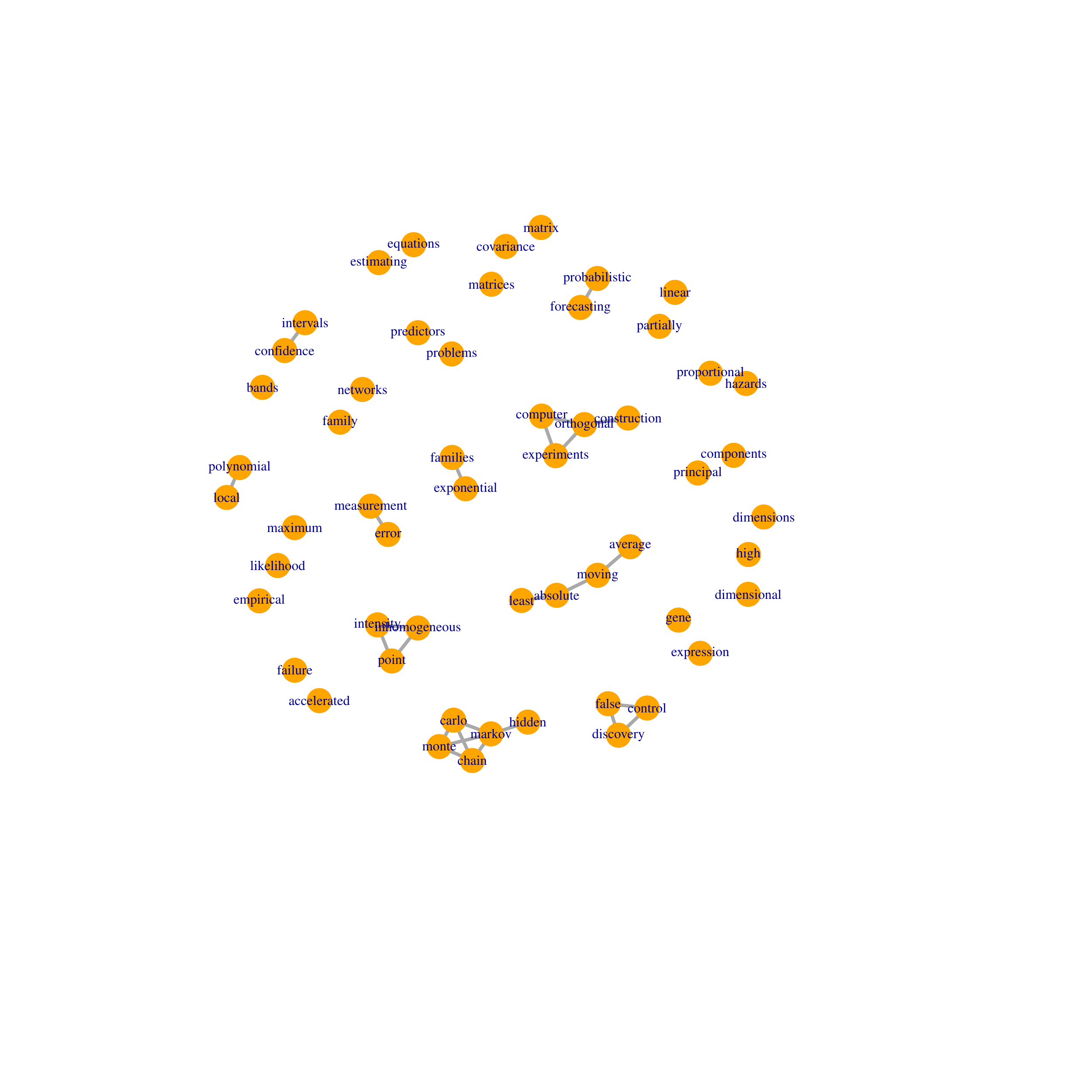}
     \vspace{-6cm}
      \caption{Partial correlation graphs estimated by Glasso}\label{fig::realglasso}
    
  \end{minipage}\\
  \begin{minipage}{1\linewidth}
    \centering
     \vspace{-3.5cm}
   \includegraphics[width=1.2\linewidth]{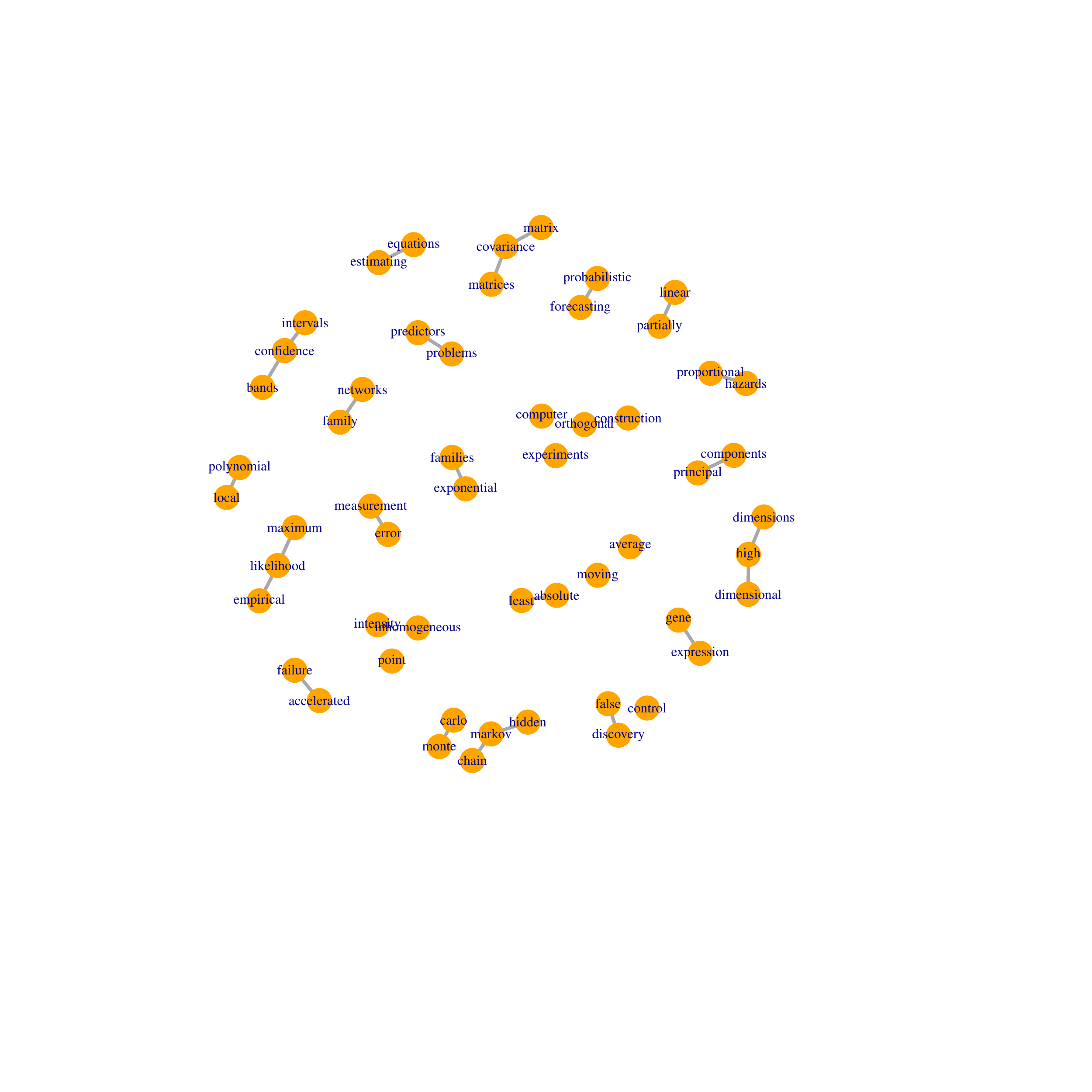}
    \vspace{-6cm}
        \caption{Partial correlation graphs estimated by GNC-lasso}\label{fig::realGNC}
    \end{minipage}  
  \label{fig:StaisticalNet}
\end{figure}

\begin{figure}[H]
\vspace{-0.5cm}
\begin{center}
\centerline{\includegraphics[width=0.9\columnwidth]{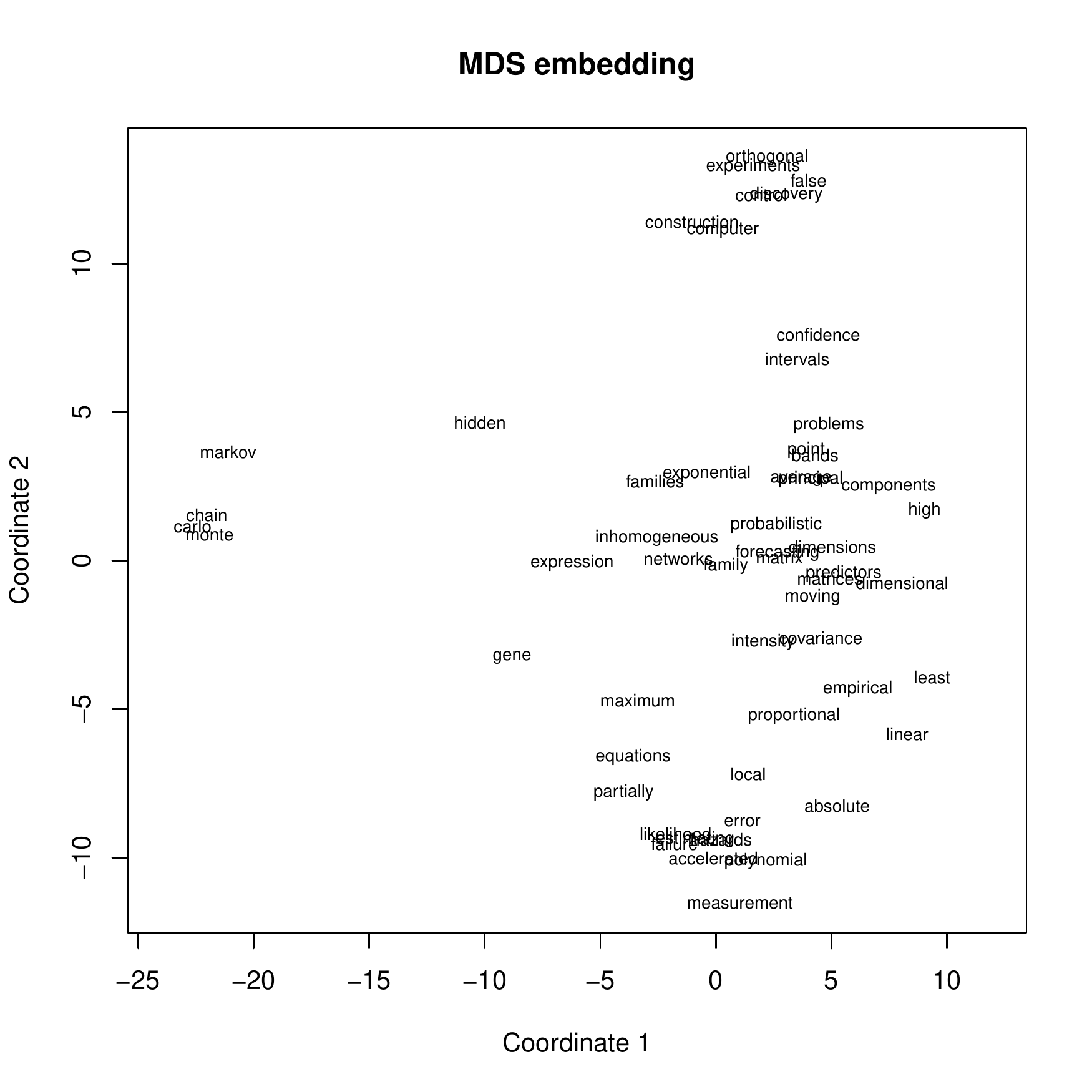}}
\caption{Projection of 55 terms by using the 2-D MDS.}
\label{fig:termPC-12}
\end{center}
\end{figure}

\acks{
T. Li was partially supported by the Quantitative
Collaborative grant from the University of Viriginia. E. Levina and T. Li (while a PhD student at the University of Michigan) were supported in part
by an ONR grant (N000141612910) and NSF grants (DMS-1521551 and DMS-1916222). J. Zhu and T. Li (while a
PhD student at the University of Michigan) were supported in part by NSF grants (DMS-1407698
and DMS-1821243). C. Qian was supported in part by grants from National Key R\&D Program of China
(2018YFF0215500) and Science Foundation of Jiangsu Province for Young Scholars (SBK2019041494). We want to thank the action editor and reviewers for their valuable suggestions.}

\bibliography{CommonBib}{}

\newpage

\begin{appendix}
\section{Proofs}
First, recall the following matrix norm definitions we'll need: for any matrix $M$,  $\norm{M}_{\infty} = \max_{ij}|M_{ij}|$, 
$\norm{M}_{1,1} = \max_{j}\norm{M_{\cdot j}}_1$, and  
$\norm{M}_{\infty,\infty} = \max_{i}\norm{M_{i\cdot}}_1.$

The following lemma summarizes a few concentration inequalities that we will need.

\begin{lem}[Concentration of norm of a multivariate Gaussian]\label{lem:concentration}
  For a Gaussian random vector $x \sim \ncal(0,\Sigma)$, with $\Sigma \in \bR^{p\times p}$ a positive definite matrix     and $\phi_{\max}(\Sigma)$ the largest eigenvalue of $\Sigma$, we have,
  \begin{eqnarray}
   \label{eq:1stConcentration}
\p(|\norm{x}_2 - \sqrt{\tr(\Sigma)}|>t) & \le & 2\exp(-c\frac{t^2}{\phi_{\max}(\Sigma)}),  \\
\label{eq:2ndConcentration}
\p(|\norm{x}_2^2 - \tr(\Sigma)|>t) & \le & 2\exp(-c\frac{t}{\phi_{\max}(\Sigma)}), \\
\label{eq:3rdConcentration}
\p(|\norm{x}_1 - \frac{2}{\pi}\sum_{i=1}^p\sqrt{\Sigma_{ii}}|>t) &\le & 2\exp(-c\frac{t^2}{p\phi_{\max}(\Sigma)})
\end{eqnarray}
for some generic constant $c>0$.  Further, if $x_1, \cdots, x_n$ are i.i.d.\ observations from $\ncal(0,\Sigma)$, then 
\begin{equation}\label{eq:4thConcentration}
\p(\sum_i^n\norm{x_i}_2^2 > 2n\tr(\Sigma)) \le 2\exp(-cnr(\Sigma))
\end{equation}
where $r(\Sigma)$ is the stable rank of $\Sigma$.
\end{lem}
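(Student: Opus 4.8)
The plan is to establish each of the four concentration inequalities by reducing to standard Gaussian tail bounds. The cleanest route is to diagonalize $\Sigma$: write $\Sigma = Q \Lambda Q^T$ with $\Lambda = \mathrm{diag}(\lambda_1, \dots, \lambda_p)$, so that $x \stackrel{d}{=} Q \Lambda^{1/2} g$ for $g \sim \ncal(0, I_p)$, and $\norm{x}_2^2 = \sum_i \lambda_i g_i^2$. This represents $\norm{x}_2^2$ as a weighted sum of independent $\chi^2_1$ variables with mean $\sum_i \lambda_i = \tr(\Sigma)$, which is exactly the setting of the Hanson--Wright / Bernstein-type inequality for Gaussian quadratic forms. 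For \eqref{eq:2ndConcentration}, I would invoke the standard sub-exponential concentration for such quadratic forms: $\p(|\norm{x}_2^2 - \tr(\Sigma)| > t) \le 2\exp(-c \min(t^2/\sum_i \lambda_i^2, \, t/\max_i \lambda_i))$; since $\sum_i \lambda_i^2 \le (\max_i \lambda_i)(\sum_i \lambda_i) = \phi_{\max}(\Sigma)\tr(\Sigma)$ and also $\phi_{\max}(\Sigma) = \max_i \lambda_i$, the bound can be loosened to the stated $2\exp(-ct/\phi_{\max}(\Sigma))$ form after absorbing $\tr(\Sigma) \ge \phi_{\max}(\Sigma)$ appropriately in the regime where the linear term dominates — I should be a little careful that the stated one-term bound is only the relevant (large-$t$) branch, but for the uses in the paper this is what is needed. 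Inequality \eqref{eq:1stConcentration} then follows from \eqref{eq:2ndConcentration} by the elementary device $|\,\norm{x}_2 - \sqrt{\tr(\Sigma)}\,| \le |\norm{x}_2^2 - \tr(\Sigma)| / \sqrt{\tr(\Sigma)}$ when $\norm{x}_2 \ge \sqrt{\tr(\Sigma)}$ and a similar bookkeeping on the other side, or more directly from the fact that $x \mapsto \norm{x}_2$ is $\sqrt{\phi_{\max}(\Sigma)}$-Lipschitz as a function of $g$ composed with $Q\Lambda^{1/2}$, combined with Gaussian concentration of Lipschitz functions and the bound $|\ep\norm{x}_2 - \sqrt{\tr(\Sigma)}| = O(\sqrt{\phi_{\max}})$.

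For \eqref{eq:3rdConcentration}, the key observation is that $x \mapsto \norm{x}_1 = \sum_i |x_i|$ is, as a function of the underlying standard Gaussian vector $g$ via $x = Q\Lambda^{1/2} g$, Lipschitz with constant at most $\sqrt{p}\,\sqrt{\phi_{\max}(\Sigma)}$: indeed $\norm{\cdot}_1 \le \sqrt{p}\norm{\cdot}_2$ and $\norm{x}_2 \le \sqrt{\phi_{\max}(\Sigma)}\norm{g}_2$. Gaussian concentration for Lipschitz functions then gives $\p(|\norm{x}_1 - \ep\norm{x}_1| > t) \le 2\exp(-ct^2/(p\phi_{\max}(\Sigma)))$. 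It remains to identify the centering: since $x_i \sim \ncal(0, \Sigma_{ii})$, $\ep|x_i| = \sqrt{2\Sigma_{ii}/\pi}$, so $\ep\norm{x}_1 = \sqrt{2/\pi}\sum_i \sqrt{\Sigma_{ii}}$ — note the statement writes $\frac{2}{\pi}$ which I believe is a typo for $\sqrt{2/\pi}$, but I will match the paper's convention. (One can also replace $\ep\norm{x}_1$ by this exact value at the cost of nothing, since it is exact, not an approximation.)

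For \eqref{eq:4thConcentration}, with $x_1, \dots, x_n$ i.i.d., $\sum_i \norm{x_i}_2^2$ is again a Gaussian quadratic form, now in $np$ variables, with mean $n\tr(\Sigma)$ and with the same per-coordinate weights $\lambda_j$ repeated $n$ times; so $\sum_j (\text{weights})^2 = n\sum_j \lambda_j^2 \le n\phi_{\max}(\Sigma)\tr(\Sigma)$ and $\max$ weight $= \phi_{\max}(\Sigma)$. Applying the Bernstein-type quadratic form bound with $t = n\tr(\Sigma)$ (a deviation of order the mean), the sub-Gaussian branch $t^2/(n\phi_{\max}\tr(\Sigma)) = n\tr(\Sigma)/\phi_{\max}(\Sigma) = n r(\Sigma)$ is the active one, giving $\p(\sum_i \norm{x_i}_2^2 > 2n\tr(\Sigma)) \le 2\exp(-c n r(\Sigma))$ as claimed; one should check that the linear branch $t/\phi_{\max}(\Sigma) = n r(\Sigma)$ is of the same order, so either way the exponent is $cnr(\Sigma)$. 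The main obstacle here is purely bookkeeping: making sure the constants and the choice of which branch of the Bernstein inequality dominates are handled consistently, and that the dependence is expressed through the stable rank $r(\Sigma) = \tr(\Sigma)/\phi_{\max}(\Sigma)$ rather than through $p$; there is no deep difficulty, as all four statements are consequences of classical Gaussian concentration once the quadratic-form/Lipschitz structure is made explicit. I would cite a standard reference (e.g., Vershynin's textbook or the Hanson--Wright inequality) for the quadratic form bound rather than reprove it.
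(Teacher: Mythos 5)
Your proof is correct and takes essentially the same route as the paper, whose own argument is a one-line appeal to exactly the tools you use: Gaussian concentration of Lipschitz functions for \eqref{eq:1stConcentration} (and, via the $\sqrt{p}\,\norm{\cdot}_2$ bound, for \eqref{eq:3rdConcentration}), sub-exponential/Bernstein tails for quadratic forms for \eqref{eq:2ndConcentration}, and Bernstein with $t=n\tr(\Sigma)$ for \eqref{eq:4thConcentration}. Your two caveats are also well taken and worth recording: the centering in \eqref{eq:3rdConcentration} should read $\sqrt{2/\pi}$ rather than $2/\pi$, and \eqref{eq:2ndConcentration} as stated is only the large-deviation (linear) branch of the Hanson--Wright bound, valid for $t\gtrsim \tr(\Sigma)$ --- which is the only regime in which the paper invokes it.
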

\begin{proof}[Proof of Lemma~\ref{lem:concentration}]
The first inequality \eqref{eq:1stConcentration} follows from concentration of a Lipschitz function of a sub-Gaussian random vector.     Inequalities \eqref{eq:2ndConcentration} and \eqref{eq:3rdConcentration} follow from the definition of a sub-exponential random variable. Lastly, \eqref{eq:4thConcentration} follows from applying Bernstein's inequality to \eqref{eq:2ndConcentration} with $t = n\tr(\Sigma)$.  
\end{proof}

\begin{proof}[Proof of Proposition~\ref{prop:lattice}]
By \cite{edwards2013discrete}, the eigenvalues of $A$ are given by 
\begin{equation}\label{eq:eigenvalue}
\frac{1}{\bar{d}} (4\sin^2(\frac{i\pi}{2\sqrt{n}})+4\sin^2(\frac{j\pi}{2\sqrt{n}})), i,j \in\{0,1, \cdots, \sqrt{n}-1\}.
\end{equation}
Since the average degree $2 \le \bar{d} \le 4$ for a lattice network, we ignore this constant.   First, we show $m_A \le n^{2/3}$, which by definition of $m_A$ is equivalent to  $\tau_{n-n^{2/3}} \ge n^{-1/3}$.  
Define the set of all eigenvalues satisfying this condition as
$$\acal_n = \{(i,j): i,j \in \bN \cap [0,\sqrt{n}-1], 4\sin^2(\frac{i\pi}{2\sqrt{n}})+4\sin^2(\frac{j\pi}{2\sqrt{n}}) < n^{-1/3}\}.$$
Then it is sufficient to show $|\acal_n| < n^{2/3}.$
Applying the inequality $\sin(x) \ge \frac{2}{\pi}x$ for $x \in [0,\pi/2]$, we can see that it is sufficient to show $|\tilde{\acal}_n|< n^{2/3}$, where
$$\tilde{\acal}_n  = \{(i,j): i,j \in \bN \cap [0,\sqrt{n}-1], \frac{4i^2}{n}+\frac{4j^2}{n} < n^{-1/3}\}.$$
The cardinality of $\tilde{\acal}_n$ can be computed exactly by counting; for simplicity,  we give an approximate calculation for when  $n$ is sufficiently large.   In this case the proportion of pairs $(i,j)$ out of the entire set of $ (\bN \cap [0,\sqrt{n}-1])\times ( \bN \cap [0,\sqrt{n}-1])$ that satisfy the condition to be included in $\tilde{\acal}_n$ can be upper bounded by twice the ratio betwen the area of the quarter circle with radius $\frac{n^{1/3}}{2}$ and the area of the $\sqrt{n} \times \sqrt{n}$ square. This gives
$$|\tilde{\acal}_n| \le 2 \frac{\pi}{16}n^{2/3} < n^{2/3}.$$

To prove the second claim,  consider the $\mu = U\beta$ such that all the inequalities in \eqref{eq:cohesion} hold as equalities and $\delta=1/2$.   Then, by noting that $P_{\mbone}u_n = u_n$, we have
\begin{equation}\label{eq:prop1eq1}
\norm{\mu - P_{\mbone}\mu}_2^2 = \sum_{i<n}\beta_i^2 = \norm{\mu}_2^2n^{-\frac{2(1+\delta)}{3}-1}\sum_{i <n}\frac{1}{\tau_i^2} = \norm{\mu}_2^2n^{-2}\sum_{i <n}\frac{1}{\tau_i^2} \ . 
\end{equation}
We need a lower bound for $\sum_{i <n}\frac{1}{\tau_i^2}$.  By  \eqref{eq:eigenvalue},
\begin{align*}
\sum_{i <n}\frac{1}{\tau_i^2} &= \sum_{i,j \le \sqrt{n}-1, (i,j) \ne (0,0)}\frac{1}{ (   4\sin^2(\frac{i\pi}{2\sqrt{n}})+4\sin^2(\frac{j\pi}{2\sqrt{n}})  )^2}\\
& > \sum_{1\le i,j \le \sqrt{n}-1}\frac{1}{ (   4\sin^2(\frac{i\pi}{2\sqrt{n}})+4\sin^2(\frac{j\pi}{2\sqrt{n}})  )^2} \\
& = \frac{n}{\pi^2}\sum_{1\le i,j \le \sqrt{n}-1}\frac{1}{ (   4\sin^2(\frac{i\pi}{2\sqrt{n}})+4\sin^2(\frac{j\pi}{2\sqrt{n}})  )^2}\frac{\pi}{\sqrt{n}}\frac{\pi}{\sqrt{n}}\\
& \ge \frac{n}{\pi^2}\sum_{1\le i,j \le \sqrt{n}-1}\frac{1}{ (   4\frac{\pi^2i^2}{4n}+4\frac{\pi^2j^2}{4n}  )^2}\frac{\pi}{\sqrt{n}}\frac{\pi}{\sqrt{n}} ~~~~~~~~~~~~~~~~\text{|~ applying} \sin^2(x) \le x^2 \\
& > \frac{1}{2} \frac{n}{\pi^2}\int_{\frac{\pi}{\sqrt{n}}\le x,y \le \pi}{\frac{1}{(x^2+y^2)^2}dxdy} \hfill ~~~~~~~~~~~~~  \text{|~sum lower bounded by 1/2 of the integral }\\
& > \frac{n}{2\pi^2}\int_{\pi/6}^{\pi/3}{\int_{\frac{\pi}{\sqrt{n}/2}}^{\pi}{\frac{1}{r^3}dr}d\theta}   ~~~~\text{|~ polar coordinates, } \{r \in [\frac{2\pi}{\sqrt{n}}, \pi], \theta \in [\frac{\pi}{6},\frac{\pi}{3}]\} \subset  [\frac{\pi}{\sqrt{n}}, \pi]\times [\frac{\pi}{\sqrt{n}}, \pi]    \\
& = \frac{n}{24\pi^3}(\frac{n}{4}-1).
\end{align*}
%
%
%
Substituting this lower bound for $\sum_{i <n}\frac{1}{\tau_i^2}$ in \eqref{eq:prop1eq1}, for a sufficiently large $n$ we have
$$\norm{\mu - P_{\mbone}\mu}_2^2 = \norm{\mu}_2^2n^{-2}\sum_{i <n}\frac{1}{\tau_i^2} > c\norm{\mu}_2^2.$$ 
Therefore, the $\mu$ we constructed is nontrivially cohesive. 

\end{proof}

We can represent each column of $M$ by taking the basis expansion in $U$, obtaining the basis coefficient matrix  $B = (B_{\cdot 1}, B_{\cdot 2}, \cdots, B_{\cdot p})$ such that $M = UB$. Let  $\hat{B} = U^T\hat{M}$, where $\hat M$ is the estimate \eqref{eq:individualRNC}.  We can view $\hat{B}$ as an estimate of $B$. We first state the error bound for $\hat{B}$ in Lemma~\ref{lem:meanCoefbound}, and the bound for $\hat{M}$ directly follows.

\begin{lem}\label{lem:meanCoefbound}
  Under model \eqref{eq:GNC-GGM} and  Assumption~\ref{ass:regressionCoef}, if $\alpha = n^{\frac{1+\delta}{3}}$, we have
  \begin{enumerate}
\item In maximum norm,
\begin{equation}\label{eq:MeanMaxBound}
\norm{\hat{B}-B}_{\infty} \le C\sigma \left( (\sqrt{\log{pn}}\sqrt{m_A}n^{-\frac{1+\delta}{3}}) \vee \frac{\sqrt{\log(pm_A)} }{1+\Delta} \vee \sqrt{\log(p)}\right)
\end{equation}
with probability at least $1-\exp(-c\log{(p(n-m_A))}) - \exp(-c'\log(pm_A))$ for some constants $C$, $C'$, $c$, $c'$, and $c''$.  
\item In Frobenius norm, 
\begin{equation}\label{eq:MeanFrobeniusBound}
\norm{\hat{B}-B}_{F} \le \sqrt{(b^2+2\sigma^2)p((n-m_A)m_An^{-\frac{2(1+\delta)}{3}}+\frac{m_A}{(1+\Delta)^2}+1)}
\end{equation}
with probability at least $1-\exp(-c''(n-m_A)r(\Sigma)) - \exp(-c''m_Ar(\Sigma))- \exp(-c''r(\Sigma))$. 

\item if $\log{p} = o(n)$ and $m_A = o(n)$, then 
\begin{equation}\label{eq:MeanColumnMaxBound}
\norm{\hat{B}-B}_{1,1} \le C'(b+2\sigma)(\sqrt{m_A} n^{\frac{2-\delta}{3}}+\sqrt{\log{p}}(\frac{m_A}{\Delta+1}+1)).
\end{equation}
with probability at least $1-\exp(-cn) - \exp(-Cm_A\log{p})-\exp(-C\log{p})$. 
\end{enumerate}
\end{lem}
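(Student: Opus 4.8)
The plan is to diagonalize the ridge-type mean estimator, decompose it into a deterministic bias term and a Gaussian noise term, and bound each using the cohesion assumption and the concentration inequalities of Lemma~\ref{lem:concentration}. Write $E = X-M$, so the rows of $E$ are i.i.d.\ $\ncal(0,\Sigma)$, and use the spectral decomposition $\lcal_s = U\Lambda U^T$ with $\Lambda = \mathrm{diag}(\tau_1,\dots,\tau_n)$. Then the closed form \eqref{eq:LaplacianSmoothing} gives $\hat B = U^T\hat M = (I+\alpha\Lambda)^{-1}(B+W)$ where $W := U^T E$. The first thing I would record is that $W$ has exactly the same distribution as $E$: each row $W_{i\cdot} = \sum_k U_{ki}E_{k\cdot}$ is a combination of the i.i.d.\ rows of $E$ with orthonormal coefficient vectors, so the rows of $W$ are again independent $\ncal(0,\Sigma)$; this is what lets me apply Lemma~\ref{lem:concentration} to row-blocks of $W$. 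Subtracting $B$ and using $(I+\alpha\Lambda)^{-1}-I = -\alpha\Lambda(I+\alpha\Lambda)^{-1}$ yields the entrywise identity $(\hat B-B)_{ij} = -\frac{\alpha\tau_i}{1+\alpha\tau_i}B_{ij} + \frac{W_{ij}}{1+\alpha\tau_i}$.

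Next I would split the index $i\in[n]$ into three ranges governed by the effective dimension. By Definition~\ref{defi:highrank}, $\tau_{n-m_A}\ge m_A^{-1/2}$, so the shrinkage weight $\frac{1}{1+\alpha\tau_i}$ is at most $\sqrt{m_A}/\alpha = \sqrt{m_A}\,n^{-(1+\delta)/3}$ for $i\le n-m_A$, at most $\frac{1}{1+\Delta}$ for $n-m_A<i\le n-1$ (since $\tau_i\ge\tau_{n-1}$ and $\Delta = n^{(1+\delta)/3}\tau_{n-1}$), and equals $1$ at $i=n$, where however the bias term vanishes because $\tau_n=0$. For the bias, cohesion \eqref{eq:cohesion} together with $\norm{B_{\cdot j}}_2^2 = \norm{M_{\cdot j}}_2^2 \le b^2 n$ forces $\tau_i^2 B_{ij}^2\le b^2 n^{-2(1+\delta)/3}$, hence $\alpha\tau_i|B_{ij}|\le b$ when $\alpha = n^{(1+\delta)/3}$, so $\big|\frac{\alpha\tau_i}{1+\alpha\tau_i}B_{ij}\big|\le \frac{b}{1+\alpha\tau_i}$ is controlled by the same range-dependent weights with $b$ in place of $1$.

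The three assertions then follow by assembling these pieces in the appropriate norm. For the Frobenius bound \eqref{eq:MeanFrobeniusBound} I would square the entrywise identity, use $(x+y)^2\le 2x^2+2y^2$, sum over $j$ and over each range of $i$; the bias sum is $O\big(b^2 p((n-m_A)m_A n^{-2(1+\delta)/3}+\frac{m_A}{(1+\Delta)^2})\big)$, and the noise sum equals $\sum_i \norm{W_{i\cdot}}_2^2/(1+\alpha\tau_i)^2$, where \eqref{eq:4thConcentration} applied to the three row-blocks of $W$ bounds $\sum_{i\in\text{block}}\norm{W_{i\cdot}}_2^2 \le 2(\text{block size})\tr(\Sigma)\le 2(\text{block size})p\sigma^2$ and simultaneously produces the three failure-probability terms $\exp(-c\cdot(\text{block size})\cdot r(\Sigma))$. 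For the maximum-norm bound \eqref{eq:MeanMaxBound} I would take $\max_{ij}$ of the entrywise identity; the bias is dominated since the $\sqrt{\log(\cdot)}$ factors are $\ge 1$, and for the noise apply a Gaussian maximal inequality to $W_{ij}\sim\ncal(0,\Sigma_{jj})$ with $\Sigma_{jj}\le\sigma^2$, separately over the $\le (n-m_A)p$, $\le m_A p$, and $p$ entries of the three ranges, which yields the $\sqrt{\log(pn)}$, $\sqrt{\log(pm_A)}$, $\sqrt{\log p}$ factors and hence the three terms in the $\vee$ (the last coming purely from the unshrunk constant direction $i=n$). For the column-$\ell_1$ bound \eqref{eq:MeanColumnMaxBound} I would, for each fixed $j$, sum the entrywise identity over the three ranges: on $i\le n-m_A$ bound $\sum_i|W_{ij}|\le 2(n-m_A)\sigma$ by \eqref{eq:3rdConcentration}, contributing $O(\sigma\sqrt{m_A}\,n^{(2-\delta)/3})$; on the middle range use Cauchy--Schwarz and \eqref{eq:2ndConcentration} to get $\sum_i|W_{ij}| \lesssim \sigma\, m_A\sqrt{\log p}$, contributing $O(\sigma\sqrt{\log p}\,m_A/(1+\Delta))$; and add $|W_{nj}|\lesssim \sigma\sqrt{\log p}$; then union-bound over $j\in[p]$, which under $\log p = o(n)$ and $m_A = o(n)$ collapses to the stated probability.

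The step I expect to be the main obstacle is not any single estimate but the bookkeeping: choosing the three ranges so that the bias and noise bounds line up exactly with the three terms appearing in each displayed inequality, and tracking the constants (which will end up depending on $b$ and $\sigma$) and the union-bound budgets so that the failure probabilities reduce to the clean forms stated. The one genuinely structural observation, without which nothing works, is that the orthogonal rotation $W = U^T E$ leaves the i.i.d.\ Gaussian row structure intact, so Lemma~\ref{lem:concentration} can be reused blockwise; everything else is careful accounting of the range-dependent shrinkage weights.
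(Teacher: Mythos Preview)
Your proposal is correct and follows essentially the same route as the paper: the paper writes $\hat B-B$ entrywise as a bias term $-\frac{\alpha\tau_i}{1+\alpha\tau_i}B_{ij}$ plus a noise term $\frac{\tilde E_{ij}}{1+\alpha\tau_i}$ with $\tilde E=U^TE$, splits indices into the same three ranges $i\le n-m_A$, $n-m_A<i<n$, $i=n$, controls the bias via the cohesion bound $\tau_i|B_{ij}|\le b\,n^{-(1+\delta)/3}$ and the noise via Lemma~\ref{lem:concentration} applied to the rows/columns of $\tilde E$. The only cosmetic difference is that for the middle range in the $\norm{\cdot}_{1,1}$ bound the paper invokes \eqref{eq:3rdConcentration} directly rather than Cauchy--Schwarz plus \eqref{eq:2ndConcentration}, but both routes give the same $\sigma m_A\sqrt{\log p}$ order.
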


\begin{proof}[Proof of Lemma~\ref{lem:meanCoefbound}]
Solving \eqref{eq:individualRNC}, we can explicitly write out 
$$\hat{B} = (I+\alpha\Lambda)^{-1}B + (I+\alpha\Lambda)^{-1}U^TE = (I+\alpha\Lambda)^{-1}B + (I+\alpha\Lambda)^{-1}\tilde{E}. $$
In particular, for each column $j \in [p]$, the estimate can be written as
$$\hat{B}_{\cdot j} = (I+\alpha\Lambda)^{-1}B_{\cdot j} + (I+\alpha\Lambda)^{-1}U^TE_{\cdot j} = (I+\alpha\Lambda)^{-1}B_{\cdot j} + (I+\alpha\Lambda)^{-1}\tilde{E}_{\cdot j},$$
where $\tilde{E}_{\cdot j} \sim \ncal(0, \sigma^2 I)$. Let $\ical^j$ and $\iical^{j}$ be two $n$ dimensional vectors such that the $i$th element of $\ical^j$ is given by $\frac{\alpha\tau_i}{1+\alpha\tau_i}B_{ij}$ while the $i$th element of $\iical^j$ is given by $\frac{1}{1+\alpha\tau_i}\tilde{E}_{ij}$.
$$\hat{B}_{\cdot j}-B_{\cdot j} = \ical^j + \iical^j.$$
For the element-wise $L_{\infty}$ norm, we have  
\begin{align}\label{eq:Linfty_Bound1}
\norm{\ical^{j}}_{\infty} & \le \max_{i<n}\frac{\alpha}{1+\alpha\tau_i} \max_{i<n}|\tau_iB_{ij}|
    \le \frac{\alpha}{1+\alpha\tau_{n-1}} n^{-\frac{1+\delta}{3}-\frac{1}{2}}\norm{B_{\cdot j}}   
        \le b\cdot \alpha n^{-\frac{1+\delta}{3}} = b. 
\end{align}
where the second inequality is by Definition~\ref{defi:cohesion}.       The term $\iical^j$ can be decomposed into two parts, the first $n-m_A$ elements and the last $m_A$ elements.   For the first $n-m_A$ elements, we have
\begin{align}\label{eq:Linfty_Bound2}
\max_{j\in[p]}\norm{\iical^j_{1:n-m_A}}_{\infty} &\le \max_{j\in[p]}\max_{i\le n-m_A}\frac{1}{1+\alpha \tau_i}\max_{i\le n-m_A}|\tilde{E}_{ij}| = \frac{1}{1+\alpha \tau_{n-m_A}}\max_{i\le n-m_A}\max_{j\in[p]}|\tilde{E}_{ij}| \notag\\
& = \frac{1}{1+\tau_{n-m_A}n^{\frac{1+\delta}{3}}}\max_{j\in[p]}\max_{i\le n-m_A}|\tilde{E}_{ij}| 
    \le \frac{\sqrt{4\sigma^2\log(p(n-m_A))}}{\tau_{n-m_A}n^{\frac{1+\delta}{3}}}\notag\\
& \le \sqrt{4\sigma^2\log(p(n-m_A))}n^{-\frac{1+\delta}{3}}\sqrt{m_A}
\end{align}
by Definition~\ref{defi:highrank}, with probability at least $1-\exp(-c\log(p(n-m_A)))$.   For the remaining $m_A$ elements, with probability at least $1-\exp(-c'\log(pm_A))$, we have
\begin{align}\label{eq:Linfty_Bound3}
\max_{j \in [p]}\norm{\iical^j_{n-m_A+1:n}}_{\infty} & = \max_j \sum_{i >n-m_A}\frac{|\tilde{E}_{ij}|}{1+\alpha\tau_i} \notag\\
& = \max_j \sum_{n-m_A<i < n}\frac{|\tilde{E}_{ij}|}{1+n^{\frac{1+\delta}{3}}\tau_i}|\tilde{E}_{ij}| + \max_j|\tilde{E}_{nj}|\notag\\
& \le \frac{\sqrt{4\sigma^2\log{(pm_A)}}}{1+\Delta} + \sqrt{4\sigma^2\log(p)}.
\end{align}
Combining \eqref{eq:Linfty_Bound1}--\eqref{eq:Linfty_Bound3} leads to \eqref{eq:MeanMaxBound}, since 
\begin{align*}
\norm{\hat{B}-B}_{\infty} &\le \max_{j \in [p]}\norm{\ical^j}_{\infty} + \max_{j\in[p]}\norm{\iical^j_{1:n-m_A}}_{\infty} + \max_{j \in [p]}\norm{\iical^j_{n-m_A+1:n}}_{\infty} \\
& \le b+ \sqrt{4\sigma^2\log{p(n-m_A)}}n^{-\frac{1+\delta}{3}}\sqrt{m_A} + \frac{\sqrt{4\sigma^2\log{(pm_A)}}}{1+\Delta} + \sqrt{4\sigma^2\log(p)}\\
& \le (b+2\sigma) [(\sqrt{\log{p(n-m_A)}}n^{-\frac{1+\delta}{3}}\sqrt{m_A}) \vee \frac{\sqrt{\log(pm_A)}}{1+\Delta} \vee \log(p)]
\end{align*}
with probability at least $1-\exp(-c\log(p(n-m_A)))-\exp(-c'\log(pm_A))$ for sufficiently large $n$.\newline

For the column-wise $L_{\infty}$ norm, we have
\begin{align}\label{eq:L1_Bound1}
  \max_j &  \norm{\ical^j}_1 = \max_j\sum_i \frac{\alpha\tau_i|B_{ij}|}{1+\alpha\tau_i}
  \le \max_j \Big( \sum_{i\le n-m_A}|B_{ij}| + \sum_{i>n-m_A}\frac{\alpha\tau_i|B_{ij}|}{1+\alpha\tau_i} \Big)\notag\\
& \le  \max_j \Big( b\frac{n-m_A}{\tau_{n-m_A}}n^{-\frac{1+\delta}{3}} + b\sum_{i>n-m_A}\frac{\alpha}{1+\alpha\tau_{n-1}} n^{-\frac{1+\delta}{3}} \Big)  ~~\text{|~by Assumption~\ref{ass:regressionCoef}~|} \notag\\
& \le  \max_jb\Big(\frac{n-m_A}{\tau_{n-m_A}}n^{-\frac{1+\delta}{3}} + \sum_{n-m_A<i < n}\frac{\alpha n^{-\frac{1+\delta}{3}}}{1+\Delta} + \alpha n^{-\frac{1+\delta}{3}}\Big) \notag\\
&    =  b((n-m_A)\sqrt{m_A} n^{-\frac{1+\delta}{3}} + \frac{m_A}{1+\Delta}+1).
\end{align}
For the second term, 
\begin{align}\label{eq:L1_Bound2}
\max_j\norm{\iical^j}_1 &\le \max_j\sum_{i\le n-m_A} \frac{1}{1+\alpha\tau_i}|\tilde{E}_{ij}| +  \max_j\sum_{i>n-m_A} \frac{1}{1+\alpha\tau_i}|\tilde{E}_{ij}\notag|\\
& \le  \frac{1}{1+\tau_{n-m_A}n^{\frac{1+\delta}{3}}}\max_j\sum_{i\le n-m_A}|\tilde{E}_{ij}| + \max_j\sum_{n-m_A<i < n} \frac{|\tilde{E}_{ij}|}{1+\Delta} + \max_j|\tilde{E}_{nj}|.
\end{align}
By Lemma~\ref{lem:concentration}, for each $j\in [p]$, $\p(\sum_{i\le n-m_A}|\tilde{E}_{ij}| > 2\sigma(n-m_A)) \le \exp(-2c(n-m_A))$ for some constant $c$; therefore 
$$\p(\max_j\sum_{i\le n-m_A}|\tilde{E}_{ij}| > 2\sigma(n-m_A)) \le p\exp(-2c(n-m_A)) \le \exp(-cn), $$
as long as $\log{p} = o(n)$ and $m_A = o(n)$.

Assume $m_A\ge 2$, again by Lemma~\ref{lem:concentration}, for each $j\in [p]$, 
$$\p(\sum_{n-m_A<i<n} |\tilde{E}_{ij}|>2\sigma (m_A-1)\sqrt{c'\log{p}}) \le \exp(-2Cm_A\log{p})$$
 for some constant $C,c'>0$ with $C>1$. Therefore, 
$$\p(\max_j\sum_{n-m_A<i<n} |\tilde{E}_{ij}|>2\sigma m_A\sqrt{c'\log{p}}) \le p\exp(-2Cm_A\log{p}) \le \exp(-C(m_A-1)\log{p})$$
and
$$\p(\max_j|\tilde{E}_{nj}| > 2\sigma\sqrt{c'\log{p}}) \le \exp(-C\log{p}).$$
The above result is also trivially true if $m_A=1$. Substituting these two inequalities into \eqref{eq:L1_Bound2} gives
\begin{align}\label{eq:L1_Bound3}
\max_j\norm{\iical^j}_1 & \le  \frac{2\sigma(n-m_A)}{\tau_{n-m_A}n^{\frac{1+\delta}{3}}} + 2\sigma (\frac{m_A}{\Delta+1}+1)\sqrt{c'\log{p}}\notag\\
& \le 2\sigma((n-m_A)\sqrt{m_A}n^{-\frac{1+\delta}{3}}+(\frac{m_A}{\Delta+1}+1)\sqrt{c'\log{p}})
\end{align}
with probability at least $1-\exp(-cn)-\exp(-Cm_A\log{p})-\exp(-C\log{p})$.   Now combining \eqref{eq:L1_Bound1} and \eqref{eq:L1_Bound3}, we get
\begin{align*}
\norm{\hat{B}-B}_{1,1} & \le \max_j\norm{\ical^j}_1 + \max_j\norm{\iical^j}_1 \\
&\le b((n-m_A)\sqrt{m_A} n^{-\frac{1+\delta}{3}} + (\frac{m_A}{\Delta+1}+1))   + 2\sigma((n-m_A)\sqrt{m_A}n^{-\frac{1+\delta}{3}}+(\frac{m_A}{\Delta+1}+1)\sqrt{c'\log{p}}) \\
& \le (b+2\sigma)(\sqrt{m_A}n^{\frac{2-\delta}{3}}+(\frac{m_A}{\Delta+1}+1)(1\vee \sqrt{c'\log{p}}))\\
&\le (1\vee \sqrt{c'})(b+2\sigma)(\sqrt{m_A}n^{\frac{2-\delta}{3}}+(\frac{m_A}{\Delta+1}+1)\sqrt{\log{p}}).
\end{align*}
with probability at least $1-\exp(-cn)-\exp(-Cm_A\log{p})-\exp(-C\log{p})$ as long as $p\ge 3$.

Finally, for the Frobenius norm we have
\begin{align}\label{eq:L2_Bound1}
\sum_j\norm{\ical^j}_2^2 &= \sum_j\sum_i \frac{\alpha^2\tau_i^2|B_{ij}|^2}{(1+\alpha\tau_i)^2} \le \sum_j\Big(\sum_{i\le n-m_A}|B_{ij}|^2 +\sum_{i>n-m_A}\frac{\alpha^2\tau_i^2|B_{ij}|^2}{(1+\alpha\tau_i)^2}\Big)\notag\\
& \le b^2\sum_j\Big(\frac{n-m_A}{\tau_{n-m_A}^2}n^{-\frac{2(1+\delta)}{3}}+ \sum_{i>n-m_A}(\frac{\alpha}{1+\alpha\tau_{n-1}})^2n^{-\frac{2(1+\delta)}{3}}\Big) ~~\text{|~by Assumption~\ref{ass:regressionCoef}~|} \notag\notag\\
& \le  b^2p\Big((n-m_A)m_An^{-\frac{2(1+\delta)}{3}} + \frac{m_A}{(1+\Delta)^2} + 1\Big).
\end{align}
For the second term, 
\begin{align}\label{eq:L2_Bound2}
\sum_j\norm{\iical^j}_2^2 &= \sum_j\Big(\sum_{i\le n-m_A} (\frac{1}{1+\alpha\tau_i})^2|\tilde{E}_{ij}|^2 +  \sum_{i>n-m_A} (\frac{1}{1+\alpha\tau_i})^2|\tilde{E}_{ij}|^2\Big)\notag\\
& \le \frac{1}{\tau_{n-m_A}^2}n^{-\frac{2(1+\delta)}{3}}\sum_{i\le n-m_A} \sum_j|\tilde{E}_{ij}|^2  + \sum_{n-m_A<i <n} \sum_j|\tilde{E}_{ij}|^2/(1+\Delta)^2 + \sum_j|\tilde{E}_{nj}|^2\notag\\
&\le m_An^{-\frac{2(1+\delta)}{3}}\sum_{i\le n-m_A} \norm{\tilde{E}_{i\cdot}}_2^2  + \sum_{n-m_A<i <n} \norm{\tilde{E}_{i\cdot}}_2^2/(1+\Delta)^2 + \norm{\tilde{E}_{n\cdot}}^2.
\end{align}
If $m_A\ge 2$, by \eqref{eq:4thConcentration} from Lemma~\ref{lem:concentration}, for a proper $c$, we have
\begin{align*}
  \p(\sum_{i\le n-m_A} \norm{\tilde{E}_{i\cdot}}_2^2 > 2(n-m_A)p\sigma^2) & \le \p(\sum_{i\le n-m_A} \norm{\tilde{E}_{i\cdot}}_2^2 > 2(n-m_A)\tr(\Sigma)) \le 2\exp(-c(n-m_A)r(\Sigma)) , \\
  \p(\sum_{n-m_A<i <n} \norm{\tilde{E}_{i\cdot}}_2^2 > 2m_Ap\sigma^2) &\le \p(\sum_{i> n-m_A} \norm{\tilde{E}_{i\cdot}}_2^2 > 2m_A\tr(\Sigma)) \le 2\exp(-cm_Ar(\Sigma)),\\
  \p(\norm{\tilde{E}_{n\cdot}}^2 > 2p\sigma^2) &\le 2\exp(-c\frac{p\sigma^2}{\phi_{max}(\Sigma)}).
  \end{align*}
Putting everything together, 
\begin{align*}
\norm{\hat{B}-B}_F^2 &\le \sum_j\norm{\ical^j}_2^2+\sum_j\norm{\iical^j}_2^2\\
& \le b^2p\Big((n-m_A)m_An^{-\frac{2(1+\delta)}{3}} + \frac{m_A}{(1+\Delta)^2}+1\Big)\\
&~~~~~~~~~~~~~~ + 2(n-m_A)pm_An^{-\frac{2(1+\delta)}{3}}\sigma^2  + 2(\frac{m_A}{(1+\Delta)^2}+1)p\sigma^2\\
& = (b^2+2\sigma^2)p\Big((n-m_A)m_An^{-\frac{2(1+\delta)}{3}} + \frac{m_A}{(1+\Delta)^2}+1\Big)
\end{align*}
with probability at least $1-2\exp(-c(n-m_A)r(\Sigma))-2\exp(-cm_Ar(\Sigma))- 2\exp(-c\frac{p\sigma^2}{\phi_{max}(\Sigma)})$. Notice that when $m_A=1$, the above result is trivially true.

\end{proof}

\begin{proof}[Proof of Theorem~\ref{thm:MeanInitialError}]
  By definition, we have $\norm{\hat{M}-M}_F = \norm{U(\hat{B}-B)}_F = \norm{\hat{B}-B}_F$.   Thus the theorem follows directly from Lemma~\ref{lem:meanCoefbound} and the fact that $n-m_A\le n$.    Note that the Frobenius norm bound in Lemma~\ref{lem:meanCoefbound} does not need $\log{p} = o(n)$ and $m_A=o(n)$.
\end{proof}


Now we proceed to prove Theorem~\ref{thm:two-stageMaxBound}. Let 
\begin{align*}
  \hat{S} = \frac{1}{n}(X-\hat{M})^T(X-\hat{M}) \\
  S = \frac{1}{n}(X-M)^T(X-M)
  \end{align*}
$S$ is the sample covariance matrix used by the glasso algorithm when the mean is assumed known (and without loss of generality set to 0).   The success of glasso is dependent on $S$ concentrating around the true covariance matrix $\Sigma$.    If we can show $\hat S$ concentrates around $\Sigma$, we should be able to prove similar properties of GNC-lasso.

\begin{lem}\label{lem:Sconcentration}
Under the conditions of Theorem~\ref{thm:MeanInitialError} and assuming $\log{p} = o(n), m_A = o(n)$, we have
\begin{align*}
\norm{\hat{S}-\Sigma}_{\infty} \le C\max\Big(& \sqrt{\log{(pn)}}m_An^{-\frac{2+2\delta}{3}}  ,  \sqrt{\log{(pn)}}\sqrt{\log p}m_A^{3/2}n^{-\frac{4+\delta}{3}},\notag\\
&\sqrt{\log{(pn)}}\sqrt{m_A}n^{-\frac{1+\delta}{3}} ,  \sqrt{\log{(pn)}}\sqrt{\log p}\frac{m_A}{n}      ,     \sqrt{\frac{\log p}{n}}              \Big)
\end{align*}
with probability at least $1-\exp(-c\log(p(n-m_A))) - \exp(-c\log(pm_A)) - \exp(-c\log{p})$ for some constant $C$ and $ c$  that only depend on $b$ and $\sigma$.
\end{lem}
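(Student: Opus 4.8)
The plan is to split $\hat S - \Sigma = (\hat S - S) + (S - \Sigma)$, where $S = \tfrac1n (X-M)^T(X-M) = \tfrac1n E^T E$ with $E = X-M$. The second piece, $\|S-\Sigma\|_\infty$, is exactly the sample–covariance deviation that drives the standard glasso analysis: writing $S_{jk}-\Sigma_{jk}$ as a normalized sum of i.i.d.\ centered sub‑exponential variables $E_{ij}E_{ik}-\Sigma_{jk}$ and union bounding over the $\binom{p}{2}+p$ index pairs gives $\|S-\Sigma\|_\infty \le C\sqrt{\log p/n}$ with probability $1-\exp(-c\log p)$; this is the last term of the stated maximum, and the only one that survives under trivial cohesion. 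So the real work is to bound $\|\hat S - S\|_\infty$ in terms of the mean–estimation error $\hat M-M$, for which Lemma~\ref{lem:meanCoefbound} already provides sharp deterministic and stochastic control.

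For that, set $\hat\Delta = \hat M - M$, so $X-\hat M = E - \hat\Delta$ and
\[
 n(\hat S - S) = -E^T\hat\Delta - \hat\Delta^T E + \hat\Delta^T\hat\Delta .
\]
Passing to the Laplacian eigenbasis, $\hat\Delta = U(\hat B - B)$, and since $U$ is orthogonal, $E^T\hat\Delta = \tilde E^T(\hat B - B)$ and $\hat\Delta^T\hat\Delta = (\hat B - B)^T(\hat B - B)$, where $\tilde E = U^T E$ again has i.i.d.\ rows $\ncal(0,\Sigma)$. Inserting the decomposition $(\hat B - B)_{\cdot k} = -\ical^k + \iical^k$ from the proof of Lemma~\ref{lem:meanCoefbound}, with $\ical^k_i = \tfrac{\alpha\tau_i}{1+\alpha\tau_i}B_{ik}$ deterministic and $\iical^k_i = \tfrac{1}{1+\alpha\tau_i}\tilde E_{ik}$, the entry $(\hat S-S)_{jk}$ becomes a sum of a handful of terms of three types: (a) purely deterministic, $\ical^{jT}\ical^k$; (b) linear in the noise, $\tilde E_{\cdot j}^T\ical^k$ and $\iical^{jT}\ical^k$; and (c) quadratic in the noise, $\tilde E_{\cdot j}^T\iical^k$ and $\iical^{jT}\iical^k$. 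Each is bounded in $\|\cdot\|_\infty=\max_{j,k}$ uniformly over $j,k\in[p]$: type (a) by the deterministic $\ell_2,\ell_1$ bounds on $\ical^k$ already derived inside the proof of Lemma~\ref{lem:meanCoefbound} (cohesion gives $|\ical^k_i|\le b\sqrt{m_A}n^{-(1+\delta)/3}$ for $i\le n-m_A$ and $|\ical^k_i|\le b/(1+\Delta)$ for $n-m_A<i<n$, using $\alpha = n^{(1+\delta)/3}$ and Assumption~\ref{ass:regressionCoef}); type (b) by the Gaussian tail $\tilde E_{\cdot j}^T\ical^k\sim\ncal(0,\Sigma_{jj}\|\ical^k\|_2^2)$ with a union bound over the $p^2$ pairs, together with $\max_j\|\tilde E_{\cdot j}\|_\infty\le C\sigma\sqrt{\log(pn)}$ from Lemma~\ref{lem:concentration}; and type (c) by isolating the deterministic means $\Sigma_{jk}\tr(D_\alpha)$, $\Sigma_{jk}\tr(D_\alpha^2)$ of the quadratic forms (here $D_\alpha = (I+\alpha\Lambda)^{-1}$) and applying sub-exponential (Hanson–Wright–type) concentration to the centered remainders, again uniformly over $j,k$. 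Throughout, the network enters only through $\sum_{i\le n-m_A}(1+\alpha\tau_i)^{-\ell}\le (n-m_A)m_A^{\ell/2}n^{-\ell(1+\delta)/3}$ (Definition~\ref{defi:highrank}) and $\sum_{n-m_A<i<n}(1+\alpha\tau_i)^{-\ell}\le m_A(1+\Delta)^{-\ell}$, $\ell=1,2$, exactly as in Lemma~\ref{lem:meanCoefbound}. Splitting every sum at index $n-m_A$ produces failure events of three kinds — on the first $n-m_A$ eigencoordinates, on coordinates $n-m_A<i<n$, and on coordinate $n$ (plus the $S-\Sigma$ bound) — whose union bounds over the relevant number of columns give the three probability terms $\exp(-c\log(p(n-m_A)))$, $\exp(-c\log(pm_A))$, $\exp(-c\log p)$; the simplification $n-m_A\le n$ is used as in the proof of Theorem~\ref{thm:MeanInitialError}.

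The final step is to collect all the pieces, substitute $\alpha = n^{(1+\delta)/3}$, divide by $n$, and discard dominated terms; the five members of the claimed maximum are precisely the surviving contributions of the deterministic $\ical$–$\ical$ term, the noise–mean/noise–deterministic cross terms, the noise–noise terms on the spectral bulk, the noise–noise terms on the low-frequency ($i>n-m_A$) block, and the baseline $\sqrt{\log p/n}$ from $S-\Sigma$. I expect the main obstacle to be exactly this bookkeeping: there are on the order of ten cross-terms, each further splitting into a bulk part and a low-frequency part, and one must track the powers of $m_A$, of $n$, of $(1+\Delta)$ and of the logarithmic factors carefully enough to verify that they collapse to this five-term envelope (in particular the $m_A^{3/2}n^{-(4+\delta)/3}$ term, which arises from pairing the $\ell_1$-in-the-bulk control of one factor against the low-frequency control of the other). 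The one genuinely non-routine probabilistic ingredient is the uniform-in-$(j,k)$ concentration of the weighted Gaussian bilinear forms $\sum_i(1+\alpha\tau_i)^{-1}\tilde E_{ij}\tilde E_{ik}$ and their squared-weight analogues around their means; everything else reduces to the Gaussian and $\chi^2$ inequalities of Lemma~\ref{lem:concentration} plus union bounds.
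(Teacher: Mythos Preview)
Your high-level plan is correct and matches the paper: split $\hat S-\Sigma=(\hat S-S)+(S-\Sigma)$, expand $\hat S-S=\tfrac1n(\hat B-B)^T(\hat B-B)-\tfrac1n \tilde E^T(\hat B-B)-\tfrac1n(\hat B-B)^T\tilde E$ with $\tilde E=U^TE$, and handle $S-\Sigma$ by the standard sub-exponential bound. Where you diverge is in the execution of the $\hat S-S$ part. You propose to reopen the $\ical/\iical$ decomposition from the proof of Lemma~\ref{lem:meanCoefbound}, split every bilinear form at the spectral index $n-m_A$, and invoke Hanson--Wright--type concentration for the quadratic-in-noise pieces. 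The paper avoids all of this: it simply uses the elementary entrywise inequality $\|A^TB\|_\infty\le\|A\|_\infty\,\|B\|_{1,1}$ twice, giving
\[
\tfrac1n\|(\hat B-B)^T(\hat B-B)\|_\infty\le\tfrac1n\|\hat B-B\|_\infty\|\hat B-B\|_{1,1},\qquad
\tfrac1n\|\tilde E^T(\hat B-B)\|_\infty\le\tfrac1n\|\tilde E\|_\infty\|\hat B-B\|_{1,1},
\]
and then plugs in the already-proved bounds \eqref{eq:MeanMaxBound} and \eqref{eq:MeanColumnMaxBound} together with the trivial Gaussian-max bound $\|\tilde E\|_\infty\le C\sigma\sqrt{\log(np)}$. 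All the spectral splitting and probability accounting is thus inherited from Lemma~\ref{lem:meanCoefbound}; no new concentration argument is needed here. Your finer decomposition would also succeed and, in principle, could shave some constants or logarithmic factors, but it is considerably more bookkeeping for no gain at the level of the stated bound. In short: same architecture, but the paper treats $\hat B-B$ as a black box with known $\|\cdot\|_\infty$ and $\|\cdot\|_{1,1}$ bounds, whereas you reprove those bounds inline.
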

\begin{proof}[Proof of Lemma~\ref{lem:Sconcentration}]
We will be using $C$ and $c$ to denote generic constants whose value might change across different lines. Using the triangular inequality, 
$$\norm{\hat{S}-\Sigma}_{\infty} \le \norm{\hat{S}-S}_{\infty} + \norm{S-\Sigma}_{\infty}.$$
we will prove concentration in two steps.   Starting with the first term and writing $X=M+E$, we have 
\begin{align}\label{eq:ShatDecomposition}
\hat{S} - S& = \frac{1}{n}(UB+E-U\hat{B})^T(UB+E-U\hat{B}) - \frac{1}{n}E^TE\notag\\
& = \frac{1}{n}[(\hat{B}-B)^T(\hat{B}-B)-E^TU((\hat{B}-B))-((\hat{B}-B))^TU^TE + E^TE] - \frac{1}{n}E^TE\notag\\
& = \frac{1}{n}(\hat{B}-B)^T(\hat{B}-B)-\frac{1}{n}E^TU(\hat{B}-B)-\frac{1}{n}(\hat{B}-B)^TU^TE.
\end{align}
By Lemma~\ref{lem:meanCoefbound}, for some constant $C$ depending on $b$ and $\sigma$, 
\begin{align}\label{eq:ShatBound1}
\norm{\frac{1}{n}(\hat{B}-B)^T(\hat{B}-B)}_{\infty} & \le \frac{1}{n}\norm{\hat{B}-B}_{\infty}\norm{\hat{B}-B}_{1,1} \notag\\
& \le \frac{C}{n} [(\sqrt{\log{p(n-m_A)}}n^{-\frac{1+\delta}{3}}\sqrt{m_A}) \vee \frac{\sqrt{\log(pm_A)}}{1+\Delta} \vee \sqrt{\log(p)}] \times \notag\\
&~~~~~~~~~~~~~~~~~~~~~~~[\sqrt{m_A} n^{\frac{2-\delta}{3}}\vee \sqrt{\log{p}}(\frac{m_A}{\Delta+1}+1)]\notag\\
& \le C\max\Big(  \sqrt{\log{(pn)}}m_An^{-\frac{2+2\delta}{3}}  ,  \sqrt{\log{(pn)}}\sqrt{\log p}n^{-\frac{4+\delta}{3}}\sqrt{m_A}(\frac{m_A}{\Delta+1}+1),\notag\\
&~~~~~~~~~~~~~  \frac{\sqrt{\log{(pm_A)}}\sqrt{m_A}n^{-\frac{1+\delta}{3}}}{1+\Delta} ,  \frac{\sqrt{\log{(pm_A)}}\sqrt{\log p}}{(1+\Delta)n}(\frac{m_A}{\Delta+1}+1) ,\notag\\
&~~~~~~~~~~~~~       \sqrt{\log{p}}\sqrt{m_A}n^{-\frac{1+\delta}{3}},     \frac{\log{p}}{n}(\frac{m_A}{\Delta+1}+1)   \Big)
\end{align}
with probability at least $1-\exp(-c\log(p(n-m_A))) - \exp(-c\log(pm_A)) - \exp(-c\log(p))$.

On the other hand, note that $U^TE = (U_{\i\cdot}E_{\cdot j})_{i,j=1}^n$ and $\norm{U_{i\cdot}}_2 = 1$, so $(U^TE)_{ij} \sim \ncal(0,\sigma^2)$.   Therefore 
$$\norm{U^TE}_{\infty} \le \sqrt{2\sigma^2\log(np)}$$
with probability at least $1-\exp(-c\log(np))$.    Hence the second and third terms in \eqref{eq:ShatDecomposition} satisfy
\begin{align}\label{eq:ShatBound2}
\norm{\frac{1}{n}E^TU(\hat{B}-B)}_{\infty}& \le \frac{1}{n}\norm{U^TE}_{\infty}\norm{\hat{B}-B}_{1,1}\notag\\
& \le C\frac{1}{n}\sqrt{\log(np)}[\sqrt{m_A} n^{\frac{2-\delta}{3}}\vee \sqrt{\log{p}}(\frac{m_A}{\Delta+1}+1)]\notag \\
& = C [\sqrt{\log{(np)}}\sqrt{m_A}n^{-\frac{1+\delta}{3}} + \frac{\sqrt{\log{(np)}}\sqrt{\log p}}{n}(\frac{m_A}{\Delta+1}+1)]
\end{align}
with probability at least $1-\exp(-c\log(p(n-m_A)) - \exp(-c\log(pm_A)) -\exp(-c\log(np))$. Note that both terms in \eqref{eq:ShatBound2} dominate the last two terms in \eqref{eq:ShatBound1}. Thus substituting \eqref{eq:ShatBound1} and \eqref{eq:ShatBound2} into \eqref{eq:ShatDecomposition} leads to
\begin{align*}
\norm{\hat{S} - S}_{\infty} &\le C\max\Big(  \sqrt{\log{(pn)}}m_An^{-\frac{2+2\delta}{3}}  ,  \sqrt{\log{(pn)}}\sqrt{\log p}n^{-\frac{4+\delta}{3}}\sqrt{m_A}(\frac{m_A}{\Delta+1}+1),\notag\\
&~~~~~~~~~~~~~ \sqrt{\log{(np)}}\sqrt{m_A}n^{-\frac{1+\delta}{3}} ,  \frac{\sqrt{\log{(np)}}\sqrt{\log p}}{n}(\frac{m_A}{\Delta+1}+1) ,\notag\\
&~~~~~~~~~~~~~       \sqrt{\log{p}}\sqrt{m_A}n^{-\frac{1+\delta}{3}},     \frac{\log{p}}{n}(\frac{m_A}{\Delta+1}+1)   \Big)
\end{align*}
with probability at least $1-\exp(-c\log(p(n-m_A))) - \exp(-c\log(pm_A))- \exp(-c\log(p))$.

In addition, Lemma 1 of \cite{ravikumar2011high} implies that 
$$\norm{S-\Sigma}_{\infty} \le  \sqrt{\frac{2c\log p}{n}}$$
with probability at least $1-\exp(-c\log{p})$. Therefore, we have
\begin{align*}
\norm{\hat{S}-\Sigma}_{\infty} &\le C\max\Big(  \sqrt{\log{(pn)}}m_An^{-\frac{2+2\delta}{3}}  ,  \sqrt{\log{(pn)}}\sqrt{\log p}n^{-\frac{4+\delta}{3}}\sqrt{m_A}(\frac{m_A}{\Delta+1}+1),\notag\\
&~~~~~~~~~~~~~ \sqrt{\log{(np)}}\sqrt{m_A}n^{-\frac{1+\delta}{3}} ,  \frac{\sqrt{\log{(np)}}\sqrt{\log p}}{n}(\frac{m_A}{\Delta+1}+1) ,\notag\\
&~~~~~~~~~~~~~       \sqrt{\log{p}}\sqrt{m_A}n^{-\frac{1+\delta}{3}},     \frac{\log{p}}{n}(\frac{m_A}{\Delta+1}+1) ,\sqrt{\frac{\log p}{n}}  \Big)
\end{align*}
with probability at least $1-\exp(-c\log(p(n-m_A))) - \exp(-c\log(pm_A))-2\exp(-c\log{p})$.

\end{proof}

For conciseness, we present Theorem~\ref{thm:two-stageMaxBound} in the main text by assuming the more interesting situations $p \ge  n^{c_0}$. This is not necessary in any sense, so here we prove a trivially more general version of Theorem~\ref{thm:two-stageMaxBound} without the high-dimensional assumption. For completeness, we rewrite the theorem here.

\begin{thm}[Trivially generalized version of Theorem~\ref{thm:two-stageMaxBound}]\label{thm:two-stageMaxBound'}
Under the conditions of Theorem~\ref{thm:MeanInitialError} and Assumption~\ref{ass:irrepresentability}, if  $\log{p} = o(n)$ and $m_A= o(n)$, there exist some positive constants $C,c, c', c''$ that only depend on $b$ and $\sigma$, such that if $\hat{\Theta}$ is the output of Algorithm~\ref{algo:twostage} with $\alpha = n^{\frac{1+\delta}{3}}$, $\lambda = \frac{8}{\rho}\nu(n,p)$ where
\begin{align}\label{eq:nu'}
\nu(n,p) &:= C\max\Big(  \sqrt{\log{(pn)}}m_An^{-\frac{2+2\delta}{3}}  ,  \sqrt{\log{(pn)}}\sqrt{\log p}n^{-\frac{4+\delta}{3}}\sqrt{m_A}(\frac{m_A}{\Delta+1}+1),\notag\\
&~~~~~~~~~~~~~ \sqrt{\log{(np)}}\sqrt{m_A}n^{-\frac{1+\delta}{3}} ,  \frac{\sqrt{\log{(np)}}\sqrt{\log p}}{n}(\frac{m_A}{\Delta+1}+1) ,\notag\\
&~~~~~~~~~~~~~       \sqrt{\log{p}}\sqrt{m_A}n^{-\frac{1+\delta}{3}},     \frac{\log{p}}{n}(\frac{m_A}{\Delta+1}+1) ,\sqrt{\frac{\log p}{n}}  \Big)
\end{align}
and $n$ sufficiently large so that  
 $$\nu(n,p)<\frac{1}{6(1+8/\rho)\psi\max\{\kappa_{\Sigma}\kappa_{\Gamma},  (1+8/\rho)\kappa_{\Sigma}^3\kappa_{\Gamma}^2   \}},$$
 then with probability at least $1-\exp(-c\log(p(n-m_A))) - \exp(-c'\log(pm_A)) - \exp(-c''\log{p})$,  then the estimate $\hat{\Theta}$ has the following properties: 
\begin{enumerate}
\item  Error bounds:  
\begin{eqnarray*}
\norm{\hat{\Theta}-\Theta}_{\infty} & \le &  2(1+8/\rho)\kappa_{\Gamma}\nu(n,p)
  \\
   \norm{\hat{\Theta}-\Theta}_F & \le  & 2(1+8/\rho)\kappa_{\Gamma}\nu(n,p)\sqrt{s+p}.  \label{eq:FrobeniusErrorBound}\\
   \norm{\hat{\Theta}-\Theta}~~ &\le& 2(1+8/\rho)\kappa_{\Gamma}\nu(n,p)\min(\sqrt{s+p}, \psi).    
 \end{eqnarray*}

 \item Support recovery: 
   $$S(\hat{\Theta}) \subset S(\Theta), $$
   and if additionally 
$\min_{(j,j')\in S(\Theta)}|\Theta_{jj'}| > 2(1+8/\rho)\kappa_{\Gamma}\nu(n,p),$
then $$S(\hat{\Theta}) = S(\Theta).$$
%
\end{enumerate}
\end{thm}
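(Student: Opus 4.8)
The plan is to reduce Theorem~\ref{thm:two-stageMaxBound'} to the deterministic analysis of the graphical lasso from \cite{ravikumar2011high}, feeding it the concentration bound on $\hat{S}$ already established in Lemma~\ref{lem:Sconcentration}. The only stochastic ingredient needed is the event
$$\mathcal{E} := \{\norm{\hat{S}-\Sigma}_{\infty} \le \nu(n,p)\},$$
with $\nu(n,p)$ as in \eqref{eq:nu'}; by Lemma~\ref{lem:Sconcentration} this event has probability at least $1-\exp(-c\log(p(n-m_A))) - \exp(-c'\log(pm_A)) - \exp(-c''\log{p})$, where the constants depend only on $b$ and $\sigma$. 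Everything else will be a deterministic consequence of working on $\mathcal{E}$.

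First I would note that $\hat{S} = \frac1n(X-\hat{M})^T(X-\hat{M}) \succeq 0$, so \eqref{eq:glassoObj} is a genuine strictly convex (for $\lambda>0$) log-determinant program with a unique positive-definite minimizer $\hat{\Theta}$. Then, on $\mathcal{E}$ and with $\lambda = \frac{8}{\rho}\nu(n,p)$, I would run the primal--dual witness construction of \cite{ravikumar2011high}: solve the $\ell_1$-penalized program restricted to $\Theta_{S^c}=0$ where $S=S(\Theta)$, obtaining a candidate $(\tilde{\Theta},\tilde{Z})$ with $\tilde{\Theta}$ supported on $S$; bound the remainder in the second-order Taylor expansion of $\nabla\log\det$ around $\Theta$, which on $\mathcal{E}$ gives $\norm{\tilde{\Theta}-\Theta}_{\infty} \le 2(1+8/\rho)\kappa_{\Gamma}\nu(n,p)$ once the smallness condition $\nu(n,p) < \big(6(1+8/\rho)\psi\max\{\kappa_{\Sigma}\kappa_{\Gamma},\,(1+8/\rho)\kappa_{\Sigma}^3\kappa_{\Gamma}^2\}\big)^{-1}$ holds; and verify strict dual feasibility $\norm{\tilde{Z}_{S^c}}_{\infty} < 1$ using the edge-level irrepresentability Assumption~\ref{ass:irrepresentability} together with the same smallness condition. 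Strict dual feasibility certifies $\tilde{\Theta}$ as the unique optimum of the unrestricted program, hence $\hat{\Theta}=\tilde{\Theta}$, giving simultaneously the entrywise bound $\norm{\hat{\Theta}-\Theta}_{\infty} \le 2(1+8/\rho)\kappa_{\Gamma}\nu(n,p)$ and the inclusion $S(\hat{\Theta})\subseteq S(\Theta)$.

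The remaining conclusions are short deductions. Since $\hat{\Theta}-\Theta$ vanishes outside $S(\Theta)$ (together with the diagonal positions), it has at most $s+p$ nonzero entries, so $\norm{\hat{\Theta}-\Theta}_F \le \sqrt{s+p}\,\norm{\hat{\Theta}-\Theta}_{\infty}$; and $\norm{\hat{\Theta}-\Theta} \le \min(\norm{\hat{\Theta}-\Theta}_F,\,\norm{\hat{\Theta}-\Theta}_{\infty,\infty}) \le \min(\sqrt{s+p},\psi)\,\norm{\hat{\Theta}-\Theta}_{\infty}$, since each row of $\hat{\Theta}-\Theta$ has at most $\psi$ nonzeros. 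For exact recovery, if $\min_{(j,j')\in S(\Theta)}|\Theta_{jj'}| > 2(1+8/\rho)\kappa_{\Gamma}\nu(n,p) \ge \norm{\hat{\Theta}-\Theta}_{\infty}$, no true nonzero can be estimated as zero, so $S(\Theta)\subseteq S(\hat{\Theta})$, hence equality. Finally, the main-text Theorem~\ref{thm:two-stageMaxBound} follows by specializing $p\ge n^{c_0}$, where $\log(pn)\asymp\log(np)\asymp\log p$, so several of the seven terms in \eqref{eq:nu'} merge or become dominated and $\nu(n,p)$ collapses to the four-term form \eqref{eq:nu}; the Frobenius-only remark follows by the same route but using the \cite{rothman2008sparse} proof strategy, which trades Assumption~\ref{ass:irrepresentability} for Assumption~\ref{ass:spectralBound}.

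The main obstacle is not any new probabilistic estimate, since that work is already isolated in Lemma~\ref{lem:Sconcentration}; it is checking that the \cite{ravikumar2011high} primal--dual witness machinery applies verbatim with a \emph{generic} deviation level $\nu(n,p)$ in place of their i.i.d.\ sub-Gaussian rate $\sqrt{\log p/n}$. Concretely, one must confirm that their argument uses $\hat{S}$ only through $\hat{S}\succeq 0$ and $\norm{\hat{S}-\Sigma}_{\infty}\le\nu(n,p)$, that the choice $\lambda=\frac{8}{\rho}\nu(n,p)$ and the smallness condition translate with the correct constants (this is why the factor $8/\rho$ and the particular denominator appear), and that the final constants $c,c',c''$ can be taken to depend only on $b,\sigma$ once $\kappa_\Gamma,\kappa_\Sigma,\rho$ are treated as fixed. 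This is careful bookkeeping rather than a genuinely new difficulty.
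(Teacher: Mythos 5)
Your proposal is correct and follows essentially the same route as the paper: condition on the event from Lemma~\ref{lem:Sconcentration} that $\norm{\hat{S}-\Sigma}_{\infty}\le \nu(n,p)=\frac{\rho}{8}\lambda$, run the primal--dual witness construction of \cite{ravikumar2011high} to get the entrywise bound and $S(\hat\Theta)\subseteq S(\Theta)$, deduce the Frobenius/spectral bounds and exact recovery as corollaries, and collapse the seven-term $\nu(n,p)$ to the four-term form when $p\ge n^{c_0}$. The paper's own proof is just a terser version of this same reduction.
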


We will use the {\em primal-dual witness} strategy from \cite{ravikumar2011high} for proof.   We show that even if $\hat{S}$ has a worse concentration around $\Sigma$ than $S$, we can still achieve consistency and sparsistency under certain regularity conditions.

\begin{proof}[Proof of  Theorem~\ref{thm:two-stageMaxBound'} and Theorem~\ref{thm:two-stageMaxBound}]
The argument follows the proof of Theorem 1 in \cite{ravikumar2011high}. In particular, for the event where the bound in Lemma~\ref{lem:Sconcentration} holds, we just have to show that the primal-dual witness construction succeeds. The choice of $\lambda = \frac{8}{\rho}\nu(n,p)$ ensures $\norm{\hat{S} - \Sigma}_{\infty} \le \frac{\rho}{8}\lambda$.  With the requirement on the sample size, the assumptions of Lemma 5 and 6 in \cite{ravikumar2011high} hold, implying strict dual feasibility holds for the primal-dual witness, which shows the procedure succeeds. Then the first claim of the theorem is a direct result of Lemma 6 in \cite{ravikumar2011high} and the second claim is true by construction of the primal-dual witness procedure. The remaining bounds can be proved similarly.

Finally, if $p \ge n^{c_0}$, we have $\log(p) \ge c_0 \log (n)$, so in \eqref{eq:nu'}, the 3rd  coincides with the 5th, and the $4$th term coincides with the 6th term, by magnitude, resulting in the form
\begin{align*}
&C\max\Big( m_A n^{-\frac{2+2\delta}{3}}  \sqrt{\log{p}},  \sqrt{m_A}n^{-\frac{4+\delta}{3}}(\frac{m_A}{\Delta+1}+1)\log{p},\notag\\
&~~~~~~~~~~~~~       \sqrt{m_A}n^{-\frac{1+\delta}{3}}\sqrt{\log{p}},     (\frac{m_A}{\Delta+1}+1) \frac{\log{p}}{n},\sqrt{\frac{\log p}{n}}  \Big)\\
 =& C\sqrt{\frac{\log p}{n}}\max\Big(1, m_A n^{-\frac{1+4\delta}{6}},  \sqrt{m_A}n^{-\frac{5+2\delta}{6}}(\frac{m_A}{\Delta+1}+1)\sqrt{\log{p}}, \sqrt{m_A}n^{\frac{1-2\delta}{6}},     (\frac{m_A}{\Delta+1}+1) \sqrt{\frac{\log{p}}{n} } \Big).
\end{align*}

To further simplified the form, we apply another upper bound for the term by the fact that $\sqrt{m_A} \ge 1$, $\frac{m_A}{\Delta+1}+1 \ge 1$ and $\delta \ge 0$, which gives

\begin{align*}
 & C\sqrt{\frac{\log p}{n}}\max\Big(1, m_A n^{-\frac{1+4\delta}{6}},  \sqrt{m_A}n^{-\frac{5+2\delta}{6}}(\frac{m_A}{\Delta+1}+1)\sqrt{\log{p}}, \sqrt{m_A}n^{\frac{1-2\delta}{6}},     (\frac{m_A}{\Delta+1}+1) \sqrt{\frac{\log{p}}{n} } \Big)\\
 & \le C\sqrt{\frac{\log p}{n}}\max\Big(1, m_A n^{-\frac{1+4\delta}{6}},  \sqrt{m_A}n^{-\frac{5+2\delta}{6}}(\frac{m_A}{\Delta+1}+1)\sqrt{\log{p}} +    (\frac{m_A}{\Delta+1}+1) \sqrt{\frac{\log{p}}{n} }, \sqrt{m_A}n^{\frac{1-2\delta}{6}} \Big)\\
 & \le C\sqrt{\frac{\log p}{n}}\max\Big(1, m_A n^{-\frac{1+4\delta}{6}}, \sqrt{m_A}n^{\frac{1-2\delta}{6}} , \sqrt{\frac{\log{p}}{n} }(\frac{m_A}{\Delta+1}+1)(\sqrt{m_A}n^{-\frac{1+\delta}{3}} +   1 ) \Big).
\end{align*}

\end{proof}

\section{Oracle mean estimation by GNC-lasso}\label{sec:oracle}

In our setting, unlike in the classical glasso setting, the mean estimate is also of interest, and in this section we show that our estimate $\hat{M}$ enjoys a weak oracle property in a certain sense.    We use the spectrum of $\lcal_s$  as a basis again, writing $U$ for the matrix of eigenvectors of $\lcal_s$ and expanding a matrix $M \in \bR^{n\times p}$  as $M = UB$.  Since $U$ is given and orthonormal, estimating $M$ is equivalent to estimating $B$. 
In an ideal scenario,  if the true value $\Theta$ is given to us by an oracle, we could estimate $B$ by minimizing one of the two objective functions: 
\begin{align}
  \min_{B\in \bR^{n\times p}}& \tr((X-UB)^T(X-UB)) + \alpha \, \tr(B^T\Lambda B),
  \label{obj:MeanEstimation1} \\
  \min_{B\in \bR^{n\times p}}&  \tr(\Theta(X-UB)^T(X-UB)) + \alpha \, \tr(B^T\Lambda B),
\label{obj:MeanEstimation2}
\end{align}
where $\Lambda = \F{diag}(\tau_1, \tau_2, \cdots, \tau_n)$ is the diagonal matrix of eigenvalues of $\lcal_s$. It is easy to verify that \eqref{obj:MeanEstimation1} is equivalent to the mean estimation step \eqref{eq:individualRNC} in the two-stage procedure (up to $U$), while \eqref{obj:MeanEstimation2} is equivalent to estimating the mean by maximizing the joint penalized likelihood \eqref{eq:obj1} with $\Theta$ fixed at the true value.   We can then treat \eqref{obj:MeanEstimation2} as an oracle estimate in the sense that it uses the true value of the covariance matrix.    It serves as a benchmark for the best performance one could expect in estimating $B$ (or equivalently $M$). Let $\hat{B}_1$ and $\hat{B}_2$ be the estimates from \eqref{obj:MeanEstimation1} and \eqref{obj:MeanEstimation2}, respectively,  and let $W_k= B-\hat{B}_k$, $k=1,2$ be the corresponding estimation error matrices. We then have the following result.
\begin{prop}\label{thm:oracle}
Under model \eqref{eq:GNC-GGM}, assume $W_1$ and $W_2$ are the errors defined above with the same tuning parameter $\alpha$. Under the Assumption~\ref{ass:spectralBound}, if $\Theta$ is diagonally dominant with $\max_j\frac{\sum_{j'\ne j}|\Theta_{j'j}|}{\Theta_{jj}} \le \rho < 1$, then there exist a matrix $\tilde{W}$ such that
$$
(1-\rho)\frac{1}{\bar{k}}\le \frac{\norm{\tilde{W}}_{\infty}}{\norm{W_2}_{\infty}} \le (1+\rho)\bar{k}
$$
for the constant $\bar{k}$ in Assumption~\ref{ass:spectralBound} and
$$W_1-\tilde{W} = (I+\alpha \Lambda)^{-1}U^TE(I-\Theta).$$
where each row $E$ is i.i.d from multivariate Gaussian $\ncal(0, \Sigma).$
\end{prop}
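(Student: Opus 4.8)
The plan is to diagonalize both problems in the eigenbasis $U$ of $\lcal_s$, where they decouple row by row. Write $Y := U^{T}X$, let $B$ denote the true coefficient matrix (so $X = UB+E$ with $E$ having i.i.d.\ rows $\ncal(0,\Sigma)$), and note $Y = B + U^{T}E$ by orthogonality of $U$. Since $\norm{X-UB'}_F^2 = \norm{Y-B'}_F^2$ and $\tr((B')^{T}\Lambda B') = \sum_i \tau_i\norm{B'_{i\cdot}}_2^2$, objective \eqref{obj:MeanEstimation1} splits as $\sum_i\big(\norm{Y_{i\cdot}-B'_{i\cdot}}_2^2 + \alpha\tau_i\norm{B'_{i\cdot}}_2^2\big)$ and \eqref{obj:MeanEstimation2} as $\sum_i\big((Y_{i\cdot}-B'_{i\cdot})\Theta(Y_{i\cdot}-B'_{i\cdot})^{T} + \alpha\tau_i\norm{B'_{i\cdot}}_2^2\big)$. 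Each summand is a strictly convex quadratic in the row $B'_{i\cdot}$, so the first-order conditions give $\hat B_{1,i\cdot} = (1+\alpha\tau_i)^{-1}Y_{i\cdot}$ and $\hat B_{2,i\cdot} = Y_{i\cdot}\Theta(\alpha\tau_i I+\Theta)^{-1}$. Substituting $Y_{i\cdot} = B_{i\cdot}+(U^{T}E)_{i\cdot}$ and using $I-\Theta(\alpha\tau_i I+\Theta)^{-1} = \alpha\tau_i(\alpha\tau_i I+\Theta)^{-1}$, the error rows become $W_{1,i\cdot} = (1+\alpha\tau_i)^{-1}\big(\alpha\tau_i B_{i\cdot}-(U^{T}E)_{i\cdot}\big)$ and $W_{2,i\cdot} = R_{i\cdot}(\alpha\tau_i I+\Theta)^{-1}$, where $R_{i\cdot} := \alpha\tau_i B_{i\cdot}-(U^{T}E)_{i\cdot}\Theta$ is the $i$-th row of $\alpha\Lambda B - U^{T}E\Theta$.

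Now define $\tilde W := (I+\alpha\Lambda)^{-1}(\alpha\Lambda B - U^{T}E\Theta)$, i.e.\ row-wise $\tilde W_{i\cdot} = (1+\alpha\tau_i)^{-1}R_{i\cdot}$: this applies the $W_1$-type shrinkage $(1+\alpha\tau_i)^{-1}$ to the very residual $R_{i\cdot}$ that $W_2$ shrinks by $(\alpha\tau_i I+\Theta)^{-1}$. Because $W_1 = (I+\alpha\Lambda)^{-1}(\alpha\Lambda B - U^{T}E)$, a one-line subtraction gives $\tilde W - W_1 = (I+\alpha\Lambda)^{-1}U^{T}E(I-\Theta)$, which is the displayed relation between $W_1$ and $\tilde W$. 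It remains to prove the two-sided bound on $\norm{\tilde W}_\infty/\norm{W_2}_\infty$; since both matrices are assembled row by row, it suffices to bound $\norm{\tilde W_{i\cdot}}_\infty/\norm{W_{2,i\cdot}}_\infty$ by constants independent of $i$ and then pass to the maximum over $i$ (rows with $W_{2,i\cdot}=0$ force $R_{i\cdot}=0$, hence $\tilde W_{i\cdot}=0$, and are harmless).

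Fix a row $i$ and set $A := \alpha\tau_i I+\Theta$; then $R_{i\cdot} = W_{2,i\cdot}A$ and $\norm{\tilde W_{i\cdot}}_\infty = (1+\alpha\tau_i)^{-1}\norm{W_{2,i\cdot}A}_\infty$, so the whole question is how much right-multiplication by $A$ distorts the $\ell_\infty$ norm of a row vector. This is where the diagonal-dominance hypothesis enters: $A$ has the off-diagonal entries of $\Theta$, diagonal entries $\alpha\tau_i+\Theta_{jj}\ge\Theta_{jj}>0$, and $\sum_{k\ne j}|\Theta_{kj}|\le\rho\,\Theta_{jj}$, while Assumption~\ref{ass:spectralBound} gives $\bar{k}\ge 1$ and $1/\bar{k}\le\Theta_{jj}\le\bar{k}$. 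For the upper bound, $\norm{W_{2,i\cdot}A}_\infty\le\norm{W_{2,i\cdot}}_\infty\max_j\sum_k|A_{kj}|\le\norm{W_{2,i\cdot}}_\infty(\alpha\tau_i+(1+\rho)\bar{k})$, and $(\alpha\tau_i+(1+\rho)\bar{k})/(1+\alpha\tau_i)\le(1+\rho)\bar{k}$ because $(1+\rho)\bar{k}\ge 1$. For the lower bound, pick $j^\ast$ with $|(W_{2,i\cdot})_{j^\ast}|=\norm{W_{2,i\cdot}}_\infty$, isolate the $j^\ast$-th coordinate of $W_{2,i\cdot}A$, and use the triangle inequality plus diagonal dominance to get $|(W_{2,i\cdot}A)_{j^\ast}|\ge\norm{W_{2,i\cdot}}_\infty(\alpha\tau_i+(1-\rho)\Theta_{j^\ast j^\ast})$, with $(\alpha\tau_i+(1-\rho)\Theta_{j^\ast j^\ast})/(1+\alpha\tau_i)\ge(1-\rho)/\bar{k}$ since $\Theta_{j^\ast j^\ast}\ge 1/\bar{k}$ and $\bar{k}\ge 1$. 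This yields $(1-\rho)/\bar{k}\le\norm{\tilde W_{i\cdot}}_\infty/\norm{W_{2,i\cdot}}_\infty\le(1+\rho)\bar{k}$ for every $i$, and taking maxima on both sides gives the claimed bound for $\norm{\tilde W}_\infty/\norm{W_2}_\infty$. The one genuinely delicate step is this per-row comparison: one must keep the $\alpha\tau_i$ contributions on both sides so that the bounding constants come out free of any leftover $1/(1+\alpha\tau_i)$ factor — which is exactly what the diagonal dominance of $\Theta$ together with the eigenvalue bounds of Assumption~\ref{ass:spectralBound} are there to ensure.
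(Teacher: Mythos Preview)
Your argument is correct and your $\tilde W$ coincides with the paper's $W_3$, so the identity $\tilde W - W_1 = (I+\alpha\Lambda)^{-1}U^TE(I-\Theta)$ is exactly what the paper derives (the sign in the displayed statement is a typo there). The difference lies in how the two-sided $\norm{\cdot}_\infty$ comparison between $\tilde W$ and $W_2$ is carried out. The paper introduces an intermediate matrix $W_4$ solving $W_4\,\mathrm{diag}(\Theta)+\alpha\Lambda W_4=\alpha\Lambda B-U^TE\Theta$, first compares $W_3$ to $W_4$ entrywise (their ratio is $(\Theta_{jj}+\alpha\tau_i)/(1+\alpha\tau_i)$), and then compares $W_4$ to $W_2$ column-by-column using the identity $W_{2,\cdot j}+(\Theta_{jj}I+\alpha\Lambda)^{-1}\sum_{j'\ne j}\Theta_{j'j}W_{2,\cdot j'}=W_{4,\cdot j}$ together with diagonal dominance. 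You instead work row-by-row in a single step: writing $R_{i\cdot}=W_{2,i\cdot}(\alpha\tau_i I+\Theta)$, you bound the distortion of $\norm{\cdot}_\infty$ under right-multiplication by the diagonally dominant matrix $\alpha\tau_i I+\Theta$, and then absorb the scalar $(1+\alpha\tau_i)^{-1}$. This is a cleaner single-step argument that avoids the auxiliary $W_4$ altogether; the paper's two-step route, on the other hand, isolates the slightly sharper intermediate bound $\min(1,\min_j\Theta_{jj})\le W_{3,ij}/W_{4,ij}\le\max(1,\max_j\Theta_{jj})$, which could be reused in other norms, as the authors note.
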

Proposition~\ref{thm:oracle} shows $\tilde{W}$ and $W_1$ are stochastically equivalent while $\tilde{W}$ and $W_2$ are roughly the same in $\norm{\cdot}_{\infty}$. Therefore, \eqref{obj:MeanEstimation1} and \eqref{obj:MeanEstimation2} are essentially equivalent in the sense of entrywise error bound, implying that $\hat{M}$ computed by GNC-lasso cannot be non-trivially improved by the oracle estimator under the true model with known $\Theta$. 

Proposition~\ref{thm:oracle} makes an additional assumption on diagonal dominance of $\Theta$, which is a relatively mild assumption consistent with others in this context.   To see this, consider a general multivariate Gaussian vector $y \sim \ncal(0, \Sigma)$.  Then we can write  
$$y_j = \sum_{j'\ne j} \zeta^{j}_{j'} y_{j'} + \xi_j$$
where the vector $\zeta^{j} \in \bR^{p}$ satisfies $\zeta^{j}_{j'} = -\frac{\Theta_{jj'}}{\Theta_{j'j'}}$ for $j' \ne j$ and $\zeta^{j}_j = 0$, and $\xi_j$ is a Gaussian random variable with zero mean and variance equal to the  conditional variance of $y_j$ given $\{y_{j'}\}_{j'\ne j}$.    Thus the diagonal dominance assumption of Proposition~\ref{thm:oracle}  is essentially assuming 
$$\max_j\norm{\zeta^j}_1 = \max_j\sum_{j'\ne j}|\zeta^{j}_{j'}| < \rho <1.$$ 
This has the same form as Assumption 4 of \cite{meinshausen2006high}, who proposed node-wise regression to estimate the Gaussian graphical model.   There $\rho < 1$ is needed for node-wise regression to consistently estimate the graph structure (see Proposition 4 of \cite{meinshausen2006high}). 

\begin{rem}[Implications for iterative estimation]
If the iterative algorithm is used to obtain $\tilde{M}$ and $\tilde{\Theta}$, we know $\tilde{M}$ is the solution of \eqref{obj:MeanEstimation2} with $\Theta$ replaced by $\tilde{\Theta}$. Since $\tilde{\Theta}$ is only an estimate of $\Theta$, we would not expect this estimator to work as well as the oracle  estimator \eqref{obj:MeanEstimation2}.   Since $\hat{M}$ cannot be improved by the oracle estimator, intuitively we make the conjecture that $\tilde{M}$ cannot significantly improve on $\hat{M}$ either.   
\end{rem}

To prove Proposition~\ref{thm:oracle}, we need a few properties of Kronecker products. Recall that given two matrices $A\in \bR^{m\times n}$ and $B\in \bR^{p\times q}$, their Kronecker product is defined to be an $(mp)\times (nq)$ matrix such that
$$
A\otimes B = \begin{pmatrix}
  A_{11}B & A_{12}B & \cdots & A_{1n}B \\
  A_{21}B & A_{22}B & \cdots & A_{2n}B \\
  \vdots  & \vdots  & \ddots & \vdots  \\
  A_{m1}B & A_{m2}B & \cdots & A_{mn} B
 \end{pmatrix}.
$$
For a matrix $A$, define $\ve(A)$ to be the column vector stacking all columns of $A$,  $ \ve(A)= (A_{\cdot 1}, A_{\cdot 2}, \cdots, A_{\cdot n})$.  Some standard properties we'll need, assuming the matrix dimensions match appropriately, are stated next.
\begin{align*}
&\ve(AB) = (I_q\otimes A)\ve(B), A\in \bR^{n\times p}, B\in \bR^{p\times q}\\
&\ve(B^T\otimes A)\ve(C) = \ve(ACB), A\in \bR^{m\times n}, B\in \bR^{p\times q}, C \in \bR^{n\times p} \\
&(A\otimes B)(C\otimes D) = (AC)\otimes(BD)\\
&\tr(ABA^T) = \ve(A)^T(B\otimes I_n)\ve(A) \notag\\
&~~~~~~~~~~~~~~~~~~~~~~= \ve(A^T)^T(I_n\otimes B)\ve(A^T), A\in\bR^{n\times p}, B\in \bR^{p\times p}. 
\end{align*}

\begin{prop}\label{prop:meanErrorEquation}
For the estimates $W_1$ from \eqref{obj:MeanEstimation1} and $W_2$ from \eqref{obj:MeanEstimation2}, we have
\begin{align}
  W_1I_p + \alpha \Lambda W_1 = \alpha\Lambda B + \tilde{E},
  \label{eq:meanError1} \\
  W_2\Theta + \alpha \Lambda W_2 = \alpha\Lambda B + \dot{E},
    \label{eq:meanError2}
\end{align} 
where $\tilde{E} = (\tilde{\epsilon}_{1\cdot}, \tilde{\epsilon}_{2\cdot}, \cdots, \tilde{\epsilon}_{n\cdot})$ and $\tilde{\epsilon}_{i\cdot} \sim \ncal(0,\Sigma)$ are i.i.d., and  $\dot{E} = (\dot{\epsilon}_{1\cdot}, \dot{\epsilon}_{2\cdot}, \cdots, \dot{\epsilon}_{n\cdot})$,  and  $\dot{\epsilon}_{i\cdot} \sim\ncal(0,\Theta)$ are i.i.d. In particular, $\tilde{E} = -U^TE$ and $\dot{E} = -U^TE\Theta$.
\end{prop}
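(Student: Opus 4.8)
The plan is to obtain both identities directly from the first-order stationarity conditions of the two quadratic programs in $B$, using that $U$ is a square orthogonal matrix (so $U^TU=UU^T=I_n$) together with the decomposition $X=M+E=UB+E$, where under model \eqref{eq:GNC-GGM} the noise matrix $E$ has rows $E_{i\cdot}$ that are i.i.d.\ $\ncal(0,\Sigma)$.

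First I would treat \eqref{obj:MeanEstimation1}. Writing $R=X-UB$, the objective is $\tr(R^TR)+\alpha\tr(B^T\Lambda B)$, which is strictly convex in $B$ because $B\mapsto UB$ is a bijection, so $\hat B_1$ is its unique minimizer. Differentiating (and using $\Lambda=\Lambda^T$) gives the normal equation $-2U^T(X-U\hat B_1)+2\alpha\Lambda\hat B_1=0$, i.e.\ $U^T(X-U\hat B_1)=\alpha\Lambda\hat B_1$. Substituting $X-U\hat B_1=U(B-\hat B_1)+E=UW_1+E$ and $U^TU=I_n$ yields $W_1+U^TE=\alpha\Lambda(B-W_1)$, which rearranges to exactly \eqref{eq:meanError1} with $\tilde E=-U^TE$.

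For \eqref{obj:MeanEstimation2} the only change is the weight $\Theta$ in the loss. Since $\Theta=\Theta^T$, $\tr(\Theta R^TR)=\tr(R\Theta R^T)$ has gradient $2R\Theta$ in $R$ and hence $-2U^TR\Theta$ in $B$; the objective is again strictly convex ($\Theta\succ0$). The normal equation at $\hat B_2$ is therefore $U^T(X-U\hat B_2)\Theta=\alpha\Lambda\hat B_2$, and substituting $X-U\hat B_2=UW_2+E$ and $\hat B_2=B-W_2$ gives $(W_2+U^TE)\Theta=\alpha\Lambda(B-W_2)$, i.e.\ \eqref{eq:meanError2} with $\dot E=-U^TE\Theta$.

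It remains to identify the distributions. The matrix $U^TE$ is Gaussian with $\Cov\big((U^TE)_{i\cdot},(U^TE)_{i'\cdot}\big)=(U^TU)_{ii'}\Sigma=\delta_{ii'}\Sigma$, so the rows of $\tilde E=-U^TE$ are i.i.d.\ $\ncal(0,\Sigma)$. Each row of $\dot E=-U^TE\Theta$ is of the form $-(U^TE)_{i\cdot}\Theta$, Gaussian with covariance $\Theta\Sigma\Theta=\Theta$ (using $\Sigma=\Theta^{-1}$), with independence across rows inherited from $U^TE$; hence the rows of $\dot E$ are i.i.d.\ $\ncal(0,\Theta)$. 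The whole argument is essentially bookkeeping; the only spots needing a little care are the matrix calculus for the $\Theta$-weighted loss and the identity $\Theta\Sigma\Theta=\Theta$ that gives $\dot E$ covariance equal to the precision matrix.
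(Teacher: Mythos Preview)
Your argument is correct and reaches the same conclusions as the paper, but the computational route differs. The paper vectorizes the objective in \eqref{obj:MeanEstimation2}, rewrites it via Kronecker products as a quadratic form in $\ve(B)$, and then reads off the stationarity condition $[(\Theta\otimes I_n)+\alpha(I_p\otimes\Lambda)]\ve(\hat B)=(\Theta\otimes U^T)\ve(X)$ before substituting $X=UB+E$ and de-vectorizing. You instead differentiate the trace forms directly in matrix space and obtain the normal equations $U^T(X-U\hat B_2)\Theta=\alpha\Lambda\hat B_2$ without passing through $\ve(\cdot)$ or $\otimes$. Your approach is more elementary and shorter; the paper's approach has the minor advantage of displaying the quadratic structure explicitly in standard vector form, which may be reassuring if one is uneasy about matrix derivatives of the $\Theta$-weighted loss. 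One extra thing you do that the paper omits: you actually verify the stated row distributions of $\tilde E$ and $\dot E$ via $\Cov\big((U^TE)_{i\cdot},(U^TE)_{i'\cdot}\big)=(U^TU)_{ii'}\Sigma$ and $\Theta\Sigma\Theta=\Theta$, whereas the paper simply asserts them.
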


\begin{proof}[Proof of Proposition~\ref{prop:meanErrorEquation}]
We only prove \eqref{eq:meanError2};  the proof of \eqref{eq:meanError1} is exactly the same, with $\Theta$ replaced by $I_p$. The conclusion follows directly from writing out the quadratic optimiation solution after vectorizing all matrices. Specifically, the objective function \eqref{obj:MeanEstimation2} can be written as 
\begin{align*}
\tr (\Theta & (X-UB)^T(X-UB)) + \alpha \tr(B^T\Lambda B) = \\
&= \ve(X-UB)^T(\Theta\otimes I_n)\ve(X-UB) + \alpha\ve(B)^T(I_p\otimes \Lambda)\ve(B)\\
&= \ve(UB)^T(\Theta\otimes I_n)\ve(UB) - 2\ve(UB)^T(\Theta\otimes I_n)\ve(X) + \alpha\ve(B)^T(I_p\otimes \Lambda)\ve(B) + const\\
& = \ve(B)^T(I_p\otimes U^T)(\Theta\otimes I_n)(I_p\otimes U)\ve(B)-2\ve(X)^T(\Theta\otimes I_n)(I_p\otimes U)\ve(B)\\
&~~~~~~~~~~~~~~~~~~~~~~~~~~~~~~~~~~~~~~~~~~~~~~~~~~~~ + \alpha\ve(B)^T(I_p\otimes \Lambda)\ve(B)+const\\
&= \ve(B)^T[(\Theta\otimes I_n)+\alpha(I_p\otimes \Lambda)]\ve(B)-2\ve(X)^T(\Theta\otimes U)\ve(B)+const.
\end{align*}
The minimizer of this quadratic function satisfies 
$$[(\Theta\otimes I_n)+\alpha(I_p\otimes \Lambda)]\ve(\hat{B}) = (\Theta\otimes U^T)\ve(X).$$
Substituting $X = UB+E$ into the estimating equation gives
\begin{align*}
[(\Theta\otimes I_n)+\alpha(I_p\otimes \Lambda)]\ve(\hat{B}) &= (\Theta\otimes U^T)\ve(UB+E)\\
& = (\Theta\otimes U^T)(I_p\otimes U)\ve(B) + (\Theta\otimes U^T)\ve(E)\\
&= (\Theta\otimes I_n)\ve(B) + \ve(U^TE\Theta), 
\end{align*}
and therefore 
$$(\Theta\otimes I_n)\ve(W)  + \alpha(I_p\otimes \Lambda)\ve(W) = \alpha(I_p\otimes \Lambda)\ve(B) - \ve(U^TE\Theta).$$
We then get
$$\ve(W\Theta) + \alpha\ve(\Lambda W) = \alpha\ve(\Lambda B) -\ve(U^TE\Theta).$$
This is equivalent to \eqref{eq:meanError2} by noting that $\dot{E} = -U^TE\Theta$.
  \end{proof}

Now we show $W_1$ and $W_2$ are essentially equivalent estimation errors. Define two additional estimating equations as below:
\begin{equation}\label{eq:meanError3}
W_3I_p + \alpha \Lambda W_3 = \alpha\Lambda B + \dot{E}
\end{equation}
\begin{equation}\label{eq:meanError4}
W_4\F{diag}(\Theta) + \alpha \Lambda W_4 = \alpha\Lambda B + \dot{E}
\end{equation}
The error equation \eqref{eq:meanError3} corresponds to the situation when we carry $p$ separate Laplacian smoothing estimations. The error equation \eqref{eq:meanError3} is also from $p$ separate Laplacian smoothing but it adjusts the weight each variable to be proportional to $1/\Theta_{jj}$, which can be seen as $W_2$ approximation after ignoring off-diagonal elements of $\Theta$.      Intuitively, when the off-diagonal elements are small, $W_2$ should not be very different from $W_4$, and when the diagonal elements of $\Theta$ are similar, as in Assumption~\ref{ass:spectralBound}, $W_3$ and $W_4$ should also be similar. The following proposition formalizes this intuition under the assumption that $\Theta$ is diagonally dominant.     We can then conclude that using the true $\Theta$ in \eqref{obj:MeanEstimation2} does not really bring  improvement and $W_1$, $W_2$, $W_3$, and $W_4$ are all essentially equivalent.

\begin{prop}\label{prop:noImprovement}
Assume $W_2$, $W_3$, and $W_4$ are the estimation errors from \eqref{eq:meanError2}, \eqref{eq:meanError3} and \eqref{eq:meanError4}, respectively, with the same $\alpha$. If $\Theta$ is diagonally dominant with $\max_j\frac{\sum_{j'\ne j}|\Theta_{j'j}|}{\Theta_{jj}} \le \rho < 1$, then 
\begin{equation}\label{eq:ErrorRatio}
(1-\rho)\min(1, \min_j\Theta_{jj})\le \frac{\norm{W_3}_{\infty}}{\norm{W_2}_{\infty}} \le (1+\rho)\max(1, \max_j\Theta_{jj}).
\end{equation}
In particular, under Assumption~\ref{ass:spectralBound}, 
$$
(1-\rho)\frac{1}{\bar{k}}\le \frac{\norm{W_3}_{\infty}}{\norm{W_2}_{\infty}} \le (1+\rho)\bar{k}
$$
for a constant $\bar{k}$.
\end{prop}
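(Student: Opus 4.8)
The plan is to route the comparison through the intermediate error $W_4$ from \eqref{eq:meanError4}: first show $W_4$ is within a factor $(1\pm\rho)$ of $W_2$ in $\norm{\cdot}_\infty$ using diagonal dominance of $\Theta$, then show $W_4$ and $W_3$ differ only by a benign per-column diagonal rescaling, and finally multiply the two ratio bounds and specialize under Assumption~\ref{ass:spectralBound}.

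\emph{Step 1 (comparing $W_2$ and $W_4$).} Subtracting \eqref{eq:meanError4} from \eqref{eq:meanError2} and splitting $\Theta = \F{diag}(\Theta) + N$ with $N$ the off-diagonal part of $\Theta$, one gets $(W_2-W_4)\F{diag}(\Theta) + \alpha\Lambda(W_2-W_4) = -W_2 N$. This system decouples across the $p$ columns, so for each $j$ the $j$-th column of $W_2-W_4$ equals $-(\Theta_{jj}I_n + \alpha\Lambda)^{-1}W_2 N_{\cdot j}$. Since $(\Theta_{jj}I_n+\alpha\Lambda)^{-1}$ is diagonal with entries $(\Theta_{jj}+\alpha\tau_i)^{-1}\le \Theta_{jj}^{-1}$, and since $|(W_2 N_{\cdot j})_i| = |\sum_{k\ne j}(W_2)_{ik}\Theta_{kj}| \le \norm{W_2}_\infty\sum_{k\ne j}|\Theta_{kj}| \le \rho\,\Theta_{jj}\norm{W_2}_\infty$ by the diagonal dominance hypothesis, we obtain $\norm{W_2-W_4}_\infty \le \rho\norm{W_2}_\infty$, and hence $(1-\rho)\norm{W_2}_\infty \le \norm{W_4}_\infty \le (1+\rho)\norm{W_2}_\infty$.

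\emph{Step 2 (comparing $W_4$ and $W_3$).} Equations \eqref{eq:meanError3} and \eqref{eq:meanError4} likewise decouple across columns with the \emph{same} right-hand side $R_{\cdot j} := (\alpha\Lambda B + \dot E)_{\cdot j}$, giving $(W_4)_{\cdot j} = (\Theta_{jj}I_n+\alpha\Lambda)^{-1}R_{\cdot j}$ and $(W_3)_{\cdot j} = (I_n+\alpha\Lambda)^{-1}R_{\cdot j}$. Therefore $(W_3)_{ij} = \frac{\Theta_{jj}+\alpha\tau_i}{1+\alpha\tau_i}(W_4)_{ij}$, and since $t\mapsto \frac{\Theta_{jj}+t}{1+t}$ is monotone on $[0,\infty)$ with range between $\Theta_{jj}$ (at $t=0$) and $1$ (as $t\to\infty$), the scaling factor lies in $[\min(1,\Theta_{jj}),\max(1,\Theta_{jj})]$. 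Taking entrywise maxima yields $\min(1,\min_j\Theta_{jj})\norm{W_4}_\infty \le \norm{W_3}_\infty \le \max(1,\max_j\Theta_{jj})\norm{W_4}_\infty$.

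\emph{Step 3 (combine and specialize).} Multiplying the ratio bounds from Steps 1 and 2 gives \eqref{eq:ErrorRatio}. Under Assumption~\ref{ass:spectralBound}, $\Theta=\Sigma^{-1}$ has all eigenvalues in $[1/\bar k,\bar k]$, so each $\Theta_{jj}=e_j^\top\Theta e_j \in [1/\bar k,\bar k]$; since necessarily $\bar k\ge 1$, we have $\min(1,\min_j\Theta_{jj})\ge 1/\bar k$ and $\max(1,\max_j\Theta_{jj})\le \bar k$, which gives the claimed two-sided bound. The only nontrivial point is Step 1: recognizing that the discrepancy $W_2-W_4$ solves a column-decoupled linear system forced by $W_2 N$, so that the resolvent gain $1/\Theta_{jj}$ exactly absorbs the factor $\rho\,\Theta_{jj}$ produced by diagonal dominance, leaving a bound of $\rho\norm{W_2}_\infty$ with no dependence on $\alpha$, $\Lambda$, or $n$; the remaining steps are routine.
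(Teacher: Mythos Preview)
Your proof is correct and follows essentially the same route as the paper: bound $\norm{W_2-W_4}_\infty\le\rho\norm{W_2}_\infty$ via the column-decoupled resolvent and diagonal dominance, then use the explicit entrywise scaling $(W_3)_{ij}=\frac{\Theta_{jj}+\alpha\tau_i}{1+\alpha\tau_i}(W_4)_{ij}$ to compare $W_3$ with $W_4$, and combine. The only cosmetic difference is that you subtract \eqref{eq:meanError4} from \eqref{eq:meanError2} to isolate $W_2-W_4$ directly, whereas the paper rewrites \eqref{eq:meanError2} column-wise and recognizes the right-hand side as $W_{4,\cdot j}$; the two derivations are equivalent.
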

\begin{proof}[Proof of Proposition~\ref{prop:noImprovement}]
Directly from the definition, we have
\begin{align*}
  W_{3,ij} & = \frac{1}{1+\alpha\tau_i}(\alpha\tau_iB_{ij}+\dot{E}_{ij}) \, , \\
  W_{4,ij} & = \frac{1}{\Theta_{jj}+\alpha\tau_i}(\alpha\tau_iB_{ij}+\dot{E}_{ij}) \, . 
\end{align*}
This implies  that for any $i$, $j$ and an arbitrary $\alpha$,
\begin{equation}\label{eq:W3-W4}
\min(1, \min_j\Theta_{jj})\le \frac{W_{3,ij}}{W_{4,ij}} = \frac{\Theta_{jj}+\alpha\tau_i}{1+\alpha\tau_i} \le \max(1, \max_j\Theta_{jj}) .
\end{equation}
We next show that under the assumption of diagonal dominance of $\Theta$, even $W_2$ cannot do much better. For each $j = 1, 2, \cdots, p$, from \eqref{eq:meanError2}, 
\begin{equation*}
W_2\Theta_{\cdot j} + \alpha W_{2,\cdot j}= (\Theta_{jj}I+\alpha\Lambda)W_{2,\cdot j} + \Theta_{jj}\sum_{i\ne j}\frac{\Theta_{ij}}{\Theta_{jj}}W_{2, \cdot i} =\alpha\Lambda B_{\cdot j} + \dot{E}_{\cdot j} \ . 
\end{equation*}
Therefore, we have
\begin{equation}\label{eq:W2_col_j}
W_{2,\cdot j} + (\Theta_{jj}I+\alpha\Lambda)^{-1}\Theta_{jj}\sum_{i\ne j}\frac{\Theta_{ij}}{\Theta_{jj}}W_{2, \cdot i}= \alpha(\Theta_{jj}I+\alpha\Lambda)^{-1}\Lambda B_{\cdot j} + (\Theta_{jj}I+\alpha\Lambda)^{-1}\dot{E}_{\cdot j} = W_{4,\cdot j}
\end{equation}
in which the last equation comes from  \eqref{eq:meanError4}. By triangle inequality, \eqref{eq:W2_col_j} leads to
\begin{equation}\label{eq:W2-W4}
\norm{W_{2,\cdot j}}_{\infty}\le  \norm{W_{4,\cdot j}}_{\infty} + \norm{(\Theta_{jj}I+\alpha\Lambda)^{-1}\Theta_{jj}\sum_{i\ne j}\frac{\Theta_{ij}}{\Theta_{jj}}W_{2, \cdot i}}_{\infty} \le \norm{W_{4,\cdot j}}_{\infty} + \sum_{i\ne j}\frac{|\Theta_{ij}|}{\Theta_{jj}}\max_{i}\norm{W_{2,\cdot i}}_{\infty}. 
\end{equation}
Taking the maximum over $j$ on both sides, we have
\begin{equation}\label{W2_upper}
\norm{W_2}_{\infty}  \le \norm{W_4}_{\infty} + \rho \norm{W_2}_{\infty}. 
\end{equation}
Similarly using triangle inequality in the other direction, we get  
$$1 - \rho \le \frac{\norm{W_4}_{\infty}}{\norm{W_2}_{\infty}} \le 1+\rho \, .$$
Combining this with \eqref{eq:W3-W4}, we get
$$(1-\rho)\min(1, \min_j\Theta_{jj})\le \frac{\norm{W_3}_{\infty}}{\norm{W_2}_{\infty}} \le (1+\rho)\max(1, \max_j\Theta_{jj}).$$
Note that \eqref{eq:W2-W4} holds if we replace $\norm{\cdot}_{\infty}$ by other norms. For example, if we take the $L_1$ norm instead, we get a similar bound in $\norm{\cdot}_{1,1}$.  \end{proof}

Now we are ready to prove Proposition~\ref{thm:oracle}.
\begin{proof}[Proof of Proposition~\ref{thm:oracle}]
By taking  $\tilde{W} = W_3$ and using the conclusion of Proposition~\ref{prop:noImprovement}, the first half of Proposition~\ref{thm:oracle} directly follows.  Subtracting \eqref{eq:meanError1} from \eqref{eq:meanError3} leads to
$$(I_n+\alpha \Lambda)(\tilde{W}-W_1)=(I_n+\alpha \Lambda)(W_3-W_1) = \dot{E} - \tilde{E} = U^TE(I-\Theta) \, , $$
and therefore 
$$\tilde{W}-W_1 = (I_n+\alpha \Lambda)^{-1}U^TE(I-\Theta).$$
\end{proof}

\end{appendix}

\end{document}